\newcommand{\Nystrom}{\text{Nystr\"om}}
\newcommand{\ti}[1]{\widetilde{#1}}
\def\argmax{\operatornamewithlimits{arg\,max}}
\newtheorem*{theorem*}{Theorem}
\providecommand{\nor}[1]{\left\lVert {#1} \right\rVert}
\providecommand{\scal}[2]{\left\langle{#1},{#2}\right\rangle}
\newcommand{\R}{\mathbb R}
\newcommand{\N}{\mathbb N}
\newcommand{\hh}{\mathcal H}
\newcommand{\la}{\lambda}
\newcommand{\X}{X}
\newcommand{\eps}{\varepsilon}
\def\BState{\State\hskip-\ALG@thistlm} 
\newtheorem{definition}{Definition}
\newtheorem{lemma}{Lemma}
\newtheorem{thm}{Theorem}
\newtheorem{asm}{Assumption}
\newtheorem{prop}{Proposition}
\newtheorem{remark}{Remark}
\title{\bf Ada-BKB: Scalable Gaussian Process Optimization on Continuous Domains by Adaptive Discretization}
\date{}
\author{%
	{\bf Marco Rando}\hfill \hspace{16.5em}{\small\texttt{marco.rando@edu.unige.it}}\\
	{\small \it MaLGa - DIBRIS, University of Genova, Italy \hfill \hspace{12em}}\\ 
	\and
	{\bf Luigi Carratino} \hfill \hspace{12em}\small \texttt{luigi.carratino@dibris.unige.it}\\
	{\small \it MaLGa - DIBRIS, University of Genova, Italy \hfill \hspace{12em}}\\
	\and
	{\bf Silvia Villa} \hfill \hspace{18.7em}{\small\texttt{silvia.villa@unige.it}}\\
	{\small \it MaLGa - DIMA, University of Genova, Italy \hfill \hspace{12em}}\\   
	\and
	{\bf Lorenzo Rosasco} \hfill \hspace{14.5em}{\small\texttt{lorenzo.rosasco@unige.it}}\\
	{\small \it MaLGa - DIBRIS, University of Genova, Italy  \hfill \hspace{12em}}\\
	{\small \it Istituto Italiano di Tecnologia, Genova, Italy \hfill \hspace{12em}}\\
	{\small \it CBMM - MIT, Cambridge, MA, USA \hfill \hspace{12em}}
}
\begin{document}
\maketitle

\begin{abstract}
Gaussian process optimization is a successful class of algorithms(e.g. GP-UCB) to optimize a black-box function through sequential evaluations. However, for functions with continuous domains, Gaussian process optimization has to rely on 
either a fixed discretization of the space, or the solution of a non-convex optimization subproblem at each evaluation. The first approach can negatively affect performance, 
while the second approach requires a heavy computational burden. A third option, only recently theoretically studied, is to adaptively discretize the function domain. Even though this approach avoids 
the extra non-convex optimization costs, the overall computational complexity is still prohibitive. An algorithm such as GP-UCB has a runtime of $O(T^4)$, where $T$ 
is the number of iterations. In this paper, we introduce Ada-BKB (Adaptive Budgeted Kernelized Bandit), a no-regret Gaussian process optimization 
algorithm for functions on continuous domains, that provably runs in $O(T^2 d_\text{eff}^2)$, where $d_\text{eff}$ is the effective dimension of the explored space, 
and which is typically much smaller than $T$. We corroborate our theoretical findings with experiments on synthetic non-convex functions and on the real-world problem of hyper-parameter optimization, confirming the good practical performances of the proposed approach.
\end{abstract}

\section{INTRODUCTION}

The maximization of a function given only finite, possibly noisy, evaluations is a key and common 
problem in applied sciences and engineering. Approaches to this problem range from 
genetic algorithms~\citep{whitley1994genetic} to zero-th order methods~\citep{nesterov2017random}. Here, we 
take the perspective of bandit optimization, where indeed a number of approaches have been proposed and studied: for example Thompson sampling, or the upper confidence bound algorithm (UCB), see~\citep{lattimore2020bandit} and references therein.
Relevant to our study is a whole line of work developing the basic UCB idea, considering in particular kernels (kernel-UCB)~\citep{kung_2014} or Gaussian processes (GP-UCB)~\citep{rasmussen2003gaussian}. 
In the basic UCB algorithm, the function domain is typically assumed to be discrete (or discretized) and 
an upper bound to the function of interest is iteratively computed and maximized. This approach is sound and amenable to a rigorous theoretical analysis in terms of regret bounds. 
Considering Gaussian processes/kernels, it is possible to extend the applicability of UCB while preserving the nice theoretical properties~\citep{kung_2014,rasmussen2003gaussian}. However, this is at the expenses of computational efficiency. Indeed, 
a number of recent works has focused on scaling UCB with kernels/GP by taking advantage of randomized approximations based on random features~\citep{mutny2019efficient} and Nystrom/inducing points methods~\citep{calandriello2020near,calandriello2019gaussian}, or by performing a smart candidate selection strategy~\citep{calandriello2022scaling}. These studied solutions show that improved efficiency can be achieved without degrading the regrets guarantees. The other line of work relevant to our study focuses on how to tackle functions defined on continuous domains. In particular, 
we consider optimistic optimization, introduced in~\citep{munos2011optimistic} and developed in a number of
subsequent works, see~\citep{valko2013stochastic, kleinberg2013bandits, bubeck2011x, wang2014bayesian, shekhar2018gaussian, salgia2020computationally,kleinberg2008multi}. The basic idea is to iteratively build discretizations in a coarse to fine manner. 
This approach, related to Monte Carlo tree search, can be analyzed theoretically to derive rigorous regrets guarantees~\citep{munos2014bandits}.
In this paper we propose and analyze a novel and efficient approach called Ada-BKB, that combines ideas from optimistic optimization and UCB with kernels. 
A first attempt in this direction has been done in~\citep{shekhar2018gaussian,salgia2020computationally}. However, the corresponding computational costs are 
prohibitive since exact (kernel) UCB computations are performed. So, we take advantage of the latest advances on scalable kernel UCB and adapt optimistic optimization techniques to derive a provably accurate and efficient algorithm. 
Our main theoretical contribution is the derivation of sharp regret guarantees, that shows that Ada-BKB is as accurate as an exact UCB with kernels, with much smaller computational costs. 
We provided an efficient implementation of Ada-BKB which uses techniques such as pruning and early stopping.
We investigate empirically its performance both in numerical simulations and in a hyper-parameter tuning task. 
The obtained results confirm that Ada-BKB is a scalable and accurate algorithm for efficient bandit optimization on continuous domains. The rest of the paper is organized as follows. In Section~\ref{setting}, we describe the problem setting and in Section~\ref{algo}, we describe the algorithm we propose. 
In Section~\ref{theory} and~\ref{sec:experiments} we present our empirical and theoretical results. In Section~\ref{sec:conclusion} we discuss some final remarks. 
\section{PROBLEM SETUP}\label{setting}

Let $(X,d)$ be a compact metric space, for example $X=[0,1]^p \subseteq \mathbb{R}^p$. Let $f:X \rightarrow \mathbb{R}$ be a continuous function and consider the problem of finding
\begin{equation*}
	x^{*} \in \argmax\limits_{x \in X} f(x).    
\end{equation*}
We consider a setting where only noisy function evaluations $y_t = f(x_t) + \epsilon_t$ are accessible. 
Here, $\epsilon_t$ is $\xi$-sub Gaussian noise.
This problem is relevant in black-box or zero-th order optimization~\citep{nesterov}, as well as in muti-armed bandits~\citep{lattimore2020bandit}. 
In this latter context, the function $f$ is also called the \textit{reward function} and $X$ the \textit{arms set}. 
Given $T \in \N$, the goal is to derive a sequence $x_1, \cdots, x_T\in X$, with small cumulative regret,
\begin{equation*}
	R_T = \sum_{t=1}^T (f(x^{*}) - f(x_t)).
\end{equation*}
This can be contrasted to considering the simple regret $S_T = f(x^*) - f(x_T)$ as typically done in optimization. The regret considers the errors accumulated by the whole sequence rather than just the last iteration. 
The sequence $(x_t)_t$ is computed iteratively. At each iteration $t$, an element $x_t \in X$ is selected and a corresponding noisy function value $y_t$ made available. 
The selection strategy, also called a policy, is based on all the function values 
obtained in previous iterations.
In the following we assume $f$ to belong to a 
reproducing kernel Hilbert space (RKHS). 
The latter is a Hilbert space of $(\hh, \scal{\cdot}{\cdot}, \|\cdot\|)$ of functions from $X$ to $\R$, with associated a function $k:X\times X\to \R$, called reproducing kernel or kernel,  such that for all $x\in X$ and $f'\in \hh$, 
\begin{equation*}
	k(x, \cdot)\in \hh, \qquad\text{and}\qquad f'(x)=\scal{f'}{k(x,\cdot)}.  
\end{equation*}
We assume that $k(x,x) \leq \kappa^2$ for all $x\in \X$ and $\kappa \geq 1$. 
We let $d_k : \X \times \X \rightarrow [0,\infty)$ be the distance in the RKHS $\hh$ defined as 
$d_k(x,x')= \nor{k(x, \cdot)-k(x',\cdot)}= \sqrt{k(x,x)+k(x',x')-2k(x,x')} $ with $x,x'\in X$. 
Further, we consider kernels for which the following assumptions hold.

\begin{asm}\label{asm:smoothd_k}
	There exists a non-decresing function $g:[0,\infty)\to [0,\infty)$ such that $g(0) = 0$ and for all $x,x'\in X$ 
	\begin{equation}\label{eq:smoothd_k}
		d_k(x,x') \leq g( d(x,x')). 
	\end{equation}
\end{asm}

\begin{asm}\label{asm:bound_g}
	Let $g$ be the non-decreasing function indicated in Assumption~\ref{asm:smoothd_k}. There exist $\delta_k > 0$, $\alpha \in (0,1]$, and $C^\prime_k, C_k > 0$ such that
	\begin{equation}
		(\forall r \leq \delta_k) \qquad  C_k r^{\alpha} \leq g(r) \leq C^\prime_k r^{\alpha}
	\end{equation}
\end{asm}
It is easy to see that, the above condition is satisfied, for example, for the Gaussian kernel $k(x_1, x_2) = e^{-\frac{\nor{x_1 - x_2}^2}{l}}$ with  $\alpha = 1$ and suitable constants $\delta_k,C_k,C_k'$, for $g(r) = \sqrt{\frac{2}{l}} r$. 

\section{ALGORITHM}\label{algo}
The new algorithm we propose combines ideas from AdaGP-UCB~\citep{shekhar2018gaussian} and BKB~\citep{calandriello2019gaussian} 
(a scalable implementation of GP-UCB/KernelUCB~\citep{srinivas2009gaussian,kernelUCB}). 
We begin recalling the ideas behind GP-UCB and BKB.

\paragraph{From kernel bandits to budgeted kernel bandits.} 
The basic idea in GP-UCB/KernelUCB is to derive
an upper estimate $f_t$ of $f$ at each step,
and then select the new point $x_{t+1}$ maximizing such an estimate.
The upper estimate is defined using a reproducing kernel $k : \X \times\X \rightarrow \R$. 
Let $(x_1,y_1), \dots, (x_t,y_t)$ be the sequence of evaluations points and noisy evaluation 
values up-to the $t$-th iteration. 
Let $K_t\in \R^{t \times t}$ be the matrix with entries $(K_t)_{ij}=k(x_i,x_j)$, 
for $i,j=1, \dots,t$, denote $k_t(x)= (k(x,x_1), \dots, k(x, x_t))\in\R^t$ and $Y_t=(y_1, \dots, y_t)\in\R^t$.
For $\la>0$, let
\begin{equation}\label{eqn:exact_mu_sig}
	\begin{aligned}
		\mu_t(x) &= k_t(x)^\top (K_t+\la I)^{-1}Y_t \\
		\sigma_t(x)^2 &= k(x,x)-k_t(x)^\top (K_t+\la I)^{-1}k_t(x).
	\end{aligned}   
\end{equation}
For $\beta_t>0$, the upper estimate of $f$, known as upper confidence bound (UCB), is defined as
\begin{equation*}
	f_t(x)= \mu_t(x)+\beta_t \sigma_t(x).  
\end{equation*}
Note that $\la$ and $\beta_t$ are parameters that need to be specified.
The quantities $\mu_t, \sigma_t$ can be seen as a kernel ridge regression estimate and a suitable {\em confidence bound}, respectively. Also, they have a natural Bayesian interpretation in terms of mean and variance of the posterior induced by a Gaussian Process, hence the name GP-UCB~\citep{srinivas2009gaussian}. 
KernelUCB/GP-UCB have favorable regret guarantees \citep{kernelUCB,srinivas2012gaussian}, but 
computational requirements that prevent scaling to large data-sets. 
BKB~\citep{calandriello2019gaussian} tackles this issues considering a \Nystrom{}-based 
approximation~\citep{drineas2005nystrom}. 
Let $X_t = (x_1, \dots, x_t)\in \R^{t \times p}$ be the collection of evaluation points up-to 
iteration $t$ and $S_t \subseteq X_t$ a subset of cardinality $m \leq t$. 
Let $K_{S_t}\in \R^{m\times m}$ such that $(K_{S_t})_{ij}=k(x_i,x_j)$ with $x_i,x_j\in S_t$,
and $k_{S_t}(x)\in \R^m$ such that $(k_{S_t}(x))_i = k(x,x_i)$ with $x_i \in S_t$.
Let $\widetilde{k} : \X \times \X \rightarrow \R$ be the approximate \Nystrom{} kernel defined as
\begin{equation}\label{eq:approx_kernel}
	\widetilde{k}(x, x') = k_{S_t}(x)^\top K_{S_t}^\dagger k_{S_t}(x').
\end{equation}
Let $\widetilde {K}_{S_t}\in \R^{t\times t}$ such that $(\widetilde{K}_{S_t})_{ij}=\widetilde{k}(x_i,x_j)$ with $x_i,x_j\in X_t$,
and $\widetilde{k}_{S_t}(x)\in \R^t$ such that $(\widetilde{k}_{S_t}(x))_i = \widetilde{k}(x,x_i)$ with $x_i \in X_t$.

For $\la > 0$, let
\begin{equation}\label{eqn:mu_sig}
	\begin{aligned}
		\ti{\mu}_t(x_i) &= \widetilde{k}_{S_t}(x_i)^\top (\widetilde{K}_{S_t} + \la I)^{-1}Y_t \\ 
		\ti{\sigma}^2_t(x_i) &= \frac{1}{\lambda}(k(x_i,x_i) - \widetilde{k}_{S_t}(x_i)^\top (\widetilde{K}_{S_t} + \la I)^{-1} \widetilde{k}_{S_t}(x_i) )   
	\end{aligned}    
\end{equation}
and, for $\beta_t > 0$
\begin{equation}\label{bkb}
	\ti{f}_t(x) = \ti \mu_t(x)+\beta_t \ti \sigma_t(x).
\end{equation}

BKB uses the above approximate estimate and select at each iterations the points in 
$S_t$ proportionally to their variance at the previous 
iterate $\ti{\sigma}_{t-1}^2(x_i)$~\citep{calandriello2019gaussian}.
This sampling strategy guarantees that, for the proper values of $\beta_t$, 
$|S_t| \leq O(d_\text{eff}(t))$ 
where $\ti{X}_t$ is the set of explored points until function evaluation $t$ and $d_\text{eff}$ is the effective dimension, a quantity typically much lower than $t$ and defined as
\begin{equation}\label{eqn:deff}
	d_\text{eff}(t) = \sum\limits_{t = 1}^{T}\sigma_t^2(x_t).
\end{equation}
where with $x_t$ is the point evaluated at time $t$. 
To maximize the upper estimate ($f_t$ for KernelUCB/GP-UCB and $\ti{f}_t$ for BKB) these algorithms 
rely on the assumption that the arms set $\X$ is discrete.
In practice, when $\X$ is continuous, a fixed discretization is considered. 
In the next section we discuss how the latter can be computed adaptively and
introduce some necessary concepts and assumptions.

\paragraph{Partition Trees.} 
Key for adaptive discretization is a family of partitions called partition trees. 
Following~\citep{shekhar2018gaussian}, the notion of partition tree for metric spaces 
is formalized by the following definition. 

\begin{definition}
	\label{def:wellbehaved}
	Let $(X_h)_{h \in \N}$ be families of subsets of $X$, with $X_0=X$. 
	For each $h\in \N$ (called depth), the family of 
	subsets $X_h$ has cardinality $N^h$ with $N\in\N$. 
	The elements of $X_h$ are denoted by $X_{h,i}$ and called cells.
	Each cell $X_{h,i}$ is identified by the point $x_{h,i}\in X_{h,i}$ (called centroid)
	such that
	\begin{equation*}
		X_{h,i} = \{x \in X : d(x,x_{h,i}) \leq d(x,x_{h,j}) \quad \forall j \neq i\}.
	\end{equation*}
	Further, for all $h\in \N$ and $i= 1,\dots, N^h$, 
	\begin{equation*}
		X_{h,i} = \cup_{j = N(i - 1) + 1}^{Ni}X_{h+1,j}.
	\end{equation*}
	The cells $(X_{h+1,j})_j$ are called children of $X_{h,i}$, and $X_{h,i} $ is called parent of $(X_{h+1,j})_j$.      
	
\end{definition}

Note that each cell $X_{h,i}$ identifies a node in the tree denoted by the index $(h,i)$.
To describe the above parent/children relationship we define the following function on indexes.
Let $(0,1)$ be the index of the root cell $X_{0,1} = X$,
we denote with $p$ that function that given the index of a cell $(h+1,j)$ returns the index of its parent $(h,i)$,
and with $c$ that function
that given the index of a cell $(h,i)$ returns the indexes of its 
children $\{(h+1, N(i - 1) + 1), \dots, (h+1, N i)\}$. 
In the following we refers to $p$ and $c$ as parent function and children function.

\paragraph{Partition growth and maximum local reward variation.}

We make the following assumption which formalizes the idea that the cell size decreases with depth.
\begin{asm}\label{asm:cell_radius}
	Let $B(x, r, d)$ be a $d$-ball with radius $r$ and centered in $x$, we assume that there exist $\rho \in (0,1)$ and $0 < v_2 \leq 1 \leq v_1$ such that for $h\ge 0$ and all $i=1, \dots, N^h$
	\begin{equation*}
		B(x_{h,i}, v_2\rho^h, d) \subset X_{h,i} \subset B(x_{h,i}, v_1\rho^h, d)
	\end{equation*}
\end{asm}
Knowing that $f \in \hh$, from the above assumption and Assumption~\ref{asm:smoothd_k} 
we can derive the following upper bound on the maximum variation of 
$f$ in the cells $(X_{h,i})_i$ at each depth $h$.
\begin{lemma}\label{lm:val_vh}
	Under Assumptions~\ref{asm:smoothd_k} and \ref{asm:cell_radius}, let $f\in\hh$ and let $F=\|f\|$. Then, 
	for all $h\geq 0$ and for all $1 \leq i \leq N^h$,
	\begin{equation}\label{eqn:vh}
		\sup\limits_{x,x' \in X_{h,i}} |f(x) - f(x')| \leq V_h
	\end{equation}
	with $V_h = Fg(v_1\rho^h)$
\end{lemma}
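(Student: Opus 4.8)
The plan is to reduce the oscillation of $f$ over a cell to a purely geometric quantity via the reproducing property, and then translate that RKHS geometry into the ambient metric using Assumption~\ref{asm:smoothd_k}. Concretely, fix $h\geq 0$, an index $i$, and any pair $x,x'\in X_{h,i}$. The reproducing property gives $f(x)-f(x')=\scal{f}{k(x,\cdot)-k(x',\cdot)}$, so Cauchy--Schwarz yields
\begin{equation*}
|f(x)-f(x')|\leq \|f\|\,\nor{k(x,\cdot)-k(x',\cdot)} = F\,d_k(x,x').
\end{equation*}
This is the one genuine idea of the argument: it replaces a statement about the unknown function $f$ by a quantity $d_k$ that depends on $f$ only through its norm $F$, and hence can be controlled uniformly over the cell.

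Next I would invoke Assumption~\ref{asm:smoothd_k} to pass from the RKHS distance to the base distance, $d_k(x,x')\leq g(d(x,x'))$, and then bound $d(x,x')$ using the cell geometry. Since $X_{h,i}\subset B(x_{h,i},v_1\rho^h,d)$ by Assumption~\ref{asm:cell_radius}, both $x$ and $x'$ lie within distance $v_1\rho^h$ of the centroid $x_{h,i}$, so the triangle inequality bounds $d(x,x')$ by the diameter of the enclosing ball. As $g$ is non-decreasing, monotonicity then converts this metric bound into a bound on $g(d(x,x'))$, and taking the supremum over $x,x'\in X_{h,i}$ yields the claimed $V_h=Fg(v_1\rho^h)$. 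Note that only the \emph{outer} containment of Assumption~\ref{asm:cell_radius} is used; the inner-ball bound is irrelevant to this estimate, and there is no measurability or attainment issue since the inequality holds for every admissible pair before the supremum is taken.

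The one delicate point is the constant in the distance step. The naive triangle-inequality bound for two points of a radius-$v_1\rho^h$ ball gives diameter $2v_1\rho^h$, whereas the stated $V_h$ is expressed through $g(v_1\rho^h)$; equivalently, splitting through the centroid as $|f(x)-f(x_{h,i})|+|f(x_{h,i})-f(x')|$ produces a factor $2$ as well. I would therefore either measure radii directly from the centroid or track this factor explicitly and absorb it into constants: under Assumption~\ref{asm:bound_g}, for $h$ large enough that $2v_1\rho^h\leq\delta_k$ one has $g(2v_1\rho^h)\leq 2^\alpha C_k^\prime (v_1\rho^h)^\alpha$, so the factor is harmless and merely rescales the leading constant. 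Beyond this bookkeeping the proof is the short chain Cauchy--Schwarz $\to$ Assumption~\ref{asm:smoothd_k} $\to$ monotonicity of $g$ $\to$ cell radius, so the main effort is simply being precise about which multiple of $v_1\rho^h$ enters the argument of $g$.
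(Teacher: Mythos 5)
Your proof is correct and follows essentially the same route as the paper's one-line argument: reproducing property plus Cauchy--Schwarz to get $|f(x)-f(x')|\le F\,d_k(x,x')$, then Assumption~\ref{asm:smoothd_k} and monotonicity of $g$ together with the outer containment $X_{h,i}\subset B(x_{h,i},v_1\rho^h,d)$ from Assumption~\ref{asm:cell_radius}. The ``delicate point'' you flag is real, and the paper's proof glosses over it: the final step there reads $\nor{f}g(d(x,x'))\le \nor{f}g(v_1\rho^h)$, which tacitly assumes $d(x,x')\le v_1\rho^h$ for an arbitrary pair in the cell, whereas the outer ball only guarantees $d(x,x')\le 2v_1\rho^h$. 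As literally stated the bound should thus be $V_h=Fg(2v_1\rho^h)$, or $2Fg(v_1\rho^h)$ by splitting through the centroid; your observation that under Assumption~\ref{asm:bound_g} this merely rescales the constant by $2^\alpha$ for $h$ large enough (and hence leaves all downstream regret bounds unchanged) is exactly the right repair, making your version, if anything, more careful than the paper's.
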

We provide the proof in Appendix~\ref{proof:vh}
\subsection{Ada-BKB} 
We now present the new algorithm called Adaptive-BKB (Ada-BKB).
Given a partition tree and a function evaluation budget $T$, 
the basic idea is to explore the set of arms in a coarse to fine fashion, 
considering a variation of BKB on the cells' centroids of the partition tree. 
The algorithm is given in Algorithm~\ref{alg:1} and we next describe its various steps.

\paragraph{Preliminaries: index function and leaf set.}
Recalling the definition of the parent function $p$, given $x_{h,i} \in X_{h,i}$
we let $x_{p(h,i)}$ be the centroid of the parent cell. Then, we define the so called index function as
\begin{equation}\label{eqn:index}
	I_t(x_{h,i})= \min(\ti{f}_t(x_{h,i}),\ti{f}_t(x_{p(h,i)}) + V_{h-1}) + V_h   
\end{equation}
with $\ti{f}_t$ as in~\eqref{bkb}. 
In other terms, we compute an high probability upper bound of $f$ on $x_{h,i}$ and, 
adding $V_h$, we get an high probability upper bound over the maximum values of $f$ in the cell $X_{h,i}$.

Ada-BKB proceeds iteratively.
The algorithm maintains two counters, 
$\tau$ which counts the total number of function evaluations and refinements (see below), 
and $t$ which keeps track of the number of function evaluations performed.
A set of cells' centroids $L_\tau$ (called the leaf set) is updated at each iteration $\tau \geq 0$.
We next describe how the leaf set is used and populated recursively.
\paragraph{First evaluation-update steps.}
The leaf set initially contains only the centroid of root cell, that is 
\begin{equation*}
	L_0=\{x_{0,1}\}.  
\end{equation*}
The function value is queried at $x_{0,1}$ to obtain $y_1= f(x_{0,1})+\eps_1$ and 
the first estimates $\ti{\mu}_1, \ti{\sigma}_1$ are computed. 
Then, given a suitable parameter $\beta_t$, the condition, 
\begin{equation*}
{\beta_t} \ti{\sigma}_{1}(x_{0,1}) \leq V_0,  
\end{equation*}
is checked. 
Initially the term $\ti{\sigma}_{1}(x_{0,1})$ is typically large and the condition is violated. 
In this case, another function value
\begin{equation*}
y_2= f(x_{0,1})+\eps_2 
\end{equation*} 
is queried to derive new estimates $\ti{\mu}_2, \ti{\sigma}_2$ using all available data. Then, the  
condition ${\beta_t} \ti{\sigma}_{2}(x_{0,1}) \leq V_0$ is checked again. 
If violated more function values $y_{t}= f(x_{0,1})+\eps_{t}$ are queried, and estimates
$\ti{\mu}_{t}, \ti{\sigma}_{t}$ computed, until the condition 
${\beta_t} \ti{\sigma}_{t}(x_{0,1}) \leq V_0$ is satisfied . 
Both counters are updated i.e. $\tau = t$.
\paragraph{First leaf-set-expansion step.} 
During all the above iterations the leaf set is unchanged, 
so that $L_{\tau}=L_0$. When the condition ${\beta_t} \ti{\sigma}_{t}(x_{0,1}) \leq V_0$ 
is satisfied, then the leaf set is expanded according to the following rule
\begin{equation*}
L_{\tau+1} = (L_\tau \setminus \{x_{0,1}\}) \cup \{x_{1,j} |  1 \leq j \leq N\},  
\end{equation*}
and the counter $\tau$ is incremented by $1$.
In words, the cell we just evaluated is taken off the leaf set and its children included.
\begin{figure}[H]
\centering
\vspace{.3in}
\includegraphics[width=0.95\linewidth]{./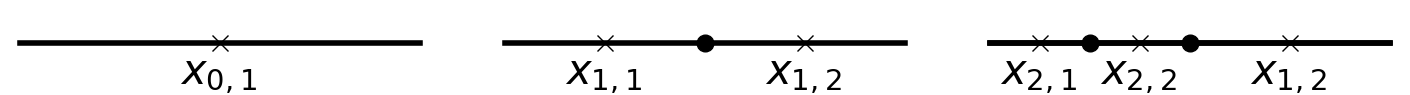}
\vspace{.3in}
\caption{Description of the first and second refinement procedures. 
	The $x_{h,i}$ are the centroids contained in the leaf set while 
	the $\bullet$ represent the centroid removed after the refinement procedure. 
	From left to right, the initial state of the leaf set (containing only the 
	centroid of the root cell), the first refinement and a second refinement with number of 
	children per cell $N = 2$.}
\end{figure}
\paragraph{Further evaluation-update steps.} The estimates 
$\ti{\mu}_{t}, \ti{\sigma}_{t}$ are computed\footnote{Notice that the computation include re-sampling the points in $S_t$
proportionally to $\ti{\sigma}^{2}_{t-1}(x_i)$~\citep{calandriello2019gaussian}} and used to build 
$I_t$ as in~\eqref{eqn:index}.  
Then, the cell $ x_{1,i}$ in the 
leaf set $L_\tau$ maximizing the index function is selected, 
\begin{equation*}
x_{1,i}=\argmax_{x\in L_\tau} I_t(x).  
\end{equation*}
The condition ${\beta_t} \ti{\sigma}_{t}(x_{1,i}) \leq V_1$ is then checked. If violated a value $y_{t+1}= f( x_{1,i})+\eps_{t+1}$ is queried and then the estimates $\ti{\mu}_{t+1}, \ti{\sigma}_{t+1}$and $I_{t+1}$ computed. A new cell is then selected as above
\begin{equation*}
x_{1,i'}=\argmax_{x\in L_{\tau+1}} I_{t+1}(x).  
\end{equation*}
Note that, we might obtain the same cell $i=i'$ or a different cell $i\neq i'$. 
Again the condition ${\beta_t} \ti{\sigma}_{t+1}(x_{1,i'}) \leq V_1$ is checked until satisfied,
and this can entail querying multiple evaluations, possible at more cells. 
\paragraph{Further leaf-set-expansion steps.} 
Note that, throughout the possible function evaluations the leaf set remains unchanged.
Also, while we might evaluate multiple cells, at some point 
the condition ${\beta_t} \ti{\sigma}_{t+1}(x_{1,i'}) \leq V_1$ will be satisfied by a given cell. 
Then, indicating with $c(\cdot)$ the function which given a centroid 
returns the set of children of node represented by the given centroid i.e.
\begin{equation*}
	c(x_{h,i}) = \{x_{h+1,j} | N(i - 1) + 1 \leq j \leq Ni\}
\end{equation*}
the leaf set will be updated as follow 
\begin{equation*}
	L_{\tau+1} = (L_\tau \setminus \{x_{h,i'}\}) \cup \text{c}(x_{h,i'})
\end{equation*}
The cell $x_{h,i'}$ we last evaluated is taken off the leaf set, 
its children $x_{h+1,j}$, $ N(i - 1) + 1 \leq j \leq Ni$ added, 
but note that also all the cells $x_{h,i}$, $i\neq i'$ in the same partition 
as $x_{h,i'}$ are kept in the leaf set. 
Moreover, in order to avoid the (unlikely) scenarios in which the algorithm 
keeps refining indefinitely without evaluating the function, 
a maximum depth threshold $h_{\text{max}}$ is added. 
\begin{figure}[H]
	\centering
	\vspace{.3in}
	\includegraphics[width=0.32\linewidth]{./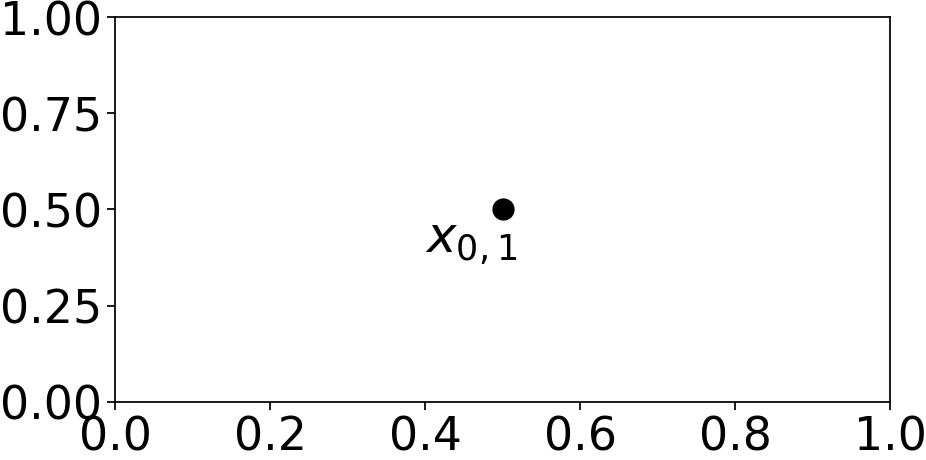}
	\includegraphics[width=0.32\linewidth]{./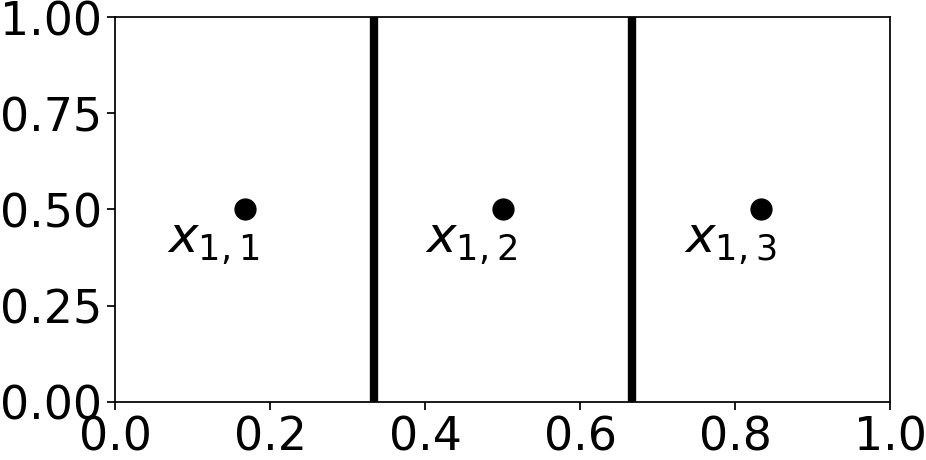}
	\includegraphics[width=0.32\linewidth]{./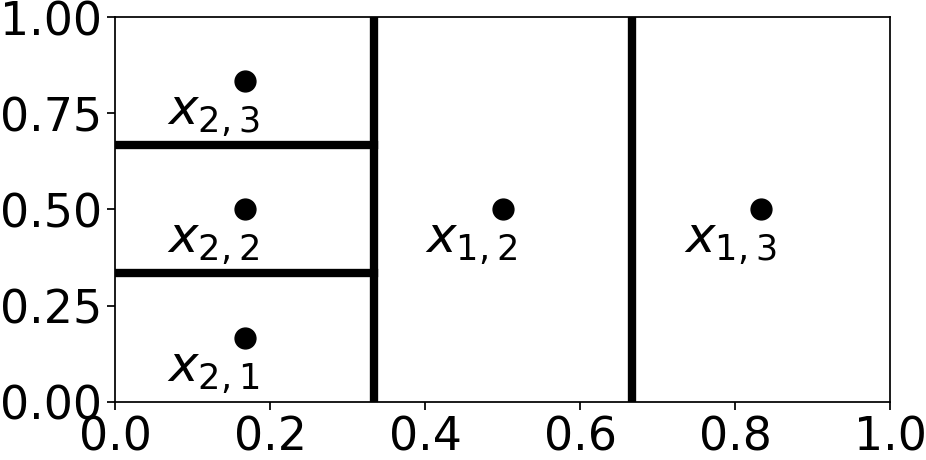}
	\vspace{.3in}
	\caption{Consider $X = [0,1]^2$ and $N = 3$. 
		Here $\bullet$ denotes the centroids. 
		The first picture (from top to bottom), represent the initialization of the algorithm 
		where we have only the root ($X = X_{0,1} = [0,1]^2$); 
		the second picture, represent the first refinement in which we split the 
		root cell in $N = 3$ cells associated to the children 
		($(1,1)$ has $X_{1,1} = [0, 1/3] \times [0, 1]$, $(1,2)$ has 
		$X_{1,2} = [1/3, 2/3] \times [0, 1]$ and $(1,3)$ has 
		$X_{1,3} = [2/3,1] \times [0, 1]$). The third picture, 
		represent the expansion of cell $(1,1)$.}
\end{figure}

\paragraph*{Pruning rule.} 
One of the core differences between Ada-BKB and AdaGP-UCB is the presence of a pruning rule. 
This rule eliminates the cells that in high probability don't contain a global maximizer. 
Let $X_t$ be the set of centroids observed until time $t$, and let the highest lower confidence bound (LCB) be defined as
\begin{equation*}
	l^*_t = \max\limits_{x \in X_t} \tilde{\mu}_t(x) - \tilde{\beta}_t\tilde{\sigma}_t(x)
\end{equation*}
After each iteration, the pruning rule erases every centroid in the leaf set $L_\tau$ that have their upper bound on the maximum over the cell smaller than $l^*_t$.
Formally, we define a function 
\textit{er}$_t: X \rightarrow \{0, 1\}$ 
which, given a centroid $x_{h,i}$, returns $1$ if the centroid needs to be pruned and $0$ otherwise
\begin{equation*}
	\text{er}_t(x_{h,i}) =
	\begin{cases}
		1 & \text{if} \quad \tilde{f}_{t-1}(x_{h,i}) + V_h < l^{*}\\
		0 & \text{otherwise}
	\end{cases}  
\end{equation*}
Thus, the leaf set is updated as 
$L_{\tau + 1} = L_{\tau + 1} \setminus \{ x_{h,i} \in L_{\tau + 1}: \text{ er}(x_{h,i}) > 0 \}$.
Notice that this pruning rule doesn't increase the computational cost since all the information used for the check must be computed previously for different reasons (as the UCB + $V_h$) and the best lower bound can be stored and updated after every evaluation (the informations used for the best lower bound, i.e. $\tilde{\mu}_t$ 
and $\tilde{\beta}_t\tilde{\sigma}_{t-1}$, are already computed for the index function). 
Notice that if an expansion is performed the centroids to check are just the new ones 
(since the model is not updated).
 
Moreover, this pruning rule automatically provide us an early stopping condition, infact, 
if after the pruning procedure the leaf set size is $0$ or $1$ and the only centroid contained in the set is 
$x_{h_\text{max},i}$, we can interrupt the execution and terminate the algorithm 
since every subsequent evaluation will be performed on this centroid. 
In practice, this procedure is very useful because 
it allows to limit the effects of over-expansion of the tree that would make 
the algorithm very time-expensive (see Section~\ref{sec:experiments} and Appendix~\ref{app:other_exp}). 
\begin{algorithm}[H]
	\setstretch{1.1}
	\caption{Ada-BKB}
	\begin{algorithmic}[1]
		\State{\textbf{Input:} $T > 0$, $h_{\text{max}}$, $N$, $\beta_t$}
		\State{Initialize $L_0 = \{x_{0,1}\}, \tau = 0, t = 1$}
		\While{$t \leq T$}
		\State{$x_{h,i} = \argmax\limits_{x_i \in L_\tau} I_t(x_i)$}
		\If{${\beta_t} \ti{\sigma}_{t-1}(x_{h,i}) \leq V_h$ and $h_t < h_{\text{max}}$}
		\State{$L_{\tau+1} = (L_\tau \setminus \{x_{h,i}\}) \cup c(x_{h,i})$}
	\Else
	\State{$y_t = f(x_{h,i}) + \epsilon_t$ (with $\epsilon_t$ noise)}
	\State{compute $\ti{\mu}_{t+1}, \ti{\sigma}_{t+1}, l^*_{t+1}$}
	\State{$L_{\tau + 1} = L_{\tau}$}
	\State{$t = t + 1$}
	\EndIf
	\State{$L_{\tau + 1} = L_{\tau + 1} \setminus \{x_{h,i}: \text{ er}(x_{h,i}) > 0\}$}
	\If{$|L_{\tau + 1}| == 0$ or $L_{\tau + 1} == \{ x_{h_\text{max},i}\}$ }
	\State{break}
	\EndIf
	\State{$\tau = \tau + 1$}
	\EndWhile
\end{algorithmic}\label{alg:1}
\end{algorithm}
Note that, when performing a leaf-set-expansion step, we have yet to specify how to choose N children.
Thus we consider 
the refinement of a cell $X_{h,i}$ is performed by dividing it equally in $N$ parts along its longest side. 
This is a common method which allows to get a partition tree defined as in 
Definition~\ref{def:wellbehaved} satisfying also Assumption~\ref{asm:cell_radius} 
as shown in~\citep{shekhar2018gaussian,bubeck2011x,salgia2020computationally}

\section{MAIN RESULTS}\label{theory}
In this section we present the two main theorems of the paper. 
Theorem~\ref{thm:reg_bounds} shows that the regret bounds for 
Ada-BKB are the same as those of exact GP-UCB, 
while in Theorem~\ref{thm:comp_cost} we prove that the computational cost of 
Ada-BKB is smaller that the one of other adaptive methods. 
Altogether, our results show that, thanks to the use of sketching, Ada-BKB a fast adaptive method achieving state-of-the-art regret bounds. 

\subsection{Regret Analysis}
We next present the first main contribution of the paper on the cumulative regret, for a given function in the considered reproducing kernel Hilbert space. We recall that we have access to noisy function evaluations $y_t = f(x_t) + \epsilon_t$, where $\epsilon_t$ is $\xi$-sub Gaussian.

\begin{thm}[Regret Bounds]\label{thm:reg_bounds} Let $f\in\hh$, and let $F=\|f\|$.
	Let $\delta \in (0, 1)$, $\epsilon \in (0,1)$ and $\bar{\alpha} = \frac{1 + \epsilon}{1 - \epsilon}$. 
	Suppose that Assumptions~\ref{asm:smoothd_k},\ref{asm:bound_g},\ref{asm:cell_radius} are satisfied. 
	Consider Ada-BKB (Alg.~\ref{alg:1}) with $N \geq 1$, $T\geq (v_1/\delta_k)^{2\alpha}$, 
	$h_\text{max} \geq \frac{\log(T)}{2\alpha\log(1/\rho)}$, 
	$\lambda = \xi^2$, 
	$\zeta_t = \bar{\alpha} \log(\kappa^2 t) \Big( \sum\limits_{s = 1}^{t} \ti{\sigma}_t^2(x_s) \Big)$ and $\beta_t$ defined as
	\begin{equation}\label{eqn:beta}
		{\beta}_t = 2\lambda^2 \sqrt{\zeta_t + \log (1/\delta)} + \Big( 1 + \frac{1}{\sqrt{1 - \epsilon}}\Big)\sqrt{\lambda}F.
	\end{equation}
	Then, with probability at least $1 - \delta$, 
	\begin{equation}    
		R_T \leq \mathcal{O}(\sqrt{T}d_\text{eff}(T)\log(T)).
	\end{equation}
	Moreover, if the evaluation model is $y_t = f(x_t) + \eta_t \quad \eta_t \sim \mathcal{N}(0, \sigma^2)$, the cumulative regret can be bounded as:
	\begin{equation}    
		R_T \leq \mathcal{O} \Bigg(\sqrt{Td_{\text{eff}}(T)\log(T)\frac{N^{h_\text{max}} - 1}{N - 1}} \Bigg).
	\end{equation}
	
\end{thm}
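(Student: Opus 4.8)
The plan is to follow the classical optimism-based regret decomposition for UCB-type methods, adapted to the adaptive tree discretization and the Nyström sketch. The argument proceeds in four stages: (i) establish valid high-probability confidence bounds for the sketched estimator; (ii) use Lemma~\ref{lm:val_vh} to show the index $I_t$ upper-bounds the maximum of $f$ over each leaf cell, yielding optimism; (iii) bound the per-step regret by the confidence width of the evaluated centroid; and (iv) sum these widths via Cauchy--Schwarz and the effective dimension. First I would invoke the BKB confidence analysis of \citet{calandriello2019gaussian}. The role of the resampling step and of the particular $\beta_t$ in~\eqref{eqn:beta} is precisely to guarantee that, on an event of probability at least $1-\delta$,
\[
	|\ti{\mu}_t(x) - f(x)| \le \beta_t\, \ti{\sigma}_t(x)
\]
holds simultaneously for all centroids $x$ observed up to time $t$. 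Here $\la=\xi^2$ matches the sub-Gaussian scale, $\zeta_t$ encodes $\sum_s \ti{\sigma}_t^2(x_s)$, which is comparable to $d_\text{eff}(t)$, and the additive $F$-term absorbs the RKHS-norm contribution of $f$. Making this uniform over the \emph{growing} set of revealed centroids requires a union bound over the nodes of the partition tree, where $h_\text{max}$ and the branching $N$ enter.

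Next, combining the two-sided bound with Lemma~\ref{lm:val_vh}, I would establish the optimism property
\[
	I_t(x_{h,i}) \ge \max_{x \in X_{h,i}} f(x).
\]
Indeed $\ti{f}_t(x_{h,i}) = \ti{\mu}_t(x_{h,i}) + \beta_t \ti{\sigma}_t(x_{h,i}) \ge f(x_{h,i})$, and adding $V_h$ lifts this to an upper bound over the whole cell because $f$ varies by at most $V_h$ inside $X_{h,i}$; taking the $\min$ with the parent's index only tightens an already valid upper bound. A direct consequence is that the cell containing $x^*$ is never pruned and always carries index at least $f(x^*)$, so the centroid $x_t=x_{h_t,i_t}$ selected by $\argmax_{x\in L_\tau} I_t(x)$ satisfies $I_t(x_t)\ge f(x^*)$.

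For the instantaneous regret I would write, using the lower bound $f(x_t)\ge \ti{\mu}_t(x_t)-\beta_t\ti{\sigma}_t(x_t)$ and $I_t(x_t)\le \ti{f}_t(x_t)+V_{h_t}$,
\[
	f(x^*) - f(x_t) \le I_t(x_t) - f(x_t) \le 2\beta_t\ti{\sigma}_t(x_t) + V_{h_t}.
\]
The key structural fact is that the algorithm \emph{evaluates} a centroid only when the refinement test $\beta_t\ti{\sigma}_{t-1}(x_{h_t,i_t})\le V_{h_t}$ fails, so on evaluation steps $V_{h_t}\le \beta_t\ti{\sigma}_t(x_t)$ (up to the comparison between $\ti{\sigma}_{t-1}$ and $\ti{\sigma}_t$, which the resampling controls), giving $f(x^*)-f(x_t)\le 3\beta_t\ti{\sigma}_t(x_t)$. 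Summing and applying Cauchy--Schwarz yields
\[
	R_T \le 3\sum_{t=1}^T \beta_t\ti{\sigma}_t(x_t) \le 3\beta_T\sqrt{T\sum_{t=1}^T \ti{\sigma}_t^2(x_t)} \lesssim \beta_T\sqrt{T\, d_\text{eff}(T)},
\]
and substituting $\beta_T\lesssim \sqrt{d_\text{eff}(T)\log T}$ (from the form of $\zeta_T$) delivers the claimed $\mathcal{O}(\sqrt{T}\, d_\text{eff}(T)\log T)$ rate. The second, Gaussian-noise bound follows the same skeleton but replaces the uniform confidence level by a self-normalized martingale bound; the union bound over all cells up to depth $h_\text{max}$, of which there are $\frac{N^{h_\text{max}}-1}{N-1}$, then appears as a multiplicative factor inside the square root rather than through $\beta_t$.

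The hard part will be the simultaneous control of the sketching error and the adaptive discretization. The confidence bound must hold uniformly over centroids that are revealed only as the tree refines, and the validity of $\beta_t$ hinges on $\sum_s \ti{\sigma}_t^2$ staying comparable to $d_\text{eff}$ despite the Nyström approximation and the evolving dictionary $S_t$; closing this loop is exactly the interaction that earlier exact-UCB adaptive methods sidestep but that sketching forces us to confront. Care is likewise needed at the mismatch between $\ti{\sigma}_{t-1}$ and $\ti{\sigma}_t$ in the refinement test and at the pruning rule, to certify that no cell containing $x^*$ is ever erased.
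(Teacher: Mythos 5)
There is a genuine gap, and it sits exactly where the theorem's hypotheses on $h_\text{max}$ and $T$ should have entered your argument. Your per-step chain $f(x^*)-f(x_t)\le I_t(x_t)-f(x_t)\le 2\beta_t\ti{\sigma}_t(x_t)+V_{h_t}$ is fine, but the claim that on every evaluation step $V_{h_t}\le\beta_t\ti{\sigma}_t(x_t)$ is false: Algorithm~\ref{alg:1} evaluates a centroid when the variance test fails \emph{or} when $h_t=h_\text{max}$, so at maximum depth the function is queried even if $\beta_t\ti{\sigma}_t(x_t)\le V_{h_\text{max}}$, and the instantaneous regret there cannot be charged to the confidence width. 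Hence $R_T\le 3\sum_t\beta_t\ti{\sigma}_t(x_t)$ does not follow. This is precisely why the paper splits the evaluated points into $Q_1$ (depth $<h_\text{max}$) and $Q_2$ (depth $h_\text{max}$): on $Q_1$, Lemma~\ref{lem:subopt} gives your $3\beta_t\ti{\sigma}_t$ bound, while on $Q_2$ it gives $f(x^*)-f(x)\le(4U_V+1)V_{h_\text{max}}$, proved by passing through the \emph{parent} cell, whose expansion certifies $\beta_t\ti{\sigma}_t(x_{p(h,i)})\le V_{h-1}$, together with the ratio bound of Proposition~\ref{prop:ratio_upperbound}. Then the hypotheses $h_\text{max}\ge\frac{\log T}{2\alpha\log(1/\rho)}$, $T\ge(v_1/\delta_k)^{2\alpha}$ and Assumption~\ref{asm:bound_g} are used to get $V_{h_\text{max}}T=\mathcal{O}(\sqrt{T\log T})$. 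Your proposal never invokes any of these hypotheses --- the telltale sign of the missing term. (Your inequality could be repaired into $3\max\{\beta_t\ti{\sigma}_t(x_t),V_{h_\text{max}}\}$, after which the same split re-emerges, but that case analysis and the estimate on $V_{h_\text{max}}$ must actually be carried out.)

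The mechanism you sketch for the Gaussian-noise bound is also misidentified. The factor $\frac{N^{h_\text{max}}-1}{N-1}$ does not arise from a union bound over cells in a self-normalized martingale bound; a union bound over $M$ cells would contribute an additive $\log M\approx h_\text{max}\log N$ inside $\beta_t$, not a multiplicative factor under the square root. In the paper it comes from a counting argument in the style of Salgia et al.: with Gaussian noise, repeated evaluations shrink the width as $\ti{\sigma}_t(x)\le\sqrt{\bar{\alpha}}\,\sigma/\sqrt{n_t(x)}$ (Proposition~\ref{prop:std_upp}); the number of \emph{distinct} centroids at depth $<h_\text{max}$ is at most the number of tree nodes, $|J|\le\frac{N^{h_\text{max}}-1}{N-1}$; each contributes $\mathcal{O}(\sqrt{n_j})$ to $\sum_t\ti{\sigma}_t(x_t)$, and Jensen's inequality gives $\sum_{j\in J}\sqrt{n_j}\le\sqrt{|J|\,T}$, which combined with $\ti{\beta}_T\lesssim\sqrt{d_\text{eff}\log T}$ yields the stated bound. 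Relatedly, the ``hard part'' you flag --- uniformity of the confidence bound over adaptively revealed centroids, requiring a union bound over the tree --- is a non-issue: the imported BKB guarantee (Proposition~\ref{prop:upperbound}) is a frequentist RKHS bound holding simultaneously for all $x$, and neither $N$ nor $h_\text{max}$ enters $\beta_t$. Your remaining steps for the first bound (optimism of $I_t$, the selection rule, and summing widths via Cauchy--Schwarz or directly Proposition~\ref{prop:bkb_regret}) do match the paper's route.
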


The above Theorem shows that the regret bound for Ada-BKB matches exactly the regret bounds of the non-adaptive methods BKB and BBKB~\citep{calandriello2020near}.
The comparison is straightforward, since the bounds for all the methods are expressed in terms of the same quantities.
AdaGP-UCB and the non-adaptive methods GP-UCB~\citep{srinivas2009gaussian}, TS-QFF~\citep{mutny2019efficient} have a regret of $O(\sqrt{T}\gamma_T)$, where 
$\gamma_T$ is the mutual information gain. 
It is shown in~\citep{calandriello2019gaussian} that $\gamma_T$ is of the same order of $d_\text{eff}(T)$, and therefore
the regret bounds for Ada-BKB are better when $\sqrt{\log(T)\frac{N^{h_\text{max}} - 1}{N - 1}} \leq \sqrt{d_\text{eff}(T)}$.
Finally, we recall that GP-ThreDS~\citep{salgia2020computationally} has a regret bound of $O(\sqrt{T \gamma_T}(\log T)^2)$, namely $O(\sqrt{T d_\text{eff}(T)}(\log T)^2)$ 
and thus in this case Ada-BKB can be advantageous if $\sqrt{\log(T)\frac{N^{h_\text{max}} - 1}{N - 1}} \leq (\log T)^2$. 
We extend the discussion in appendix~\ref{app:expanded_discussion}
\subsection{Computational Cost Analysis}
In this section we compute the total computational cost of Ada-BKB, for a specific choice of the family of partition, in the case $X=[0,1]^p$. 
The computational cost of Ada-BKB is due to the following operations: 1) the computation of $\ti{f}_t$, 2) the computation of $I_t(x)$ for all $x \in L_\tau$,
3) the discretization refinement.
We bound each cost separately.

1) The cost of computing $\ti{f}_t$ is the cost of computing $\ti{\mu}_t, \ti{\sigma}_t$ and $\beta_t$. 
The time complexity of computing these quantities over $T$ observations is $\mathcal{O}(Td^2_\text{eff}(T) )$~\citep{calandriello2019gaussian}.

2) Since the evaluation cost of $\ti{\mu}_t, \ti{\sigma}_t$ is bounded by $\mathcal{O}(d_\text{eff}^2(t))$, the worst case cost of evaluating $I_t$ on the leaf set is
$\mathcal{O}(Td^2_\text{eff}(T)N^{h_{\text{max}}})$

3) For $X = [0, 1]^p$ with the euclidean norm, consider the following rule to refine the partition from level $h$ to $h+1$. $X_{0,1}$ is cut 
along one of its sides in $N$ equal parts, obtaining $N$ rectangles. 
Then, each set $X_{h,i}$ in the partition $X_h$ is divided in $N$ parts equally again along the longest side. 
This partition is built using the same refinement procedure used in~\citep{shekhar2018gaussian} which costs $\mathcal{O}(TpNh_{\text{max}})$. 
\begin{thm}[Computational Cost]\label{thm:comp_cost}
	Let $X =[0, 1]^p$ endowed with the euclidean distance. Then, Ada-BKB with the same parameters as in \ref{thm:reg_bounds} has time complexity 
	\begin{equation*}
		\mathcal{O}(Td_\text{eff}^2(T) N^{h_{\text{max}}} + TpN h_{\text{max}})    
	\end{equation*}
\end{thm}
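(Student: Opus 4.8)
The plan is to assemble the total running time from the three cost contributions already isolated above the statement---model fitting, index evaluation over the leaf set, and tree refinement---and then check that the dominated term is absorbed. First I would record the two structural facts that drive every estimate. The while loop of Algorithm~\ref{alg:1} increments the evaluation counter $t$ only on the \texttt{else} branch and runs while $t \leq T$, so the number of function evaluations, and hence the number of model updates of $\ti{\mu}_t, \ti{\sigma}_t$, is at most $T$. Second, because a cell at depth $h$ is refined only when $h < h_\text{max}$, the explored tree is contained in a complete $N$-ary tree of depth $h_\text{max}$; this bounds both the leaf set size $|L_\tau| \leq N^{h_\text{max}}$ and the total number of refinement operations by $\sum_{h=0}^{h_\text{max}} N^h = \mathcal{O}(N^{h_\text{max}})$.

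Next I would bound each contribution. For the model-fitting cost, the recomputation of $\ti{\mu}_t, \ti{\sigma}_t$ (including the re-sampling of $S_t$) over the $T$ evaluations costs $\mathcal{O}(Td_\text{eff}^2(T))$ by the BKB analysis of~\citep{calandriello2019gaussian}; here I would use the monotonicity $d_\text{eff}(t) \leq d_\text{eff}(T)$ to replace the per-step factor $d_\text{eff}^2(t)$ by $d_\text{eff}^2(T)$. For the index-evaluation cost, at each evaluation the index $I_t$ must be computed over the whole leaf set; since each evaluation of $\ti{\mu}_t, \ti{\sigma}_t$ at a centroid costs $\mathcal{O}(d_\text{eff}^2(t))$ and $|L_\tau| \leq N^{h_\text{max}}$, a full pass costs $\mathcal{O}(N^{h_\text{max}} d_\text{eff}^2(T))$, giving $\mathcal{O}(Td_\text{eff}^2(T) N^{h_\text{max}})$ over the $T$ evaluations; the observation that after a refinement only the $N$ new centroids need an index update keeps the refinement-triggered index computations within the same order. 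For the refinement cost, dividing a cell along its longest side and computing the induced centroids uses the construction of~\citep{shekhar2018gaussian}, which costs $\mathcal{O}(TpN h_\text{max})$.

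Finally I would sum the three bounds, $\mathcal{O}(Td_\text{eff}^2(T)) + \mathcal{O}(Td_\text{eff}^2(T)N^{h_\text{max}}) + \mathcal{O}(TpN h_\text{max})$, and observe that since $N^{h_\text{max}} \geq 1$ the first term is absorbed into the second, leaving $\mathcal{O}(Td_\text{eff}^2(T)N^{h_\text{max}} + TpN h_\text{max})$, which is the claim. The main obstacle I anticipate is the bookkeeping for the refinement steps: these advance $\tau$ without advancing $t$, so one must argue that their number is controlled purely by the tree geometry rather than by $T$. The depth cap $h_\text{max}$ is exactly what makes this work---without it the algorithm could refine indefinitely---so the crux is to turn the ``complete $N$-ary tree of depth $h_\text{max}$'' picture into the clean $\mathcal{O}(N^{h_\text{max}})$ bound on both the leaf set size and the refinement count, and to confirm that the pruning rule can only decrease these quantities and therefore never worsens the worst-case cost.
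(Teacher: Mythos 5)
Your proposal is correct and follows essentially the same route as the paper's proof: decompose the runtime into model updates ($\mathcal{O}(Td_\text{eff}^2(T))$ over at most $T$ evaluations, by the BKB analysis), index computation over the leaf set bounded via $|L_\tau| \leq N^{h_\text{max}}$ (giving $\mathcal{O}(Td_\text{eff}^2(T)N^{h_\text{max}})$, with refinement steps requiring only the $N$ new centroids), and the cell-splitting cost $\mathcal{O}(TpNh_\text{max})$ from the construction of Shekhar et al., then absorb the dominated term. The only cosmetic difference is that you bound the number of refinements by the tree size $\mathcal{O}(N^{h_\text{max}})$, whereas the paper (following the AdaGP-UCB accounting, and also covering the argmax selection cost separately) uses an $\mathcal{O}(Th_\text{max})$ count of refinement steps; both are consistent with the stated bound.
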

\begin{remark}
	Using the arguments in~\citep{shekhar2018gaussian}, for $N$ odd, the leaf set size is bounded, for every $\tau$, by
	\begin{equation*}
		|L_\tau| \leq T N h_{\text{max}}.
	\end{equation*}
	
	Then, for a fixed $p$ and $N$ the overall computational cost become:
	\begin{equation*}
		\mathcal{O}(T^2 d_\text{eff}^2(T) h_{\text{max}}).    
	\end{equation*}
	
\end{remark}
\paragraph*{Discussion on Computational Cost.}
Ada-BKB has the provably smallest computational complexity of all methods with adaptive discretization which can deal with noisy observation cases:
Ada-GPUCB costs $\mathcal{O}(T^4(N - 1)h_{\text{max}} +TpN h_{\text{max}})$, GP-ThreDS costs $O(T^4)$. 
Note that GP-ThreDS has a computational complexity which is independent from $p$ while 
Ada-BKB and Ada-GPUCB are linear in the dimension. 
Comparing our algorithm with GP-UCB ($\mathcal{O}(T^3A)$ with $A$ size of the discretization of $X$), we note that we get smaller computational cost 
in most cases. Indeed, usually the cardinality of the discretization grows exponentially with the dimension of $X$. 
Analogously, in the same setting, our algorithm is faster than BKB ($\mathcal{O}(TAd^2_\text{eff})$) and TS-QFF($\mathcal{\tilde{O}}(TA2^pd_\text{eff})$)~\citep{mutny2019efficient}. 

\section{EXPERIMENTS}\label{sec:experiments}

In this section, we study the empirical performances of Ada-BKB compared 
with GP-UCB~\citep{srinivas2009gaussian}, BKB~\citep{calandriello2019gaussian} 
and AdaGP-UCB~\citep{shekhar2018gaussian}. 
We refer to Appendix~\ref{app:experiements} for further details and results. 
The hyperparameters of the algorithms are fixed according to theory, or, when not possible, 
by cross-validation, as for the kernel parameters. 
\paragraph*{Function minimization.} 
We consider the minimization of a number of 
well known functions corrupted by Gaussian noise 
with zero mean and standard deviation $0.01$. 
For GP-UCB and BKB, a fixed discretization of the function domain is considered. 
For each experiment we report mean and a $95\%$ confidence interval using $5$ repetitions. 
\begin{figure}[H]
	\centering
	\includegraphics[width=0.24\linewidth]{./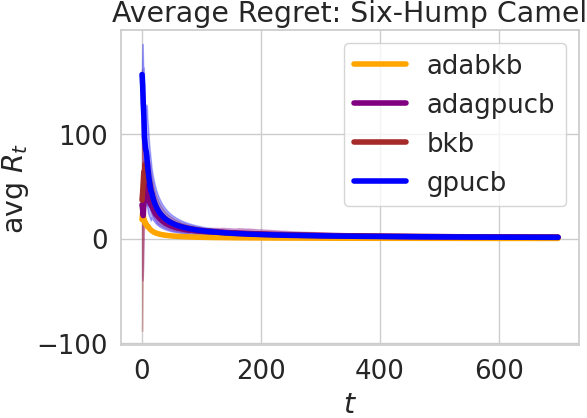}
	\includegraphics[width=0.24\linewidth]{./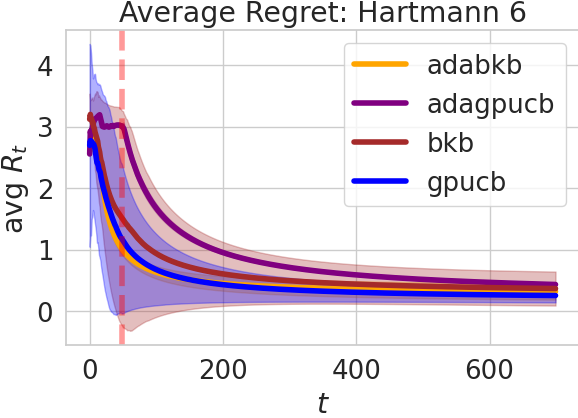}
	\includegraphics[width=0.24\linewidth]{./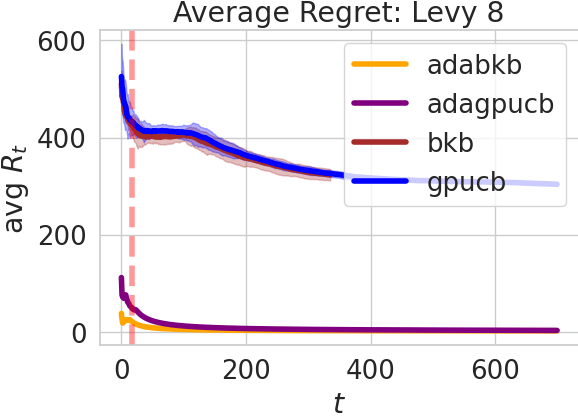}
	\includegraphics[width=0.24\linewidth]{./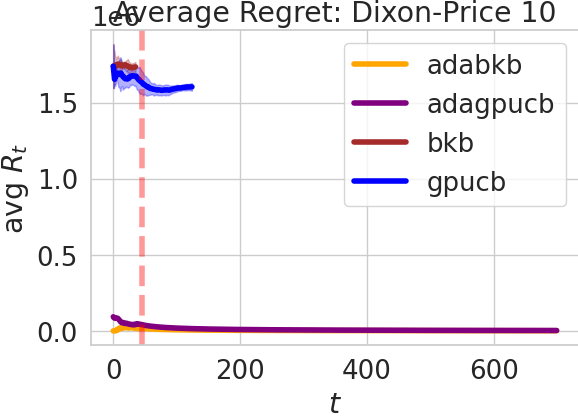}
	\includegraphics[width=0.24\linewidth]{./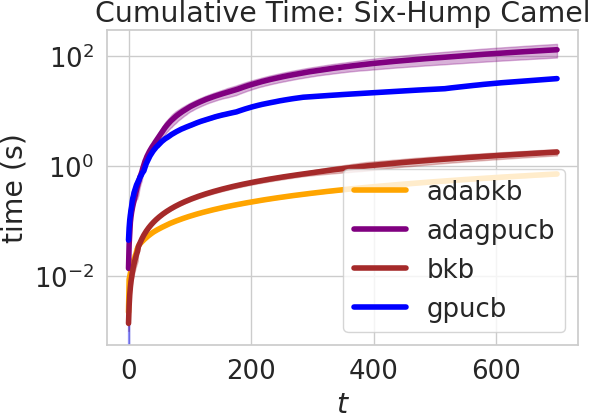}
	\includegraphics[width=0.24\linewidth]{./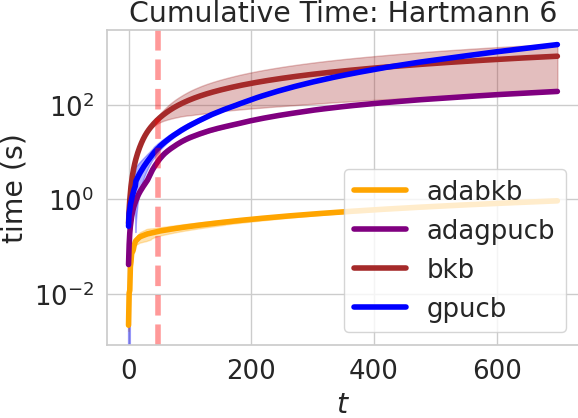}
	\includegraphics[width=0.24\linewidth]{./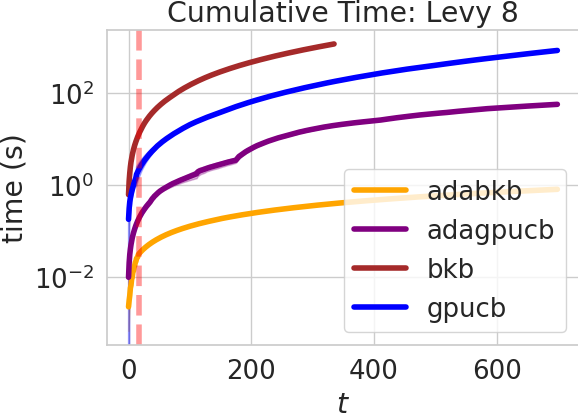}
	\includegraphics[width=0.24\linewidth]{./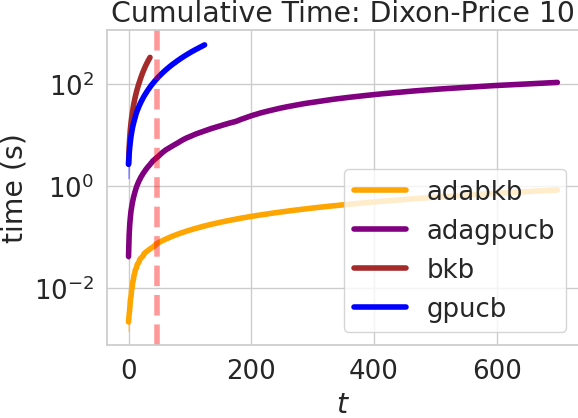}

	\caption{from left to right, average regret (first line) and cumulative time (second line) 
		obtained by algorithms in optimizing, from top to bottom, Six-Hump Camel, Hartmann 6, Levy 8 and Dixon-Price 10 functions.}
	\label{fig:main_paper_fig1}
\end{figure}
\begin{figure}[H]
	\centering
	\includegraphics[width=0.24\linewidth]{./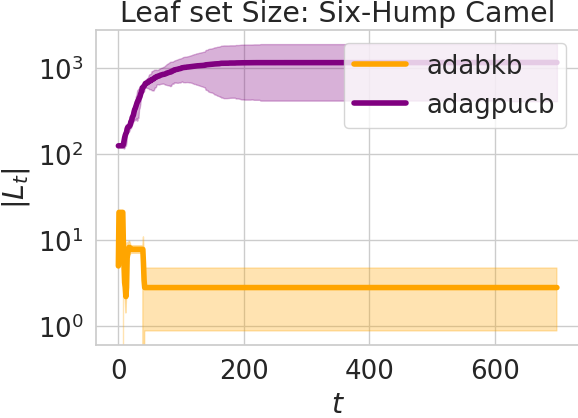}
	\includegraphics[width=0.24\linewidth]{./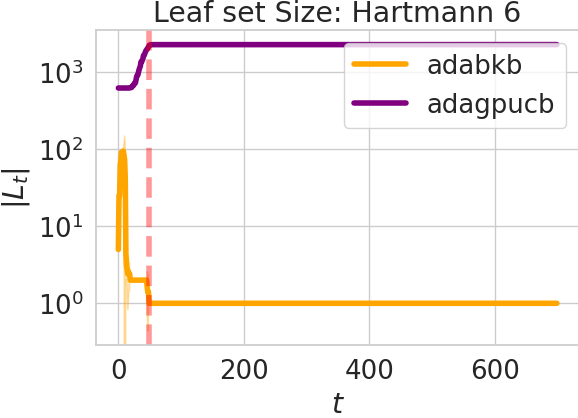}
	\includegraphics[width=0.24\linewidth]{./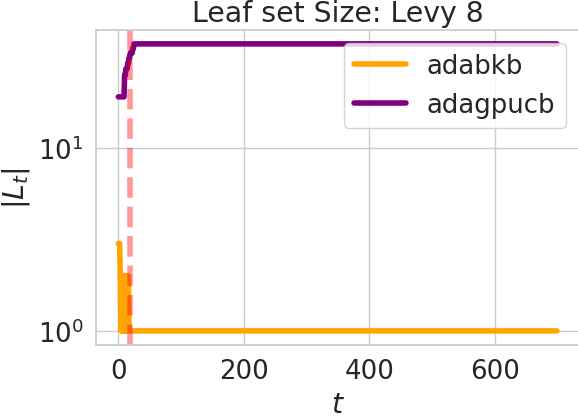}
	\includegraphics[width=0.24\linewidth]{./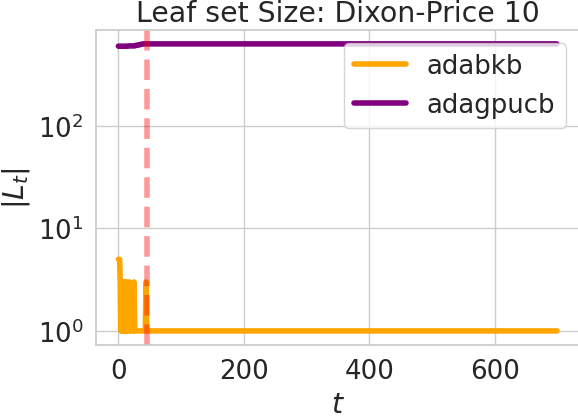}
	\caption{from left to right leaf set size of the algorithms in optimizing Six-Hump Camel, 
		Hartmann 6, Levy 8 and Dixon-Price 10 functions.}
	\label{fig:main_paper_fig2}
\end{figure}
For a budget $T$, in Figure \ref{fig:main_paper_fig1} we show the average regret and the cumulative time per function evaluation. 
In Figure~\ref{fig:main_paper_fig2} we show the leaf set size per iteration for Ada-BKB and AdaGP-UCB. 
We added a time threshold of 600 seconds. 
The red vertical line in Figure~\ref{fig:main_paper_fig1} and~\ref{fig:main_paper_fig2}, if present, 
indicates the (mean) iteration in which the early stopping condition is satisfied. 
We do not interrupt the execution just to show the behaviour of the algorithm 
(as you can notice in leaf set size plots, after the red line leaf set of Ada-BKB has cardinality $1$).
From second column of Figure~\ref{fig:main_paper_fig1}, we immediately note that 
AdaGP-UCB and Ada-BKB scale better with the search space dimension, but 
for low dimensional spaces (as Six-Hump Camel) AdaGP-UCB is more time consuming than GP-UCB. 
This is because for small dimensions we used small discretizations (15 points per dimension, see Appendix~\ref{app:experiements}) and hence the computations 
to build the matrices are cheap, 
while for the adaptive discretization 
we have to perform the expansion procedure. 
This is not necessarily always true for Ada-BKB thanks to the pruning procedure 
that let us balance the cost of expansion with the cost of evaluating the index. 
More experiments in Appendix~\ref{app:other_exp}.

\paragraph*{Hyper-parameter tuning.}
We performed experiments to tune the hyper-parameters of a 
recently proposed large scale kernel method~\citep{rudi2017falkon}. 
We compared Ada-BKB with AdaGP-UCB and BKB in minimizing 
the target function $f$ which takes as inputs a set of hyper-parameters 
to compute a hold-out cross-validation estimate of the error using 40\% of the data. 
The method in~\citep{rudi2017falkon} is based on a \Nystrom{} approximation of 
kernel ridge regression. 
In our experiments, we used a Gaussian kernel $k$ and tuned a lengthscale parameters $\sigma_1,\cdots,\sigma_p$ in each of the $p$ input dimensions. Indeed, we fixed the ridge parameters and the centers of the \Nystrom{} approximation 
(see Appendix~\ref{app:hpo_details} for details). 
We considered also BKB on a random discretization of size equal to 
the number of points evaluated by Ada-BKB, called {Random-BKB} in the following.
Again, for each experiment, we report mean and $95\%$ confidence interval using 5 repetitions. 
We added a time-threshold of 20 minutes.
\begin{figure}[H]
	\centering
	\includegraphics[width=0.32\linewidth]{./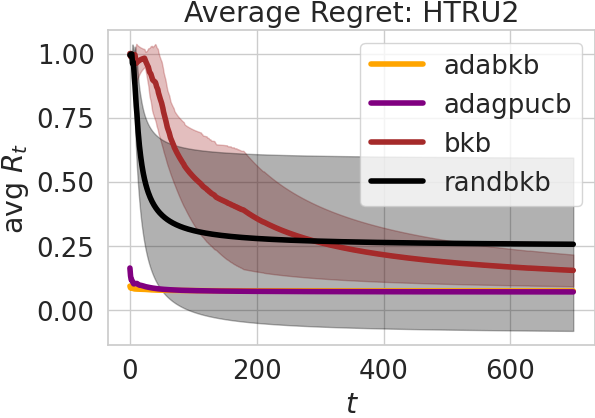}
	\includegraphics[width=0.32\linewidth]{./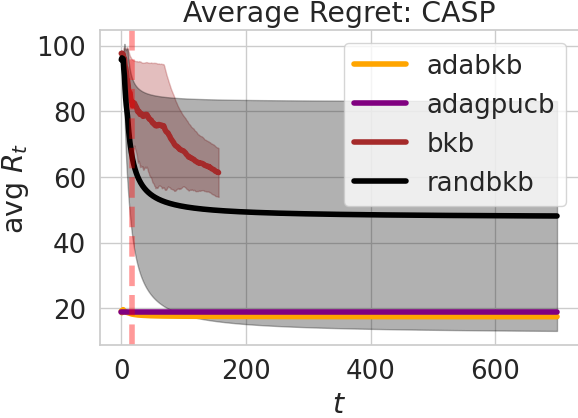}
	\includegraphics[width=0.32\linewidth]{./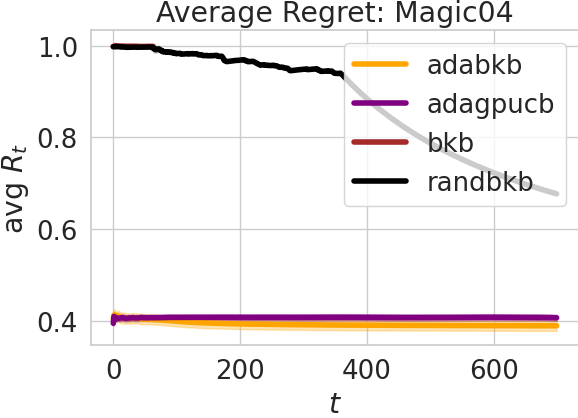}
	\includegraphics[width=0.32\linewidth]{./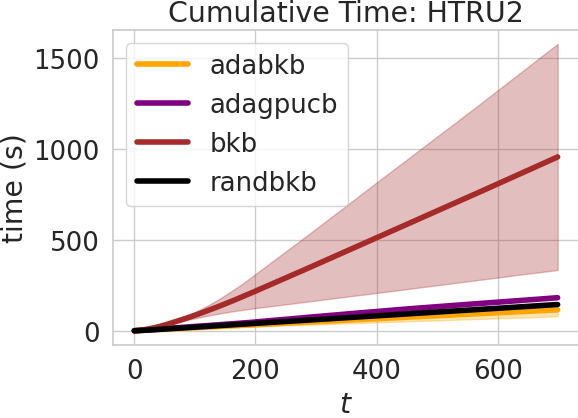}
	\includegraphics[width=0.32\linewidth]{./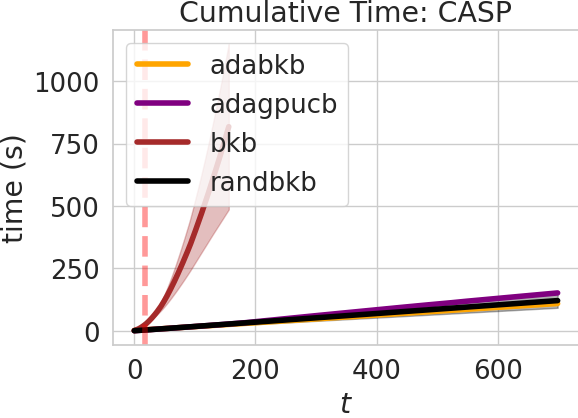}
	\includegraphics[width=0.32\linewidth]{./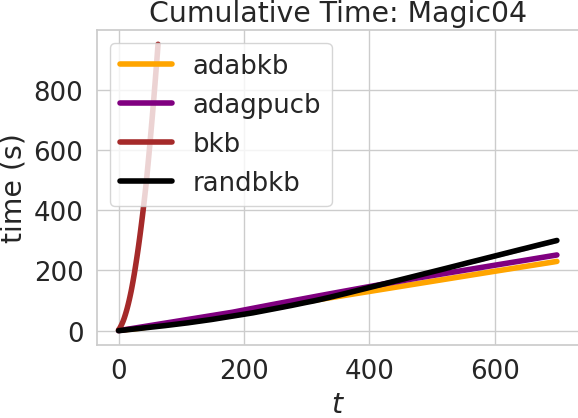}
	\caption{Average regret and cumulative time in optimizing the target function on HTRU2, CASP and Magic04 dataset.}
	\label{fig:flk_reg_time}
\end{figure}
\begin{figure}[H]
	\centering
	\vspace{.3in}
	\includegraphics[width=0.32\linewidth]{./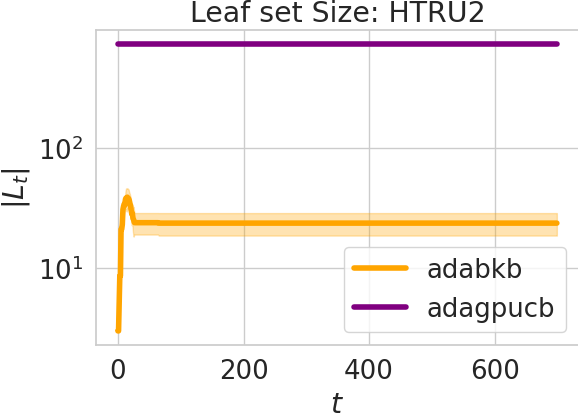}
	\includegraphics[width=0.32\linewidth]{./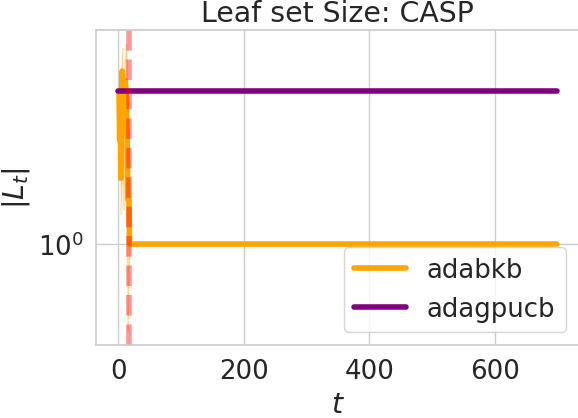}
	\includegraphics[width=0.32\linewidth]{./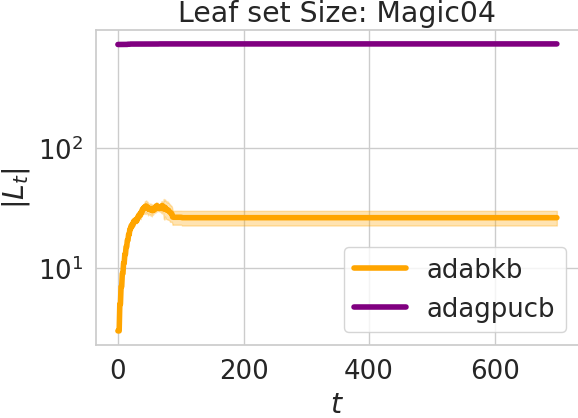}
	\vspace{.3in}
	\caption{Leaf set size in optimizing the target function on HTRU2, CASP and Magic04 dataset.}
	\label{fig:flk_lset_size}
\end{figure}

In Figure~\ref{fig:flk_reg_time}, we note that Ada-BKB obtains smaller or similar regret to other algorithms. 
In terms of time, Ada-BKB is typically the fastest method. 
In some cases, we note that Random BKB can obtain similar time performance than Ada-BKB, but typically the regret is larger, see e.g. the first line of Figure~\ref{fig:flk_reg_time}. 
Finally, we report the test error obtained fitting the model with the hyper-parameter configuration found by Ada-BKB and 
the time nedeed to perform every function evaluation until the budget or the time threshold is reached.
\begin{table}[H]
	\caption{Mean $\pm$ standard deviation of test error (MSE) using the configuration found by the algorithms with $5$ repetition}
	\begin{center}
		\begin{tabular}{l l l l} %

		\textbf{ALGORITHM} & \textbf{HTRU2} & \textbf{CASP} & \textbf{MAGIC04} \\\hline\\ 
		BKB & $0.067 \pm 0.004$ & $33.67 \pm 17.79$ & $0.99 \pm 0.0005$\\
		Random BKB & $0.24 \pm 0.34$ & $47.79 \pm 35.81$ &  $0.412 \pm 0.01$\\
		Ada-BKB & \textbf{0.068 $\pm$ 0.005} & \textbf{17.07 $\pm$ 0.09}& \textbf{ 0.383 $\pm$ 0.014}\\
		AdaGPUCB & $0.071 \pm 0.003$ & $18.65 \pm 0.34$ &  $0.389 \pm 0.010$
	\end{tabular}
\end{center}
\end{table}
\begin{table}[H]
\caption{Mean $\pm$ standard deviation of time (seconds) used for perform every function evaluation or before interruption with $5$ repetition}
\begin{center}
	\begin{tabular}{l l l l}%

	\textbf{ALGORITHM} & \textbf{HTRU2} & \textbf{CASP} & \textbf{MAGIC04}\\\hline\\ %
	BKB & $956.26 \pm 622$ & $818.21 \pm 332$ & $ 950.98 \pm 1.30$\\
	Random BKB & $144.47 \pm 5.09$ & $120.82 \pm 28.85$ & $299.63 \pm 5.05$ \\
	Ada-BKB & \textbf{115.21 $\pm$ 35.65} & \textbf{109.12 $\pm$ 1.09} & \textbf{230.06 $\pm$ 3.61} \\
	AdaGPUCB & $181.91 \pm 1.81$ & $151.57 \pm 0.59$ &$251.48 \pm 0.65$ 
\end{tabular}  
\end{center}

\end{table}
\section{CONCLUSION}\label{sec:conclusion}
In this paper, we presented a scalable approach to Gaussian Process optimization on continuous domains, combining ideas from BKB and optimistic optimization. The proposed approach is analyzed theoretically in terms of regret guarantees, showing that improved efficiency can be achieved with no loss of accuracy. Empirically we report very good performances on both simulated data and a hyper-parameter tuning task. Our work opens a number of possible research directions. For example, efficiency could be further improved using experimentation batching, see~\citep{calandriello2020near}. Another interesting question could be to extend the ideas in the paper to other way to define upper function estimates for example based on expected improvements~\citep{qin2017improving}. 

\subsubsection*{Acknowledgments}
This material is based upon work supported by the Center for Brains, Minds and Machines (CBMM), funded by NSF STC award CCF-1231216, and the Italian Institute of Technology. We gratefully acknowledge the support of NVIDIA Corporation for the donation of the Titan Xp GPUs and the Tesla k40 GPU used for this research.
L. R. acknowledges the financial support of the European Research Council (grant SLING 819789), the AFOSR projects FA8655-20-1-7028, FA9550-18-1-7009, FA9550-17-1-0390 and BAA-AFRL-AFOSR-2016-0007 (European Office of Aerospace Research and Development), and the EU H2020-MSCA-RISE project NoMADS - DLV-777826.
S. V. acknowledges the support of GNAMPA 2020: ``Processi evolutivi con memoria descrivibili tramite equazioni integro-differenziali"
\newpage

\bibliography{./bibliography.bib}

\clearpage
\appendix

\thispagestyle{empty}

\section{AUXILIARY RESULTS} 
In the following, we state the propositions and 
lemmas required to prove Theorem~\ref{thm:reg_bounds}. 
\begin{prop}
	\label{prop:upperbound}
	~\cite[App. D, Theorem 9]{calandriello2019gaussian}
	Let $\epsilon \in (0,1)$, $\delta \in (0,1)$, $\lambda > 0$, $F = \nor{f}_\mathcal{H}$
	and let $\bar{\alpha} = \frac{1 + \epsilon}{1 - \epsilon}$.
	Then, with probability at least $1 - \delta$ and for all $t > 0$:
	\begin{equation*}
		\ti{\mu}_t(x) - {\beta}_t \ti{\sigma}_t(x) \leq f(x) \leq \ti{\mu}_t(x) + {\beta}_t \ti{\sigma}_t(x)
	\end{equation*}
	with
	\begin{equation}\label{eqn:beta}
		\beta_t = 2\lambda^2 \sqrt{\bar{\alpha} \log(\kappa^2 t) \Big( \sum\limits_{s = 1}^{t} \ti{\sigma}_t^2(x_s) \Big) + \log (1/\delta)} + \Big( 1 + \frac{1}{\sqrt{1 - \epsilon}}\Big)\sqrt{\lambda}F
	\end{equation}
\end{prop}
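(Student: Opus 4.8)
Since this statement is quoted verbatim from \citep{calandriello2019gaussian}, the plan is to reconstruct the confidence-bound argument that underlies it. The natural starting point is to pass to the feature-space picture: write $k(x,x')=\scal{\phi(x)}{\phi(x')}$ for a feature map $\phi:\X\to\hh$, and introduce the empirical covariance operator $C_t=\sum_{s=1}^t \phi(x_s)\otimes\phi(x_s)$. Then $\mu_t,\sigma_t$ in~\eqref{eqn:exact_mu_sig} are the ridge-regression mean and the posterior standard deviation associated with $(C_t+\la I)$, while the \Nystrom{} quantities $\ti\mu_t,\ti\sigma_t$ correspond to replacing $\phi$ by its orthogonal projection $P\phi$ onto $\spn{\phi(x_i):x_i\in S_t}$, i.e.\ to the projected operator $\ti C_t=P C_t P$. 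The target is $f=\scal{f}{\phi(\cdot)}$ with $\nor{f}=F$, and I would decompose the pointwise error as
\begin{equation*}
	f(x)-\ti\mu_t(x) = \underbrace{\big(f(x)-\E[\ti\mu_t(x)]\big)}_{\text{bias}} + \underbrace{\big(\E[\ti\mu_t(x)]-\ti\mu_t(x)\big)}_{\text{noise}},
\end{equation*}
where the expectation is over the sub-Gaussian noise $(\eps_s)_s$ only.

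First I would control the two \emph{approximation} ingredients that make $\ti C_t$ a faithful proxy for $C_t$. The variance-proportional resampling used by BKB draws each inducing point with probability tied to $\ti\sigma^2_{t-1}(x_i)$, which are surrogates for the ridge leverage scores; the key lemma (proved by a matrix-Bernstein / martingale argument, since the sampling distribution itself depends on the past) is the spectral equivalence
\begin{equation*}
	(1-\epsilon)(C_t+\la I)\preceq \ti C_t+\la I\preceq (1+\epsilon)(C_t+\la I)
\end{equation*}
holding with probability $\ge 1-\delta$ for all $t$ simultaneously. This equivalence lets me relate $\ti\sigma_t$ to $\sigma_t$ up to the factor $(1\pm\epsilon)$ and bounds the bias term by $\big(1+1/\sqrt{1-\epsilon}\big)\sqrt{\la}F\,\ti\sigma_t(x)$, which is exactly the second summand of $\beta_t$ (the $\sqrt\la F$ scale is the usual regularization bias $\nor{(C_t+\la I)^{-1/2}\phi(x)}\cdot\sqrt\la\nor{f}$, inflated by the spectral ratio). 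For the noise term I would invoke a self-normalized concentration inequality of Abbasi-Yadkori type: the martingale $\sum_s \phi(x_s)\eps_s$, normalized by $(\ti C_t+\la I)^{1/2}$, concentrates at a rate governed by $\log\det(I+\la^{-1}\ti{K}_{S_t})$. Bounding this log-determinant by $\bar\alpha\log(\kappa^2 t)\sum_{s=1}^t\ti\sigma_t^2(x_s)$ (using $k(x,x)\le\kappa^2$ and the elementary $\log(1+u)\le u$) produces the first summand $2\la^2\sqrt{\zeta_t+\log(1/\delta)}\,\ti\sigma_t(x)$. Collecting the two bounds and dividing through by $\ti\sigma_t(x)$ yields the claimed $\beta_t$, and the two-sided inequality follows by symmetry.

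The main obstacle is obtaining a \emph{uniform-in-$t$} spectral guarantee while the sampling distribution is itself adaptive: because $S_t$ is resampled using $\ti\sigma^2_{t-1}$, the leverage-score estimates are random and correlated with the history, so the matrix-concentration step must be run as a martingale / stopping-time argument rather than a fixed-design bound, and one must verify that the surrogate scores never badly underestimate the true ridge leverage scores along the entire trajectory. The secondary difficulty is threading the $(1\pm\epsilon)$ factors consistently so that the final bound is expressed in terms of the \emph{approximate} variance $\ti\sigma_t$, the quantity the algorithm can actually compute, rather than the exact $\sigma_t$; this is what fixes the precise constants $2\la^2$ and $1+1/\sqrt{1-\epsilon}$ appearing in $\beta_t$.
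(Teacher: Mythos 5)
Proposition~\ref{prop:upperbound} is not proved in this paper at all: it is imported verbatim, with citation, from \citet[App.~D, Theorem~9]{calandriello2019gaussian}, so the only internal ``proof'' is the reference. Your blind reconstruction --- the projected feature-space view of the \Nystrom{} estimator, the $\epsilon$-accurate dictionary guarantee $(1-\epsilon)(C_t+\lambda I)\preceq \ti{C}_t+\lambda I\preceq(1+\epsilon)(C_t+\lambda I)$ obtained by an adaptive matrix-martingale argument, the bias/noise split with the regularization-plus-projection bias absorbed into $\big(1+1/\sqrt{1-\epsilon}\big)\sqrt{\lambda}F$, and the self-normalized Abbasi-Yadkori bound with the log-determinant controlled by $\bar{\alpha}\log(\kappa^2 t)\sum_{s=1}^t \ti{\sigma}_t^2(x_s)$ --- is exactly the architecture of the proof in that cited source, so your approach is essentially the same as the one this statement rests on.
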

We show that the index function $I_t(\cdot)$ (eq.~\eqref{eqn:index}) 
is an upper bound on the maximum value of the function $f$ in a cell:

\begin{prop}[Upper bound on maximum of the function $f$]\label{prop:max_upperbound}
	Supposing Assumption~\ref{asm:smoothd_k} holds and assuming $f \in \hh_k$, let $f(x^{*}_{h,i})$ be the maximum of $f$ in cell $X_{h,i}$ and let $x_{h, i}$ be a point in the same cell. For an arbitrary number of children per cell $N \geq 1$, setting $\beta_t$ as defined in Proposition~\ref{prop:upperbound} and with $V_h$ defined in equation~\eqref{eqn:vh}, with probability at least $1 - \delta$, for all $h \geq 0$, $1 \leq i \leq N^h$ and for all $t > 0$, we have:
	\begin{equation*}
		f(x^{*}_{h,i}) \leq I_t(x_{h,i})
	\end{equation*}
	with $I_t(\cdot)$ index function defined in~\eqref{eqn:index}
\end{prop}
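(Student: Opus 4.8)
The plan is to unfold the minimum inside the index function and bound $f(x^{*}_{h,i})$ against each of its two arguments separately, in both cases using Proposition~\ref{prop:upperbound} to pass from $f$ to the approximate upper estimate $\ti{f}_t$, and Lemma~\ref{lm:val_vh} to control the oscillation of $f$ across a cell. Writing the index function as $I_t(x_{h,i}) = \min\{\,\ti{f}_t(x_{h,i}) + V_h,\ \ti{f}_t(x_{p(h,i)}) + V_{h-1} + V_h\,\}$, it suffices to establish the two inequalities
\[
f(x^{*}_{h,i}) \leq \ti{f}_t(x_{h,i}) + V_h \qquad\text{and}\qquad f(x^{*}_{h,i}) \leq \ti{f}_t(x_{p(h,i)}) + V_{h-1} + V_h,
\]
since $f(x^{*}_{h,i})$ is then no larger than the minimum of the two.

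For the first inequality, I would observe that both the maximizer $x^{*}_{h,i}$ and the centroid $x_{h,i}$ lie in the same cell $X_{h,i}$, so Lemma~\ref{lm:val_vh} yields $f(x^{*}_{h,i}) - f(x_{h,i}) \leq V_h$. Applying the confidence bound $f(x_{h,i}) \leq \ti{f}_t(x_{h,i})$ from Proposition~\ref{prop:upperbound} then gives the claim. For the second inequality, I would use that the child cell is contained in its parent, $X_{h,i} \subseteq X_{p(h,i)}$, so that $x^{*}_{h,i}$ and the parent centroid $x_{p(h,i)}$ both lie in the parent cell, which sits at depth $h-1$. Invoking Lemma~\ref{lm:val_vh} at that depth gives $f(x^{*}_{h,i}) - f(x_{p(h,i)}) \leq V_{h-1}$, and combining this with $f(x_{p(h,i)}) \leq \ti{f}_t(x_{p(h,i)})$ and the nonnegativity $V_h \geq 0$ completes the bound.

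Both inequalities rest on the single high-probability event of Proposition~\ref{prop:upperbound}, which holds simultaneously for all $t > 0$, whereas Lemma~\ref{lm:val_vh} is deterministic and holds uniformly over every $(h,i)$; consequently the conclusion holds with probability at least $1-\delta$ uniformly in $h$, $i$ and $t$, exactly as stated. The only point requiring care is the root case $h=0$, where the parent term is vacuous: here one adopts the convention $V_{-1} = +\infty$ (equivalently, the minimum is realized by its first argument), so the bound reduces to $f(x^{*}_{0,1}) \leq \ti{f}_t(x_{0,1}) + V_0$, which is precisely the first inequality.

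I do not anticipate a genuine obstacle here: the argument is a clean two-sided application of the confidence bound together with the cell-oscillation lemma. The main thing to get right is the bookkeeping of depths—pairing the centroid $x_{h,i}$ with $V_h$ and the parent centroid $x_{p(h,i)}$ with $V_{h-1}$—and checking that the containment $X_{h,i} \subseteq X_{p(h,i)}$ genuinely places $x^{*}_{h,i}$ inside the parent cell, so that Lemma~\ref{lm:val_vh} may be applied legitimately at depth $h-1$.
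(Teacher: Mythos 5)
Your proof is correct and follows essentially the same route as the paper's: bounding $f(x^{*}_{h,i})$ against each argument of the minimum separately, using Lemma~\ref{lm:val_vh} for the cell oscillation (at depth $h$ for the centroid and at depth $h-1$ for the parent, via the containment $X_{h,i} \subseteq X_{p(h,i)}$) and Proposition~\ref{prop:upperbound} for the confidence bound. Your explicit treatment of the root case $h=0$ matches the remark the paper states immediately after its proof.
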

\begin{proof}
	Let $p$ be the parent function of $(X_h)_{h \in \mathbb{N}}$. For all 
	$t > 0$, the index function $I_t$ is defined as follow:
	\begin{equation*}
		I_t(x_{h,i}) = \min \{ \tilde{\mu}_t(x_{h,i}) + \beta_t \tilde{\sigma}_t(x_{h,i}), \tilde{\mu}_t(x_{p(h,i)}) + \beta_t \tilde{\sigma}_t(x_{p(h,i)}) +V_{h - 1}\} + V_h
	\end{equation*}
	From the definition of $V_h$ (see equation~\eqref{eqn:vh}), 
	for all $h \geq 0$ and $1 \leq i \leq N^h$:
	\begin{equation*}
		|f(x) - f(x^\prime) | \leq \nor{f}_kd_k(x, x^\prime) \leq V_h \qquad \forall x, x^\prime \in X_{h,i}
	\end{equation*}
	where $d_k$ is defined in Assumption~\ref{asm:smoothd_k}. 
	Let $x^{*}_{h,i}$ be the maximizer of $f$ in cell $X_{h,i}$ and let $x_{h,i}$ be any point in $X_{h,i}$.  
	It follows that $\forall h \geq 0$ and $1 \leq i \leq N^{h}$:
	\begin{equation*}
		f(x^{*}_{h,i}) \leq f(x_{h,i}) + V_h
	\end{equation*}
	Using Proposition~\ref{prop:upperbound} to upper bound $f(x^{*}_{h,i})$, it follows
	\begin{equation*}
		f(x^{*}_{h,i}) \leq \tilde{\mu}_t(x_{h,i}) + \beta_t\tilde{\sigma}_t(x_{h,i}) + V_h
	\end{equation*}
	for all $t > 0$ (with probability at least $1 - \delta$). 
	For the same reason and by construction of the partition tree (Definition~\ref{def:wellbehaved}), we have:
	\begin{equation*}
		f(x^{*}_{h,i}) \leq \tilde{\mu}_t(x_{p(h,i)}) + \beta_t\tilde{\sigma}_t(x_{p(h,i)}) + V_{h - 1}
	\end{equation*}
	where $V_{h -1}$ is an upper bound of the function variation at level $h - 1$. 
	Since $V_h \geq 0$,  
	\begin{equation*}
		f(x^{*}_{h,i}) \leq \tilde{\mu}_t(x_{p(h,i)}) + \beta_t \tilde{\sigma}_t(x_{p(h,i)}) + V_{h - 1} + V_h
	\end{equation*}
\end{proof}
\begin{remark}
	Note that for the root cell $(0,1)$ the parent function is not defined. In this case, the index function is defined as:
	\begin{equation*}
		I_t(x_{0,1}) = \ti{\mu}_t(x_{0,1}) + \beta_t \ti{\sigma}_t(x_{0,1}) + V_0
	\end{equation*}
\end{remark}
Let $x^{*}$ be a global maximizer of the function $f$ and suppose $x^* \in X_{h,i^{*}}$. 
Let $x_{h,i^{*}}$ be the centroid of $X_{h,i^{*}}$. 
Then, Proposition~\ref{prop:max_upperbound} implies that with probability at least $1 - \delta$, %
\begin{equation*}
	f(x^{*}) \leq I_t(x_{h,i^{*}})
\end{equation*}
Now, we procede providing an upper-bound $U_V$ of the ratio $\frac{V_{h}}{V_{h+1}}$ described by the following Proposition.
\begin{prop}\label{prop:ratio_upperbound}
	Suppose Assumption~\ref{asm:bound_g} holds and 
	set $h_0 = \frac{\log(\delta_k/v_1)}{\log(\rho)}$. 
	For all $h \geq 0$, %
	\begin{equation}
		\frac{V_{h}}{V_{h+1}} \leq \max \Big\{ \max\limits_{0 \leq h \leq h_0 - 1} \frac{V_{h}}{V_{h+1}}, \frac{C^\prime_k}{C_k} \rho^{-\alpha} \Big\} =: U_V
	\end{equation}
\end{prop}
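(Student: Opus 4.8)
The plan is to reduce everything to the scalar profile $g$ and then split the range of $h$ at the threshold $h_0$, which is exactly the depth beyond which the two-sided power-law estimate of Assumption~\ref{asm:bound_g} becomes available. By Lemma~\ref{lm:val_vh} we have $V_h = Fg(v_1\rho^h)$, so the norm $F=\|f\|$ cancels in the ratio and
\begin{equation*}
	\frac{V_h}{V_{h+1}} = \frac{g(v_1\rho^h)}{g(v_1\rho^{h+1})}.
\end{equation*}
Hence it suffices to control this ratio of values of $g$, and the whole statement becomes a purely scalar estimate.

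First I would locate the regime in which the bound $C_k r^\alpha \le g(r) \le C_k^\prime r^\alpha$ applies, i.e. when $r = v_1\rho^h \le \delta_k$. Since $\rho\in(0,1)$ the quantity $\log\rho$ is negative, so $v_1\rho^h \le \delta_k$ is equivalent to $h \ge \log(\delta_k/v_1)/\log\rho = h_0$; getting this sign flip right is the one routine point to watch. Because $\rho<1$, for any $h\ge h_0$ one automatically also has $v_1\rho^{h+1} < v_1\rho^h \le \delta_k$, so both arguments appearing in numerator and denominator of the ratio lie simultaneously in the regime covered by Assumption~\ref{asm:bound_g}.

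For the case $h \ge h_0$, I would bound the numerator from above and the denominator from below using Assumption~\ref{asm:bound_g}:
\begin{equation*}
	\frac{V_h}{V_{h+1}} = \frac{g(v_1\rho^h)}{g(v_1\rho^{h+1})} \le \frac{C_k^\prime\,(v_1\rho^h)^\alpha}{C_k\,(v_1\rho^{h+1})^\alpha} = \frac{C_k^\prime}{C_k}\,\rho^{-\alpha},
\end{equation*}
where the factors $v_1^\alpha$ cancel and $\rho^{h\alpha}/\rho^{(h+1)\alpha} = \rho^{-\alpha}$. For the complementary case $0 \le h < h_0$ there are only finitely many integer indices (as $h_0$ is finite), so the ratio is trivially bounded by $\max_{0\le h\le h_0-1} V_h/V_{h+1}$. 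Taking the maximum of the two case bounds gives $U_V$ and proves the claim.

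The main thing to be careful about here is bookkeeping rather than a genuine obstacle: the direction of the inequality when taking logarithms with base $\rho<1$, together with the observation that $h\ge h_0$ forces both $v_1\rho^h$ and the strictly smaller $v_1\rho^{h+1}$ below $\delta_k$, so that the two-sided bound can be applied to numerator and denominator at once. Once the threshold $h_0$ is correctly identified, the remainder is the elementary cancellation displayed above.
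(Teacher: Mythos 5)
Your proposal is correct and follows essentially the same route as the paper's proof: reduce the ratio to $g(v_1\rho^h)/g(v_1\rho^{h+1})$, apply the two-sided bound of Assumption~\ref{asm:bound_g} for $h \geq h_0$ to get $\frac{C^\prime_k}{C_k}\rho^{-\alpha}$, and handle the finitely many depths $h < h_0$ by the trivial maximum. Your explicit remarks on the sign flip from $\log\rho < 0$ and on $v_1\rho^{h+1} \leq \delta_k$ also holding are minor clarifications of steps the paper leaves implicit, not a different argument.
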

\begin{proof}
	Using the definition of $V_h$ (Equation~\eqref{eqn:vh}), we can write the ratio as:
	\begin{equation*}
		\frac{V_h}{V_{h+1}} = \frac{Fg(v_1\rho^h)}{Fg(v_1\rho^{h+1})} = \frac{g(v_1 \rho^h)}{g(v_1 \rho^{h + 1})}
	\end{equation*}
	Now, we have that $\exists \delta_k > 0$ such that:
	\begin{equation*}
		C_k v_1^\alpha \rho^{h\alpha} \leq g(v_1\rho^h) \leq C^\prime_k v_1^\alpha \rho^{h\alpha} \qquad \forall v_1\rho^h \leq \delta_k
	\end{equation*}
	then for all $v_1\rho^h$ lower than $\delta_k$, we can write:
	\begin{align}
		\frac{V_h}{V_{h+1}} &= \frac{g(v_1 \rho^h)}{g(v_1 \rho^{h + 1})}\\
		&\leq \frac{ C^{\prime}_k v_1^\alpha \rho^{h\alpha}}{ C_k v_1^\alpha \rho^{h\alpha + \alpha}}\\
		&= \frac{C^\prime_k}{C_k} \frac{1}{\rho^\alpha} =\frac{C^\prime_k}{C_k}\rho^{-\alpha}
	\end{align}
	Now, to conclude the proof, it is enough to observe that in Assumption~\ref{asm:bound_g}
	\begin{equation*}
		(\forall h \geq h_0) \qquad v_1\rho^h \leq \delta_k
	\end{equation*}
	For $h < h_0$, we can upper bound the ratio $\frac{V_h}{V_{h+1}}$ just 
	with the maximum of the ratios for all $h \in [0, h_0 -1]$. So the statement follows.%
\end{proof}
Proposition~\ref{prop:ratio_upperbound} states that $\forall h \geq 0$ 
we have $V_h \leq U_VV_{h+1}$ and 
this fact is exploited in the following lemma 
which give us information about the points selected by the algorithm.
\begin{lemma}\label{lem:subopt}
	Suppose that Assumptions~\ref{asm:smoothd_k},\ref{asm:bound_g},\ref{asm:cell_radius} hold. 
	Set $\beta_t$ as in eq.~\eqref{eqn:beta}, define $V_h$ as in \eqref{eqn:vh}, 
	and let $f(x^{*})$ be the global maximum of $f$. 
	If at time $t$, $x_{h_t,i_t} \in L_\tau$ is \textbf{evaluated} 
	then with probability at least $1 - \delta$:
	\begin{equation*}
		f(x^{*}) - f(x_{h_t, i_t}) \leq (4U_V + 1)V_{h_t}
	\end{equation*}
	Moreover, if $h<h_{\text{max}}$ then 
	\begin{equation*}
		f(x^{*}) - f(x_{h_t, i_t}) \leq 3\beta_t\tilde{\sigma}_t(x_{h_t, i_t})
	\end{equation*}
\end{lemma}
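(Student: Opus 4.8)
The plan is to carry out a fairly standard optimistic-optimization argument, but routed through the approximate (Nyström) confidence bounds rather than the exact ones. I would work throughout on the probability-$(1-\delta)$ event of Proposition~\ref{prop:upperbound}, on which the two-sided bounds $\ti{\mu}_t(x)-\beta_t\ti{\sigma}_t(x)\le f(x)\le \ti{\mu}_t(x)+\beta_t\ti{\sigma}_t(x)$ hold simultaneously for all $t$ and all $x$. The single structural fact I would start from is that the evaluated centroid $x_{h_t,i_t}$ is by construction the maximizer of $I_t$ over the current leaf set $L_\tau$. Since the cells indexed by $L_\tau$ partition $X$, the global maximizer $x^{*}$ lies in one leaf cell whose centroid sits in $L_\tau$, so Proposition~\ref{prop:max_upperbound} gives $f(x^{*})\le I_t(\text{that centroid})\le I_t(x_{h_t,i_t})$. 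Everything then follows by expanding the two arguments of the minimum defining $I_t$.

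For the second (variance) bound I would use the own-UCB term of the minimum: $f(x^{*})\le \ti{\mu}_t(x_{h_t,i_t})+\beta_t\ti{\sigma}_t(x_{h_t,i_t})+V_{h_t}$. Subtracting the lower confidence bound $f(x_{h_t,i_t})\ge \ti{\mu}_t(x_{h_t,i_t})-\beta_t\ti{\sigma}_t(x_{h_t,i_t})$ yields $f(x^{*})-f(x_{h_t,i_t})\le 2\beta_t\ti{\sigma}_t(x_{h_t,i_t})+V_{h_t}$. When $h_t<h_{\text{max}}$, the centroid was evaluated rather than refined exactly because the expansion test failed, i.e. $\beta_t\ti{\sigma}_t(x_{h_t,i_t})>V_{h_t}$; substituting this gives $f(x^{*})-f(x_{h_t,i_t})<3\beta_t\ti{\sigma}_t(x_{h_t,i_t})$, the claimed inequality.

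For the first bound the $V_{h_t}$-estimate cannot come from the child's own term, since an evaluated cell has a \emph{wide} interval ($\beta_t\ti{\sigma}_t>V_{h_t}$); I would instead use the parent term of the minimum. Writing $x_p:=x_{p(h_t,i_t)}$, a cell enters the leaf set only when its parent is refined, and a cell is refined only when $\beta_t\ti{\sigma}_t(x_p)\le V_{h_t-1}$. Feeding this into $I_t(x_{h_t,i_t})\le \ti{\mu}_t(x_p)+\beta_t\ti{\sigma}_t(x_p)+V_{h_t-1}+V_{h_t}$, then using $\ti{\mu}_t(x_p)\le f(x_p)+\beta_t\ti{\sigma}_t(x_p)$, the variation bound $f(x_p)\le f(x_{h_t,i_t})+V_{h_t-1}$ from Lemma~\ref{lm:val_vh} (both centroids lie in the parent cell), and finally $V_{h_t-1}\le U_V V_{h_t}$ from Proposition~\ref{prop:ratio_upperbound}, I would obtain $f(x^{*})-f(x_{h_t,i_t})\le 4V_{h_t-1}+V_{h_t}\le (4U_V+1)V_{h_t}$. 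Note that this chain never invokes the child's own variance condition, so it applies verbatim to evaluated cells with $h_t=h_{\text{max}}$ as well.

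The hard part will be transporting the parent's refinement guarantee in time: $\beta\ti{\sigma}(x_p)\le V_{h_t-1}$ is certified at the earlier step when the parent was refined, whereas $I_t$ uses $\ti{\sigma}_t$ recomputed on the enlarged dataset. Making the step $\beta_t\ti{\sigma}_t(x_p)\le V_{h_t-1}$ legitimate at the current time requires the confidence width $\beta_t\ti{\sigma}_t(\cdot)$ to be essentially non-increasing in $t$ at a fixed point. The exact posterior variance is monotone as data accrue, but BKB re-samples its Nyström dictionary at every step, so I expect to rely on the high-probability comparison between the approximate and exact posterior variances from \citep{calandriello2019gaussian} to get approximate monotonicity; reconciling this with the off-by-one between the test variance $\ti{\sigma}_{t-1}$ appearing in Algorithm~\ref{alg:1} and the confidence variance $\ti{\sigma}_t$ used above is the delicate bookkeeping. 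The only case the parent argument misses is the root cell $(0,1)$, which has no parent; there $h_0=0<h_{\text{max}}$, so the variance bound of the previous paragraph covers it.
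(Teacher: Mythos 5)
Your proposal is correct and follows essentially the same route as the paper's proof: the chain $f(x^{*})\le I_t(x_{h^{*}_t,i^{*}_t})\le I_t(x_{h_t,i_t})$, the parent term of the minimum combined with $\beta_t\ti{\sigma}_t(x_{p(h_t,i_t)})\le V_{h_t-1}$, the variation bound between parent centroid and child centroid, and Proposition~\ref{prop:ratio_upperbound} for the first bound, then the child's own term with the failed expansion test $\beta_t\ti{\sigma}_t(x_{h_t,i_t})>V_{h_t}$ for the second. The temporal subtlety you flag (transporting the parent's refinement certificate to the current time $t$, and the $\ti{\sigma}_{t-1}$ versus $\ti{\sigma}_t$ bookkeeping) is genuine, but the paper's own proof simply asserts $\beta_t\ti{\sigma}_t(x_{p(h_t,i_t)})\le V_{h_t-1}$ at time $t$ without addressing it, so your argument is, if anything, more careful than the original.
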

\begin{proof}
	According to the Proposition~\ref{prop:upperbound}, 
	setting $\beta_t$ as in eq.~\eqref{eqn:beta}, we have that
	\begin{equation*}
		\tilde{\mu}_t(x) - \beta_t \tilde{\sigma}_t(x) \leq f(x) \leq \tilde{\mu}_t(x) + \beta_t \tilde{\sigma}_t(x)
	\end{equation*}
	with probability of $1 - \delta$. From equation~\eqref{eqn:vh}), we have for all $h \geq 0$ and $1 \leq i \leq N^h$:
	\begin{equation*}
		\sup\limits_{x_1, x_2 \in X_{h,i}} | f(x_1) - f(x_2) | \leq V_h
	\end{equation*}
	Suppose that at time $t$ $x^{*}$ is contained in the cell $X_{h^{*}_t, i^{*}_t}$ represented by $x_{h^{*}_t,i^{*}_t} \in L_\tau$
	and that the algorithm selects and evaluate the point $x_{h_t,i_t}$. 
	From Proposition~\ref{prop:max_upperbound}, with probability at least $1 - \delta$, we have that 
	\begin{equation}
		f(x^{*}) \leq I_t(x_{h^{*}_t,i^{*}_t}).
	\end{equation}
	Since the algorithm selected $x_{h_t,i_t}$, according to the selection rule (row 5 of Algorithm~\ref{alg:1}) it follows that
	\begin{equation}
		I_t(x_{h^{*}_t,i^{*}_t}) \leq I_t(x_{h_t,i_t}).
	\end{equation}
	We recall that $I_t$ is defined as:
	\begin{equation*}
		I_t(x_{h_t,i_t}) = \min\{ \tilde{\mu}_t(x_{h_t,i_t}) + \beta_t\tilde{\sigma}_t(x_{h_t,i_t}), \tilde{\mu}_t(x_{p(h_t,i_t)}) + \beta_t\tilde{\sigma}_t(x_{p(h_t,i_t)}) + V_{h_t - 1} \} + V_{h_t}
	\end{equation*}
	therefore
	\begin{equation}
		\label{eqn:in1}
		f(x^*) \leq I_t(x_{h^{*}_t,i^{*}_t}) \leq I_t(x_{h_t,i_t}) \leq \tilde{\mu}_t(x_{p(h_t,i_t)}) + \beta_t\tilde{\sigma}_t(x_{p(h_t,i_t)}) + V_{h_t - 1} + V_{h_t}.
	\end{equation}
	In the rest of the proof we upper bound the right hand side. 
	Proposition~\ref{prop:upperbound}, yields (with probability at least $1 - \delta$):
	\begin{equation*}
		f(x_{p(h_t,i_t)}) \geq \tilde{\mu}_t(x_{p(h_t,i_t)}) - \beta_t\tilde{\sigma}_t(x_{p(h_t,i_t)}),
	\end{equation*}
	and therefore
	\begin{equation}
		\tilde{\mu}_t(x_{p(h_t,i_t)}) + \beta_t\tilde{\sigma}_t(x_{p(h_t,i_t)}) + V_{h_t - 1} + V_{h_t} \leq f(x_{p(h_t,i_t)}) + 2\beta_t\tilde{\sigma}_t(x_{p(h_t,i_t)}) + V_{h_t - 1} + V_{h_t}
	\end{equation}
	Since the algorithm evaluated $x_{h_t,i_t}$, then $\tilde{\beta}\tilde{\sigma}_t(x_{p(h_t,i_t)}) \leq V_{h_t - 1}$ therefore
	\begin{equation}
		f(x_{p(h_t,i_t)}) + 2\beta_t\tilde{\sigma}_t(x_{p(h_t,i_t)}) + V_{h_t - 1} + V_{h_t} \leq (f(x_{p(h_t,i_t)}) + V_{h_t - 1}) + 2V_{h_t - 1} + V_{h_t} 
	\end{equation}
	By construction of the partition tree, $x_{h_t,i_t}$ lies in the cell associated to $x_{p(h_t,i_t)}$, and so $f(x_{p(h_t,i_t)}) \leq f(x_{h_t,i_t})+V_{h_t}$. Hence, 
	\begin{equation}
		(f(x_{p(h_t,i_t)}) + V_{h_t - 1}) + 2V_{h_t - 1} + V_{h_t} \leq f(x_{h_t,i_t}) + 4V_{h_t - 1} + V_{h_t} 
	\end{equation}
	and, using Proposition~\ref{prop:ratio_upperbound}:
	\begin{equation}
		f(x_{h_t,i_t}) + 4V_{h_t - 1} + V_{h_t} \leq   f(x_{h_t,i_t}) +(4U_V + 1)V_{h_t}.
	\end{equation}
	The latter combined with \eqref{eqn:in1},  implies that 
	\begin{equation}
		\label{eqn:lem1_ub1}
		f(x^*) \leq   f(x_{h_t,i_t}) +(4U_V + 1)V_{h_t}.
	\end{equation}  
	To prove the second bound of the statement, note that
	\begin{equation}
		I_t(x_{h^{*}_t,i^{*}_t}) \leq I_t(x_{h_t,i_t}) \leq \tilde{\mu}_t(x_{h_t,i_t}) + \beta_t\tilde{\sigma}_t(x_{h_t,i_t}) + V_{h_t}
	\end{equation}
	Proposition~\ref{prop:upperbound} yields that, with probability at least $1 - \delta$
	\begin{equation*}
		f(x_{h_t,i_t}) \geq \tilde{\mu}_t(x_{h_t,i_t}) - \beta_t\tilde{\sigma}_t(x_{h_t,i_t})
	\end{equation*}
	then it follows:
	\begin{equation}
		\tilde{\mu}_t(x_{h_t,i_t}) + \beta_t\tilde{\sigma}_t(x_{h_t,i_t}) + V_{h_t} \leq f(x_{h_t,i_t}) + 2\beta_t\tilde{\sigma}_t(x_{h_t,i_t}) + V_{h_t}
	\end{equation}
	Next, if $h < h_{\text{max}}$, since $x_{h_t,i_t}$ is evaluated, then $\beta_t\tilde{\sigma}_t(x_{h_t,i_t}) > V_{h_t}$, and
	\begin{equation}
		f(x^*)\leq  \tilde{\mu}_t(x_{h_t,i_t}) + 2\beta_t\tilde{\sigma}_t(x_{h_t,i_t}) + V_{h_t} \leq f(x_{h_t,i_t}) + 3\beta_t \tilde{\sigma}_t(x_{h_t,i_t})
	\end{equation}
	In conclusion, we derive that if $h<h_{\text{max}}$
	\begin{equation*}
		f(x^{*}) - f(x_{h_t,i_t}) \leq 3\beta_t \tilde{\sigma}_t(x_{h_t,i_t}).
	\end{equation*}
\end{proof}

\begin{prop}\label{prop:bkb_regret}
	~\cite[Theorem 2]{calandriello2019gaussian}
	For any desired $0 < \epsilon < 1$, $0 < \delta < 1$, $\lambda > 0$, let $\bar{\alpha} = \frac{1 + \epsilon}{1 - \epsilon}$. 
	For $\beta_t$ defined as :
	\begin{equation*}
		\beta_t = 2\lambda^2 \sqrt{\bar{\alpha} \log(\kappa^2 t) \Big( \sum\limits_{s = 1}^{t} \ti{\sigma}_t^2(x_s) \Big) + \log (1/\delta)} + \Big( 1 + \frac{1}{\sqrt{1 - \epsilon}}\Big)\sqrt{\lambda}F
	\end{equation*}
	and $\tilde{\sigma}_t$ defined as in equation~\eqref{eqn:mu_sig}
	we have that:
	\begin{equation*}
		3\tilde{\beta}_T \sum\limits_{t = 1}^{T}\tilde{\sigma}_t (x_t) \leq \mathcal{O}(\sqrt{T}d_\text{eff}(\lambda, \tilde{X}_T)\log T)
	\end{equation*}
	where $\tilde{X}_T$ is the set containing every centroid evaluated until timestep $T$.
\end{prop}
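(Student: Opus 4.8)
Since Proposition~\ref{prop:bkb_regret} is the sum-of-variances estimate underlying the BKB regret analysis, the plan is to follow the argument of~\cite[Theorem 2]{calandriello2019gaussian}, reducing both factors on the left-hand side to the corresponding \emph{exact} posterior quantities and then invoking the effective dimension. The two objects to control are the scalar confidence width $\beta_T$ and the cumulative approximate deviation $\sum_{t=1}^T \tilde\sigma_t(x_t)$; both collapse onto $d_\text{eff}(\lambda,\tilde X_T)$ once the N\"ystrom approximation is shown to be faithful. As a first step I would apply Cauchy--Schwarz to trade the sum of standard deviations for the (easier) sum of variances,
\begin{equation*}
	\sum_{t=1}^T \tilde\sigma_t(x_t) \le \sqrt{T \sum_{t=1}^T \tilde\sigma_t^2(x_t)},
\end{equation*}
so that the whole estimate rests on bounding the cumulative approximate variance $\sum_{t=1}^T \tilde\sigma_t^2(x_t)$ together with the scalar $\beta_T$.

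The core step, and the one I expect to be the main obstacle, is the N\"ystrom variance-approximation guarantee of BKB. Because the dictionary $S_t$ is resampled proportionally to the approximate variances $\tilde\sigma_{t-1}^2(x_i)$, one shows that with probability at least $1-\delta$ the approximate posterior variance is multiplicatively faithful to the exact one, i.e. there are two-sided bounds of the form $\tilde\sigma_t^2(x)\le \bar\alpha\,\sigma_t^2(x)$ with $\bar\alpha=\frac{1+\epsilon}{1-\epsilon}$; this is precisely the origin of the constant $\bar\alpha$ appearing inside $\beta_t$. Establishing it requires (i) proving that the sampled dictionary yields an $\epsilon$-spectral approximation of the feature covariance operator, uniformly across all $T$ rounds, and (ii) a martingale/union-bound argument guaranteeing the estimate simultaneously for every $t$ despite the adaptivity of both the arm selection and the sampling distribution. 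This is the technically delicate part inherited from~\cite{calandriello2019gaussian}, and all the probabilistic bookkeeping of the statement is absorbed here.

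Granting the approximation, I would close the argument in two short moves. The cumulative \emph{exact} variance $\sum_{t=1}^T \sigma_t^2(x_t)$ equals the effective dimension $d_\text{eff}(\lambda,\tilde X_T)$ by definition~\eqref{eqn:deff} (equivalently, it is controlled by the log-determinant potential $\log\det(I+\lambda^{-1}K_T)$ via the standard information-gain inequality), so the two-sided approximation gives $\sum_{t=1}^T \tilde\sigma_t^2(x_t)=\mathcal{O}(d_\text{eff}(\lambda,\tilde X_T))$, and hence $\sum_{t=1}^T \tilde\sigma_t(x_t)=\mathcal{O}\bigl(\sqrt{T\,d_\text{eff}(\lambda,\tilde X_T)}\bigr)$. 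Inspecting the definition of $\beta_t$, the quantity under its square root is $\log(\kappa^2 T)\sum_{s}\tilde\sigma_T^2(x_s)+\log(1/\delta)=\mathcal{O}(d_\text{eff}(\lambda,\tilde X_T)\log T)$, so $\beta_T=\mathcal{O}\bigl(\sqrt{d_\text{eff}(\lambda,\tilde X_T)\log T}\bigr)$, the additive $\sqrt{\lambda}F$ term being lower order.

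Multiplying the two estimates yields
\begin{equation*}
	3\beta_T\sum_{t=1}^T \tilde\sigma_t(x_t)=\mathcal{O}\bigl(\sqrt{T}\,d_\text{eff}(\lambda,\tilde X_T)\sqrt{\log T}\bigr),
\end{equation*}
and bounding $\sqrt{\log T}\le \log T$ collects the logarithmic factors into the stated rate $\mathcal{O}(\sqrt{T}\,d_\text{eff}(\lambda,\tilde X_T)\log T)$. The only genuinely new content relative to the building blocks already stated is the interplay between the adaptive dictionary and the adaptive arm choice; everything downstream is the Cauchy--Schwarz/information-gain bookkeeping sketched above.
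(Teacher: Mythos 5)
The paper gives no proof of this proposition at all --- it is imported verbatim as~\cite[Theorem 2]{calandriello2019gaussian} --- and your sketch correctly reconstructs the cited proof's structure: Cauchy--Schwarz to pass from $\sum_t \tilde{\sigma}_t(x_t)$ to the cumulative variance, the multiplicative faithfulness $\tilde{\sigma}_t^2(x) \leq \bar{\alpha}\,\sigma_t^2(x)$ from the resparsification analysis (which the paper itself invokes elsewhere as~\cite[Theorem 1]{calandriello2019gaussian}), and the identification of $\sum_{t}\sigma_t^2(x_t)$ with $d_\text{eff}$, which is exactly this paper's definition~\eqref{eqn:deff}. Since you correctly delegate the genuinely hard probabilistic step (the uniform-in-$t$ spectral approximation under adaptive sampling) to the BKB machinery rather than claiming it, your proposal is correct and follows essentially the same route as the source the paper relies on.
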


\begin{prop}[Standard deviation upper bound]\label{prop:std_upp}
	Consider the evaluation model $y_t = f(x_t) + \eta_t$ with $\eta_t \sim \mathcal{N}(0, \sigma^2)$, 
	and let be $n_t: X \rightarrow \N$ a function which given a centroid $x_{h,i}$ 
	returns the number of times that it has been evaluated until time step $t$.  
	For a desired $\epsilon \in (0,1)$, let $\bar{\alpha} = \frac{1 + \epsilon}{1 - \epsilon}$. 
	Then, if a centroid $x_{h,i}$ is evaluated $n_t(x_{h,i})$ times we have 
	\begin{equation*}
		\ti{\sigma}_{t}(x_{h,i}) \leq \sqrt{\bar{\alpha}}\frac{\sigma}{\sqrt{n_t(x_{h,i})}}
	\end{equation*}
\end{prop}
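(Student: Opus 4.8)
The plan is to reduce the statement about the sketched (Nyström) posterior standard deviation $\ti{\sigma}_t$ to a statement about the \emph{exact} posterior variance of~\eqref{eqn:exact_mu_sig}, for which a clean closed form is available. Concretely I would proceed in two stages: first pass from $\ti{\sigma}_t$ to the exact variance $\sigma_t$ at the cost of the multiplicative factor $\bar{\alpha}$, and then bound the exact variance of a repeatedly–evaluated centroid by $\sigma^2/n_t(x_{h,i})$. The detour through the exact variance is the point of the argument, because the exact variance is monotone in the set of observations whereas $\ti{\sigma}_t$ is built from a changing, randomly re-sampled dictionary $S_t$ and so cannot be manipulated directly.

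For the first stage I would invoke the accuracy guarantee of the variance-proportional dictionary sampling underlying BKB, i.e. the same mechanism that produces Proposition~\ref{prop:upperbound}. With probability at least $1-\delta$ the dictionary $S_t$ is $\epsilon$-accurate simultaneously for all $t$, and this spectral control of the sketched covariance translates into a two-sided comparison between $\ti{\sigma}_t$ and $\sigma_t$. The direction I need is the upper one, $\ti{\sigma}_t^2(x) \le \bar{\alpha}\,\sigma_t^2(x)$ with $\bar{\alpha}=\frac{1+\epsilon}{1-\epsilon}$; the ratio $\frac{1+\epsilon}{1-\epsilon}$ is exactly what one gets by composing the $(1\pm\epsilon)$ deviations of the projected covariance. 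This step absorbs all the randomness of the sketch so that the remaining argument is deterministic.

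For the second stage I would write the exact posterior variance operator-theoretically as $\sigma_t^2(x) = \la\,\scal{k(x,\cdot)}{(C_t + \la I)^{-1} k(x,\cdot)}$, where $C_t=\sum_{s=1}^{t} k(x_s,\cdot)\otimes k(x_s,\cdot)$ is the empirical covariance operator and $\la=\sigma^2$. The key observation is monotonicity: every summand in $C_t$ is positive semidefinite, so discarding all evaluations except the $n:=n_t(x_{h,i})$ copies of $x_{h,i}$ only enlarges the inverse, giving $\sigma_t^2(x_{h,i}) \le \la\,\scal{u}{(n\,u\otimes u + \la I)^{-1} u}$ with $u=k(x_{h,i},\cdot)$ and $\nor{u}^2=k(x_{h,i},x_{h,i})$. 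A rank-one (Sherman–Morrison) computation then yields $\scal{u}{(n\,u\otimes u + \la I)^{-1} u} = \frac{\nor{u}^2}{\la + n\nor{u}^2} \le \frac{1}{n}$, hence $\sigma_t^2(x_{h,i}) \le \la/n = \sigma^2/n$. Combining the two stages gives $\ti{\sigma}_t^2(x_{h,i}) \le \bar{\alpha}\,\sigma^2/n$, and taking square roots delivers the claim.

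The main obstacle is the first stage rather than the arithmetic of the second. One must verify that the BKB accuracy event holds uniformly in $t$ and, more delicately, keep careful track of the normalization mismatch between the two variance definitions — the explicit $1/\la$ factor in~\eqref{eqn:mu_sig} versus its absence in~\eqref{eqn:exact_mu_sig} — so that the constant relating $\ti{\sigma}_t^2$ to $\sigma_t^2$ is exactly $\bar{\alpha}$ and no stray powers of $\la$ remain. Once the comparison is pinned down with the correct normalization, monotonicity of the exact variance and the single rank-one update are entirely routine.
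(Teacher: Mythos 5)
Your proposal is correct and follows essentially the same two-stage route as the paper: the paper likewise reduces $\ti{\sigma}_t$ to the exact variance via $\ti{\sigma}_t^2(x) \leq \bar{\alpha}\,\sigma_t^2(x)$ (Theorem 1 of Calandriello et al.) and then bounds the exact variance of an $n$-times-evaluated centroid by $\sigma/\sqrt{n}$, the only difference being that the paper cites Part 1 of Proposition 3 of Shekhar and Javidi for this second step while you re-derive it yourself via monotonicity of the empirical covariance operator and a rank-one Sherman--Morrison computation. Your attention to the $1/\lambda$ normalization mismatch between the definitions in~\eqref{eqn:mu_sig} and~\eqref{eqn:exact_mu_sig} is warranted (the paper chains the two citations without comment), but it does not alter the structure of the argument.
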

\begin{proof}
	\cite[Part 1 of Proposition 3]{shekhar2018gaussian} yields 
	\begin{equation*}
		\sigma_t(x_{h,i}) \leq \frac{\sigma}{\sqrt{n_t(x_{h,i})}}
	\end{equation*} 
	where $\sigma_t$ is defined as in eq.~\eqref{eqn:exact_mu_sig}. 
	\cite[Theorem 1]{calandriello2019gaussian} implies that 
	for a desired $\epsilon \in (0,1)$, 
	setting $\bar{\alpha} = \frac{1 + \epsilon}{1 - \epsilon}$, 
	$\tilde{\sigma}^2(x)$ defined in eq.~\eqref{eqn:mu_sig} satisfies 
	the following inequality:
	\begin{equation*}
		\tilde{\sigma}_t^2(x) \leq \bar{\alpha}\sigma_t^2(x)
	\end{equation*}
	Which gives%
	\begin{equation*}
		\tilde{\sigma}_t(x) \leq \sqrt{\bar{\alpha}}\sigma_t(x) \leq \sqrt{\bar{\alpha}} \frac{\sigma}{\sqrt{n_t(x)}}
	\end{equation*}
\end{proof}

\section{PROOFS OF MAIN RESULTS}\label{app:proofs}
In this appendix, we provide the proofs of Lemma~\ref{lm:val_vh} and Theorems~\ref{thm:reg_bounds}\ref{thm:comp_cost}.

\subsection{Proof of Lemma~\ref{lm:val_vh}}\label{proof:vh}  
For all $x,x'\in X_{h,i}$,
\begin{equation*}
	|f(x) - f(x')| = | \scal{f}{k(x,\cdot)-k(x', \cdot)} |\le \nor{f}d_k(x,x')\le \nor{f}g( d(x,x')) \le \nor{f}g( v_1\rho^h)
\end{equation*}

\subsection{Proof of Theorem~\ref{thm:reg_bounds}}\label{app:proof_thm2}
To prove the bound on the cumulative regret we need to introduce some objects.
First, denoting with $x_{h_t,i_t}$ the centroid of $X_{h_t,i_t}$ evaluated 
at function evaluation $t$, let's define $Q_T$ as the set containing every point 
evaluated at each function evaluation:
\begin{equation*}
	Q_T = \{ x_{h_t, i_t} | 1 \leq t \leq T \}
\end{equation*}
Now, we split $Q_T$ in two sets $Q_1, Q_2$ defined as follow:
\begin{equation}\label{eqn:q_sets}
	\begin{aligned}
		Q_1 &= \{ x_{h, i} \in Q_T | h < h_{\text{max}} \}\\
		Q_2 &= Q_T \setminus Q_1      
	\end{aligned}
\end{equation}
So, we consider separately the terms which contribute to the cumulative regret:
\begin{equation*}
	R_T = \sum\limits_{x \in Q_T} f(x^{*}) - f(x) = \sum\limits_{x \in Q_1} f(x^{*}) - f(x) + \sum\limits_{x \in Q_2} f(x^{*}) - f(x) = R_1 + R_2,
\end{equation*}
where
\begin{equation*}
	R_1 = \sum\limits_{x \in Q_1} f(x^{*}) - f(x) \qquad \text{and} \qquad R_2 = \sum\limits_{x \in Q_2} f(x^{*}) - f(x).
\end{equation*}
Let's start by bounding $R_2$. Using Lemma~\ref{lem:subopt}, we can upper-bound $R_2$ as:
\begin{equation*}
	R_2 = \sum\limits_{x \in Q_2} f(x^{*}) - f(x) \leq (4U_V + 1)V_{h_{\text{max}}} |Q_2|
\end{equation*}
The size of $Q_2$ can be trivially upper-bounded with the budget $T$:
\begin{equation*}\label{eqn:R2_bound}
	(4U_V + 1)V_{h_{\text{max}}} |Q_2| \leq (4U_V + 1)V_{h_{\text{max}}} T
\end{equation*}
	Noting that $h_{\text{max}} \geq h_0$, %
	Assumption~\ref{asm:bound_g} implies%
	\begin{equation*}
		(4U_V + 1) V_{h_{\text{max}}} T \leq (4U_V + 1) C^\prime_k v_1^{\alpha} \rho^{h_{\text{max}}\alpha} T \leq \mathcal{O}(\rho^{h_{\text{max}}\alpha} T)
	\end{equation*}
	Moreover, since $h_{\text{max}} \geq \frac{1/2 \log T}{\alpha \log 1/\rho}$, 
	\begin{equation*}
		\rho^{h_{\text{max}}\alpha} T \leq \sqrt{T \log T}
	\end{equation*}
	To upper-bound $R_1$, since $|Q_1| \leq T$, Lemma~\ref{lem:subopt} yields:
	\begin{equation*}
		R_1 = \sum\limits_{x \in Q_1} f(x^{*}) - f(x) \leq 3\sum\limits_{x_{h_t, i_t} \in Q_1} \tilde{\beta}_t\tilde{\sigma}_t(x) 
	\end{equation*} 
	Again, since $|Q_1| \leq T$, we get %
	\begin{equation*}
		3\sum\limits_{x_{h_t, i_t} \in Q_1} \tilde{\beta}_t\tilde{\sigma}_t(x_{h_t, i_t}) \leq 3 \sum\limits_{t=1}^{T} \tilde{\beta}_t\tilde{\sigma}_t(x_{h_t,i_t}) \leq 3\tilde{\beta}_T \sum\limits_{t=1}^{T} \tilde{\sigma}_t(x_{h_t,i_t})
	\end{equation*}
	Proposition~\ref{prop:bkb_regret} implies%
	\begin{equation*}
		3\tilde{\beta}_T \sum\limits_{t=1}^{T} \tilde{\sigma}_t(x_{h_t,i_t}) \leq \mathcal{O}(\sqrt{T}d_\text{eff}(T)\log T) 
	\end{equation*} 
	Summing $R_1$ and $R_2$:
	\begin{align*}
		R_T &= R_1 + R_2\\
		&\leq \mathcal{O}(\sqrt{T}d_\text{eff}(T)\log T + \sqrt{T \log T})\\
		&\leq \mathcal{O}(\sqrt{T}d_\text{eff}(T)\log T)
	\end{align*}
	Now assume that the evaluation model is
	\begin{equation*}
		y_t = f(x_t) + \eta_t \qquad \text{with }\eta_t \sim \mathcal{N}(0, \sigma^2) 
	\end{equation*}
	In this scenario, 
	We follow a similar proof strategy of~\cite[Proof of Lemma 1]{salgia2020computationally}. 
	Let $Q_1$ be the set of observed centroids at depth $h< h_\text{max}$ (eq.~\eqref{eqn:q_sets}) 
	and let $n_i$ be the number of times that the $i$-th centroid (in the set $Q_1$) has been evaluated.
	Let $J$ be the set containing the indices of distinct points evaluated at least one time 
	at depth $h < h_\text{max}$:
	\begin{equation*}
		J = \{j : n_i > 0 \}.
	\end{equation*}
	It follows $|J| \leq \frac{N^{h_\text{max}} - 1}{N - 1}$, 
	which corresponds to the case in which Ada-BKB evaluates every point 
	in the partition tree with maximum depth $h_\text{max} - 1$.
	Considering $x_i$ as the $i$-th centroid in $Q_1$, let's denote with $t_j$ the time in which 
	$x_i$ has been selected and evaluated for the $j$-th time at timestep $t$ i.e. for all $2 \leq j \leq n_i$,  
	at timestep $t_j$, the centroid $x_{i}$ has been evaluated $j - 1$ times. 
	By Proposition~\ref{prop:std_upp}, we have that
	\begin{equation*}
		\tilde{\sigma}_{t_j - 1}(x_{t_j}) \leq \sqrt{\alpha}\frac{\sigma^2}{\sqrt{j - 1}}.
	\end{equation*}
	The contribution of every point $x_j$ with $j \in J$ to the sum of approximate variances is upper bounded by
	\begin{equation}\label{eqn:contrib_sigma}
		1 + \sqrt{\alpha}\sigma^2_n\sum\limits_{i=1}^{n_j - 1}\frac{1}{\sqrt{i}}
	\end{equation} 
	Lemma~\ref{lem:subopt} implies
	\begin{equation*}
		R_1 = \sum\limits_{x \in Q_1} f(x^*) - f(x) \leq 3\tilde{\beta}_{T} \sum\limits_{t=1}^{T} \tilde{\sigma}_{t-1}(x_{(h_t, i_t)})
	\end{equation*}
	We derive from~\eqref{eqn:contrib_sigma} that%
	\begin{align*}
		R_1 &\leq 3\tilde{\beta}_{T} \sum\limits_{j \in J} \Bigg( (1 + \sqrt{\bar{\alpha}}\sigma^2) \sum\limits_{k=1}^{n_j - 1} \frac{1}{\sqrt{k}} \Bigg)\\
		&\leq 3\tilde{\beta}_{T} \sum\limits_{j \in J} \Bigg( (1 + 2\sqrt{\bar{\alpha}}\sigma^2) \sqrt{n_j - 1} \Big)\\
		&\leq 3\tilde{\beta}_{T} (1 + 2\sqrt{\bar{\alpha}}\sigma^2) \sum\limits_{j \in J} \sqrt{n_j}\\
	\end{align*}
	By Jensen's inequality, %
	\begin{align*}
		R_1 &\leq 3\tilde{\beta}_{T} (1 + 2\sqrt{\bar{\alpha}}\sigma^2) |J| \sqrt{\frac{1}{|J|}\sum\limits_{j \in J} n_j}\\
		&\leq 3\tilde{\beta}_{T} (1 + 2\sqrt{\bar{\alpha}}\sigma^2) \sqrt{ |J| T}\\
		&\leq 3\tilde{\beta}_{T} (1 + 2\sqrt{\bar{\alpha}}\sigma^2) \sqrt{ \frac{N^{h_\text{max}} - 1}{N - 1} T}\\ 
	\end{align*}
	\cite[Appendix D.2]{calandriello2019gaussian} implies that 
	\begin{equation*}
		\tilde{\beta}_T \leq 2\lambda\sqrt{d_\text{eff} \log(k^2T) + \log(1/\delta)} + (1 + \frac{1}{\sqrt{1 - \epsilon}})\sqrt{\lambda}F
	\end{equation*}
	Therefore,
	\begin{equation*}
		R_1 \leq \mathcal{O} \Bigg(\sqrt{Td_\text{eff}\log(k^2 T) \frac{N^{h_\text{max}} - 1}{N - 1}} + \sqrt{T\frac{N^{h_\text{max}} - 1}{N - 1}} \Bigg)
	\end{equation*}
	If we take $N > 1$ s.t. $\sqrt{\frac{N^{h_\text{max}} - 1}{N - 1}} < T$, 
	we derive from~\eqref{eqn:R2_bound} that
	\begin{equation*}
		\mathcal{O} \Bigg( \sqrt{Td_\text{eff} \log(k^2 T) \frac{N^{h_\text{max}} - 1}{N - 1}} \Bigg) 
	\end{equation*}
	Notice that $\sqrt{\frac{N^{h_\text{max}} - 1}{N - 1}}$ doesn't grow with $p$ (search space dimension) as $d_\text{eff}$.
\subsection{Proof of Theorem~\ref{thm:comp_cost}}\label{app:proof_comp_cost}
Let $j$ be the number of observations at a certain time step.
We analyze the sources of cost of Algorithm \ref{alg:1} to get the computational cost.
\paragraph*{Model update.} According to the algorithm, every time we evaluate the function (i.e. we observe $y = f(x) + \eta$), we update our model. With BKB~\citep{calandriello2019gaussian}, we know that an update consists in recomputing $\tilde{\mu}$, $\tilde{\sigma}$ and in "resparsificating" the approximation. As indicated in~\citep{calandriello2019gaussian}, the computational cost of performing these operations is $\mathcal{O}(Td_\text{eff}^2(T))$.
\paragraph*{Index computation.} The computation of the index is the most expensive operation, see~\citep{shekhar2018gaussian}. 
In order to get a similar analysis to AdaGP-UCB, we consider the total cost of computing $I_t$. 
Since the cost of evaluating $\tilde{\mu}_t,\tilde{\sigma}_t$ for a point is $\mathcal{O}(d_\text{eff}^2(T))$, let's analyze two different scenarios:
\begin{enumerate}
	\item Refinement steps: if we have expanded a node, we don't perform an update of the model, so we can compute the index only for the new nodes (i.e. we just compute the approximated mean and variance for new nodes). 
	Each refinement operation adds $N$ new points to the leaf set and remove the expanded node, thus, the overall computational cost is:
	\begin{equation*}
		\mathcal{O}(Td_\text{eff}^2(T)(N - 1)h_{\text{max}})
	\end{equation*}
	\item Evaluation steps: after an evaluation, we update our model and, we have to recompute the index for the entire leaf set. 
	In the worst case, the leaf set $L_\tau$ at time $t$ contains every representative point of the nodes of the partition tree at depth $h_{\text{max}}$ and 
	since the sub-tree of partition tree at depth $h_{\text{max}}$ (and at any $h \geq 0$) is a perfect $N$-ary tree, 
	\begin{equation*}
		|L_\tau| \leq N^{h_{\text{max}}}.
	\end{equation*}
	So, the overall computational cost is:
	\begin{equation*}
		\mathcal{O}(T d_\text{eff}^2(T) N^ {h_{\text{max}}}).
	\end{equation*}
\end{enumerate}
\paragraph*{Candidate selection.} The selection procedure consists in chosing the $x \in L_\tau$ which maximize $I_t$, i.e.:
\begin{equation*}
	\argmax\limits_{x \in L_\tau} I_t(x)
\end{equation*}
Ignoring the cost of computing the index (since we analyzed it in the previous point), we have to consider the cost of computing the argmax in case we did refinement steps or evaluation steps:
\begin{enumerate}
	\item Refinement steps: after refinement steps, the model is not changed so we can take the argmax of new nodes (since the previous maximizer was the expanded node) 
	and this costs $\mathcal{O}((N-1)h_{\text{max}}T)$.
	\item Evaluation steps: we have to perform an exhaustive search on the leaf set and, this will cost:
	\begin{equation*}
		\mathcal{O}(T N^{h_{\text{max}}})
	\end{equation*}
\end{enumerate}
\paragraph*{Search space refinement.} When $X \subset \mathbb{R}^p$, the refinement of a cell $X_{h,i}$ is performed by dividing it equally in $N$ parts along its longest side (see also~\citep{shekhar2018gaussian}). This operation involves specifying the centers and the $p$ side lengths of each of the $N$ new cells and is thus a $\mathcal{O}(pN)$ operation.
So the overall cost of search space refinement is:
\begin{equation*}
	\mathcal{O}(Th_{\text{max}}Np)
\end{equation*}
So, the total cost for the algorithm is $\mathcal{O}(T d_\text{eff}^2(T) N^{h_{\text{max}}} + Th_{\text{max}}Np)$ and thus, fixed $p$:
\begin{equation*}
	\mathcal{O}(T d_\text{eff}^2(T) N^{h_{\text{max}}})
\end{equation*}

\section{EXPERIMENT DETAILS}\label{app:experiements}
In this appendix, we describe the optimizer settings used to perform experiments presented in Section \ref{sec:experiments}, showing also other experiments performed.
Every experiment is realized in Python 3.6.9 using \textbf{sklearn}\citep{scikit-learn,sklearn_api}, \textbf{pytorch}\citep{paszke2017automatic}, \textbf{gpytorch}\citep{gardner2018gpytorch} and \textbf{numpy}\citep{harris2020array} libraries.\\
The implementation of BKB used can be found on GitHub at the following link \url{https://github.com/luigicarratino/batch-bkb}
\subsection{Synthetic experiments details}
Synthetic experiments consist in finding global minima in well-known function, 
in particular, we considered the following functions and search spaces:
\begin{table}[H]
	\centering
	\caption{Function used and relative search space considered for Ada-BKB and AdaGP-UCB.}
	\begin{tabular}{ l l }

		\textbf{FUNCTION} & \textbf{SEARCH SPACE} $X$\\\hline\\
		Branin & $[-5.0, 10.0] \times [0.0, 15.0]$\\
		Beale & $[-4.5, 4.5]^2$ \\
		Bohachevsky& $[-10.0, 190.0] \times [-180.0, 20.0]$\\
		Rosenbrock 2& $[-5.0, 10.0]^2$\\
		Six-Hump Camel & $[-2.0, 2.0] \times [-3.0, 3.0]$ \\
		Ackley 2 & $[-10.0, 52.768]^2$ \\
		Trid 2 & $[-4.0, 4.0]^2$\\
		Hartmann 3 &$[0.0, 1.0]^3$\\
		Trid 4 & $[-16.0, 16.0]^4$\\
		Shekel & $[0.0, 10.0]^4$\\
		Ackley 5 & $[-10.0, 52.768]^5$\\
		Hartmann 6 & $[0.0, 1.0]^6$\\
		Levy 6 & $[-10.0, 10.0]^6$\\
		Levy 8 & $[-10.0, 10.0]^8$\\
		Rastrigin 8 & $[-1.12, 5.12]^8$\\
		Dixon-Price 10 & $[-10.0, 10.0]^{10}$\\
		Ackley 30 & $[-10.0, 52.768]^{30}$\\
	\end{tabular}  
	\label{Tab:fun}
\end{table}
The parameter $\delta$ is set to $10^{-5}$ for every experiments.
\begin{table}[H]
	\centering
	\caption{Parameters of the optimizer used for experiments presented in Section \ref{sec:experiments} and Appendix \ref{app:other_exp}.}
	\begin{tabular}{ l l l l l}

		\textbf{FUNCTION} & $\sigma$ & $h_{\text{max}}$ & $N$ & $p$ \\\hline\\
		Branin & $0.5$ & $5$ & $3$ & $2$\\
		Beale & $1.0$ & $5$ & $3$ & $2$\\
		Bohachevsky & $1.70$ & $9$ & $3$& $2$\\
		Rosenbrock 2 & $0.70$ & $10$ & $11$ & $2$\\
		Six-Hump Camel & $0.5$ & $6$ & $5$ & $2$\\
		Ackley 2 & $3.5$ & $7$ & $3$ & $2$\\
		Trid 2 & $1.5$ & $7$ & $5$ & $2$\\
		Hartmann 3 & $0.5$ & $7$ & $3$ & $3$\\
		Trid 4 & $10.75$ & $7$ & $13$ & $4$\\
		Shekel& $1.75$ & $6$ & $9$ & $4$\\
		Ackley 5 & $5.0$ & $6$ & $3$ & $5$\\
		Hartmann 6 & $0.35$ & $5$ & $5$ & $6$\\
		Levy 6 & $5.0$ & $7$ & $5$ & $6$\\
		Levy 8 & $2.5$ & $7$ & $3$ & $8$\\
		Rastrigin 8 & $7.0$ & $10$ & $3$ & $8$\\
		Dixon-Price 10 & $2.0$ & $10$ & $5$ & $10$\\
		Ackley 30 & $20.50$ & $300$ & $3$ & $30$\\
	\end{tabular}  
	\label{Tab:set}
\end{table}
Detailed information about the test functions is available at the following website: \url{https://www.sfu.ca/~ssurjano/optimization.html}.\\
For every algorithm, we used a Gaussian kernel with lengthscale $\sigma$ specified in Table~\ref{Tab:set}. The noise standard deviation (indicated with $\lambda$) is set to $0.01$ for every experiment.
Values for other parameters (like the kernel lengthscale $\sigma$) specified in Table~\ref{Tab:set} are obtained using cross-validation (the value of $h_{\text{max}}$ is just the logarithm of the budget).\\
For GP-UCB and BKB, the discrete search space was built by taking $15$ points for every dimension and computing the Cartesian product. 
For "mid dimensional" cases (5 and 6 dimensions), the number of points per dimension taken is $10$ and for higher dimensional spaces $5$ points per dimension are taken .\\
The parameter $F$ is set to be $1$. 
\subsection{Hyper-parameter tuning experiments details}\label{app:hpo_details}
For FALKON hyper-parameter tuning experiments, we used the following datasets
\begin{table}[H]
	\centering
	\caption{Dataset used with number of features and search spaces considered}
	\label{tab:flk_dataset}
	\begin{tabular}{ l l l }

		\textbf{DATASET} & $p$ & \textbf{SEARCH SPACE} \\\hline\\
		HTRU2 & $8$ & $[0.0, 1.0]^8$ \\
		CASP & $9$ & $[0.0, 1.0]^9$\\
		Magic04 & $10$ & $[0.1, 10.0]^{10}$ \\
	\end{tabular}  
\end{table}
In following tables, for each dataset, we indicate the number of rows, size for the training and test part and we also indicate the value for $M$ and $\lambda$ (Falkon parameters) used:
\begin{table}[H]
	\centering
	\caption{Falkon fixed parameter per dataset used and size of dataset and relative training and test parts}
	\begin{tabular}{ l l l l l l }

		\textbf{DATASET} & \textbf{ROWS} & \textbf{TRAINING} & \textbf{TEST} & M & $\lambda$ \\\hline
		HTRU2 & $17897$ & $15216$ & $3804$ & $1000$ & $1e-5$ \\
		CASP & $45730$ & $32010$ & $13720$ & $2000$ & $1e-5$ \\
		Magic04 & $19020$ & $14317$ & $3580$ & $2000$ & $1e-6$ \\
	\end{tabular}  
	
	\label{tab:flk_fixed_params}
\end{table}
\begin{table}[H]
	\centering
	\caption{parameter for the optimizer used for parameter tuning experiments}
	\begin{tabular}{l l l l l l}

		\textbf{DATASET} & $\sigma$ & $\lambda$ & $h_{\text{max}}$ & $N$ & $\delta$ \\\hline\\
		HTRU2 & $10.0$ & $1e-9$ & $6$ & $3$ & $1e-5$\\
		CASP & $5.0$ & $1e-9$ & $7$ & $5$ & $1e-5$\\
		Magic04 & $5.0$ & $1e-9$ & $6$ & $3$ & $1e-5$\\
	\end{tabular}  
	\label{tab:opt_fixed_params}
\end{table} 
Again, the parameter $F$ is set to be $1$. We used a Gaussian kernel $k$ with many lengthscale parameters $\sigma_1,\cdots,\sigma_p$ with $p$ number of features of the dataset
\begin{equation*}
	k(x, x^\prime) = e^{- \frac{1}{2}x \Sigma^{-1} x^{\prime}} \qquad \Sigma = \begin{bmatrix}
		\sigma^2_1 & 0 & \cdots & 0\\
		0 & \sigma^2_2 & \cdots & 0\\
		\vdots & \vdots & \ddots & \vdots \\
		0 & 0 &\cdots & \sigma^2_p 
	\end{bmatrix}
\end{equation*}  
Target function $f$ used is the $70$-$30$ hold-out cross-validation which splits the training set in training and validation where:
\begin{enumerate}
	\item training part is composed by the $70\%$ of the points of the training set and it is used to fit the model.
	\item validation part is composed of the remaining $30\%$ of the points of the training set and it is used to test our model fitted with the training part.
\end{enumerate}
Before splitting the training set, it is shuffled. 
The metric used to evaluate the model is the mean square error (MSE) which, given $y$ corresponding labels of the validation part and $\tilde{y}$ the label predicted by the model (on the validation part) is defined as follow:
\begin{equation*}
	MSE(y,\tilde{y}) = \frac{1}{n}\sum\limits_{i=1}^n (y_i - \tilde{y}_i)^2
\end{equation*}
Thus, we want to minimize the function $f$ which takes a parameter configuration, performs the hold-out cross-validation, and returns the MSE.
Since, we don't know which is the best parameter configuration and how large is the minimum MSE, to compute the average regret we assume that let $x^{*}$ be the optimal configuration, then $f(x^{*}) = 0$. 
We don't expect that our algorithm finds this configuration (also because it could not exist) but this strategy allows us to see which algorithm get the highest performance.
As for synthetic experiments, the parameters of the optimizer (Table~\ref{tab:opt_fixed_params}) are set using the value suggested by the theory and using cross-validation (for the number of children per node $N$, kernel lengthscale $\sigma$, etc) when it wasn't possible.
Falkon library~\citep{falkonlibrary2020} used can be found at following url: \url{https://github.com/FalkonML/falkon} (in particular, since dataset used are small enough, to speed-up computations we used InCore Falkon~\citep{falkonlibrary2020}).
Dataset used to perform experiments are split in training and test part (described in Table~\ref{tab:flk_fixed_params}). 
Preprocessing mostly consisted of data standardization to zero mean and unit standard deviation and, when a dataset is used for binary classification, labels are set to be $-1$ and $1$ (for instance for Magic04 dataset where labels are 'g' and 'h').
Dataset used can be downloaded at the following links:
\begin{enumerate}
	\item HTRU2~\citep{Lyon_2016,Dua:2019}: \url{https://archive.ics.uci.edu/ml/datasets/HTRU2}
	\item CASP~\citep{Dua:2019}: \url{https://archive.ics.uci.edu/ml/datasets/Physicochemical+Properties+of+Protein+Tertiary+Structure}
	\item Magic04~\citep{Dua:2019}: \url{https://archive.ics.uci.edu/ml/datasets/MAGIC+Gamma+Telescope}
\end{enumerate}
For each dataset, we estimated also the evaluation time of the target function on random parameter configuration to get an idea about how much this target function is expensive in time:
\begin{table}[H]
	\centering
	\caption{mean $\pm$ standard deviation time of evaluating the target function $f$ with a random configuration with $50$ repetition}
	\begin{tabular}{c c }

		\textbf{DATASET} & \textbf{FUNCTION EVALUATION}\\\hline\\
		HTRU2 & $0.1877 \pm 0.4682s$ \\
		CASP & $0.2562 \pm 0.4565s$\\
		Magic04 & $0.1971 \pm 0.4565s$\\
	\end{tabular}
	\label{tab:flk_single_eval}
\end{table}
\subsection{Machines used for experiments}\label{app:machine}
In the following tables, we describe the features of the machine used to perform the experiments presented in Section~\ref{sec:experiments} and Appendix~\ref{app:other_exp}.
\begin{table}[H]
	\centering
	\caption{machine used to perform the experiments}
	\begin{tabular}{l l }
		\textbf{FEATURE} & \\\hline\\
		OS & Ubuntu 18.04.1 \\
		CPU(s) & $2 \times$ Intel(R) Xeon(R) Silver 4116 CPU\\
		RAM & $256$GB\\
		GPU(s) & $2 \times$ NVIDIA Titan Xp (12 GB RAM)\\
		CUDA version & $10.2$
	\end{tabular}
\end{table}
Further details of GPUs used can be found in the following links: \url{https://www.nvidia.com/en-us/titan/titan-xp/}
\subsection{Other experiments}\label{app:other_exp}
We performed other experiments in minimizing well-known functions specified in Table~\ref{Tab:fun}. 
Again, for showing better the results, we just plot the first $700$ evaluations. 
The red vertical dashed line indicates when the early stopping condition is satisfied. 
We added a time threshold of 600 seconds.
\begin{figure}[H]
	\centering
	\includegraphics[width=0.25\linewidth]{./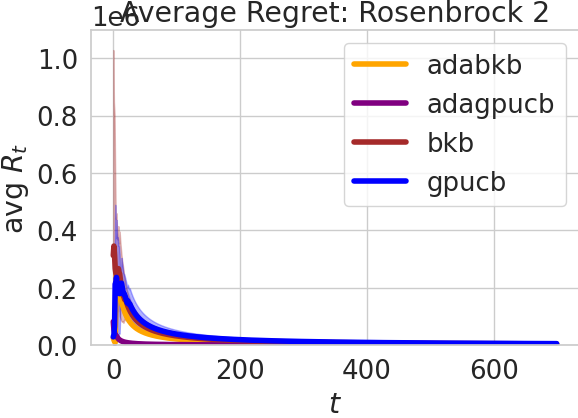}
	\includegraphics[width=0.25\linewidth]{./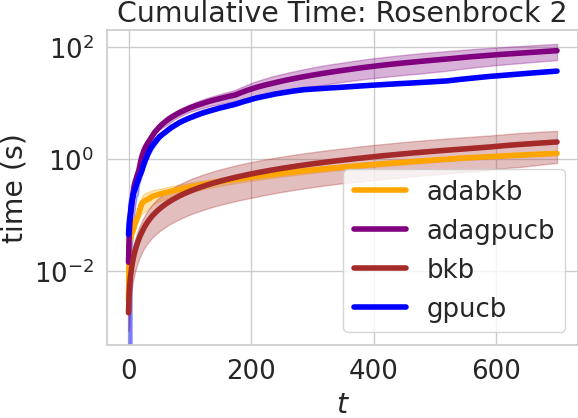}
	\includegraphics[width=0.25\linewidth]{./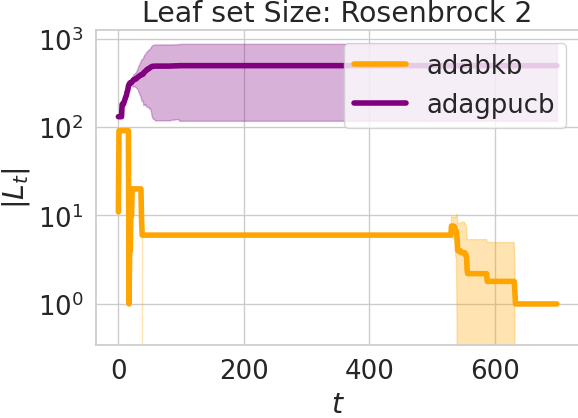}
	\includegraphics[width=0.25\linewidth]{./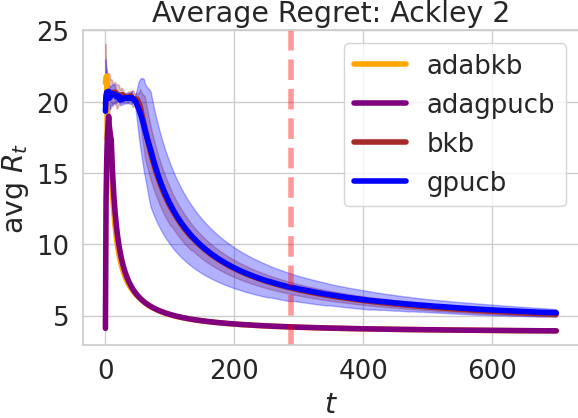}
	\includegraphics[width=0.25\linewidth]{./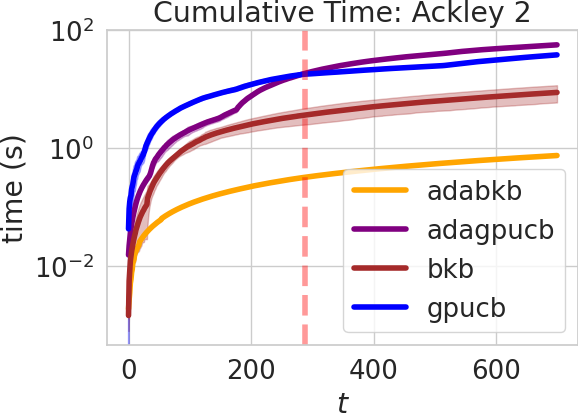}
	\includegraphics[width=0.25\linewidth]{./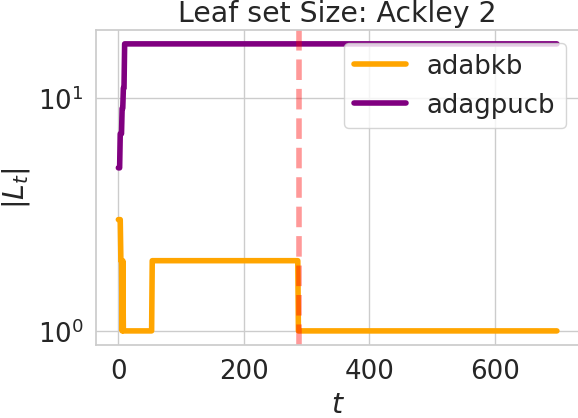}
	\includegraphics[width=0.25\linewidth]{./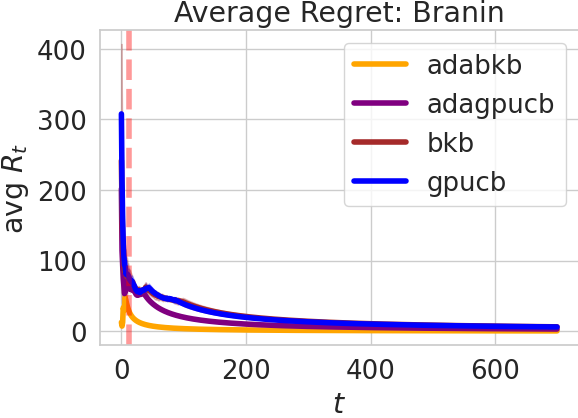}
	\includegraphics[width=0.25\linewidth]{./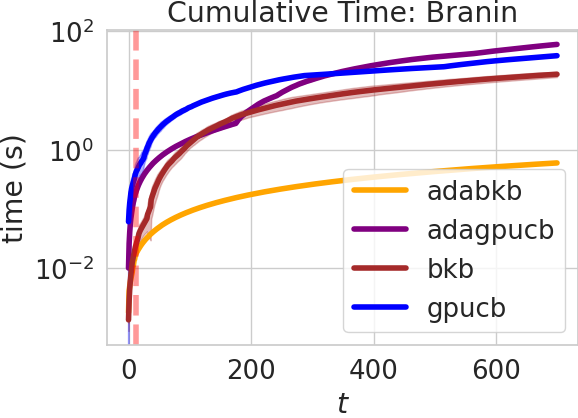}
	\includegraphics[width=0.25\linewidth]{./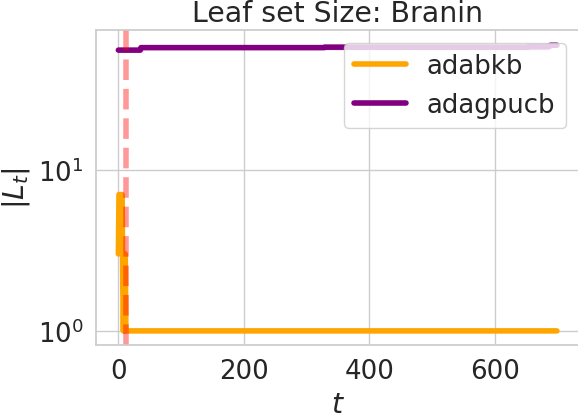}
	\includegraphics[width=0.25\linewidth]{./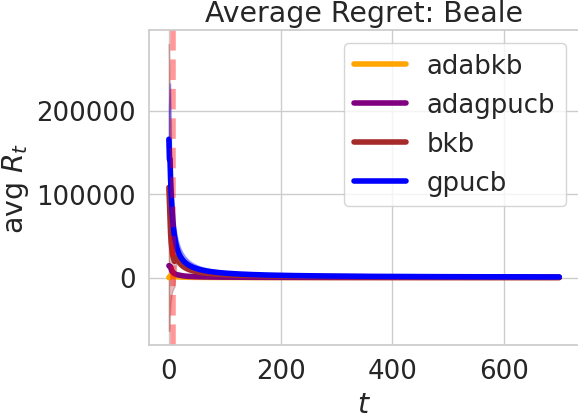}
	\includegraphics[width=0.25\linewidth]{./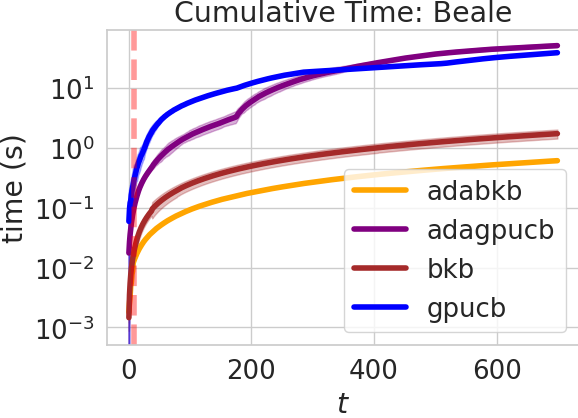}
	\includegraphics[width=0.25\linewidth]{./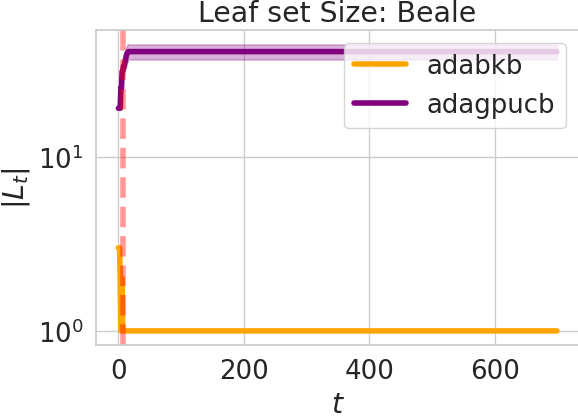}
	\includegraphics[width=0.25\linewidth]{./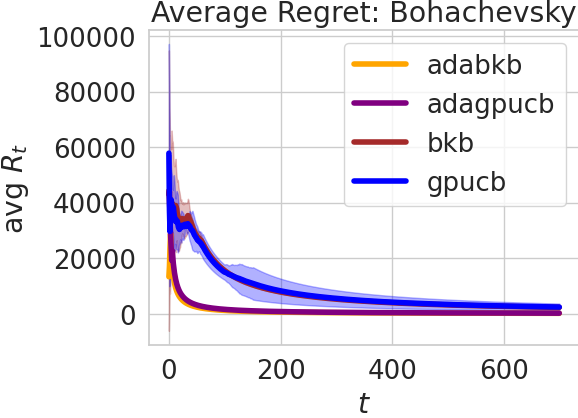}
	\includegraphics[width=0.25\linewidth]{./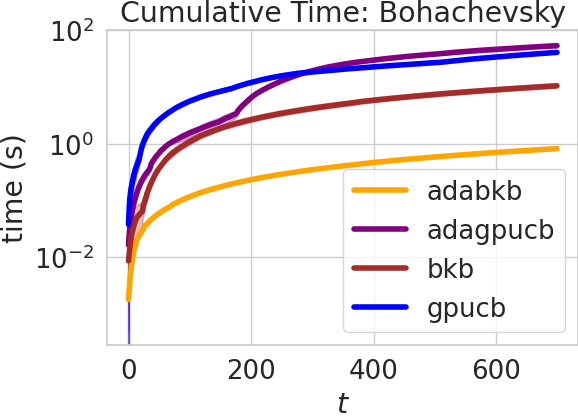}
	\includegraphics[width=0.25\linewidth]{./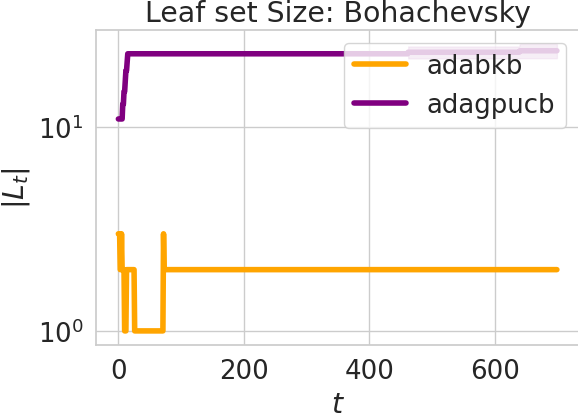}
	\includegraphics[width=0.25\linewidth]{./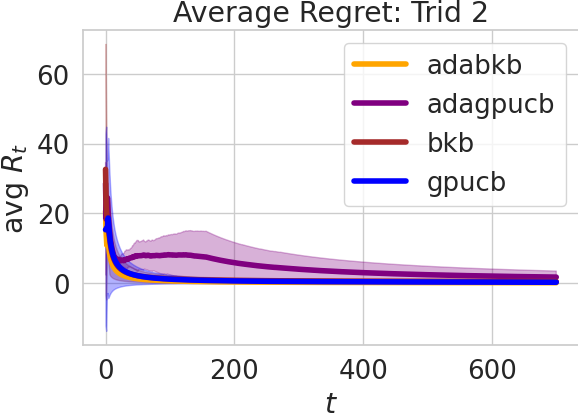}
	\includegraphics[width=0.25\linewidth]{./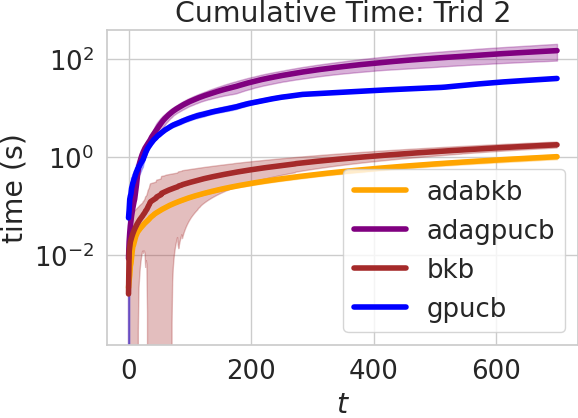}
	\includegraphics[width=0.25\linewidth]{./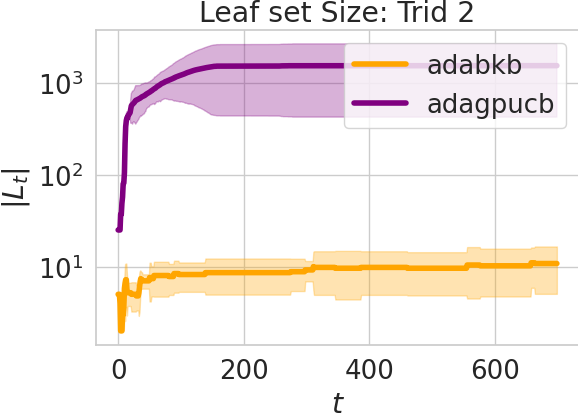}
	\caption{Average regret obtained by the algorithms in optimizing functions in Table \ref{Tab:set}}
	\label{fig:avg_reg_oexp_1}
\end{figure}
\begin{figure}[H]
	\centering
	\includegraphics[width=0.25\linewidth]{./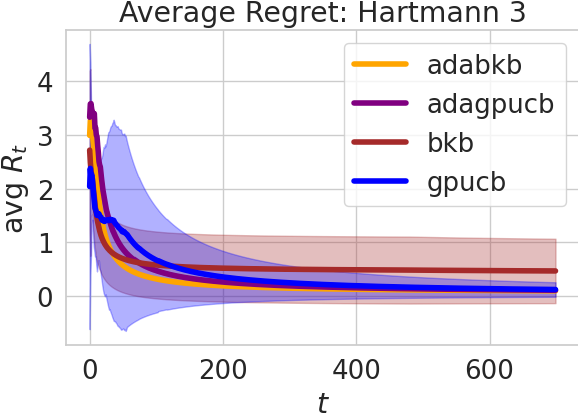}
	\includegraphics[width=0.25\linewidth]{./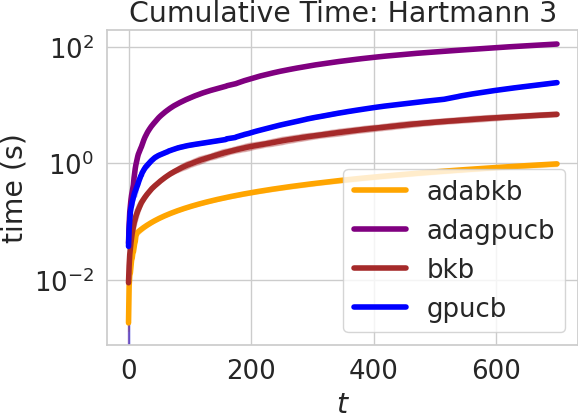}
	\includegraphics[width=0.25\linewidth]{./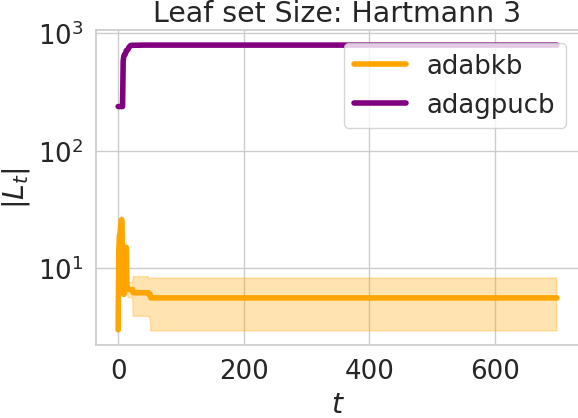}
	\includegraphics[width=0.25\linewidth]{./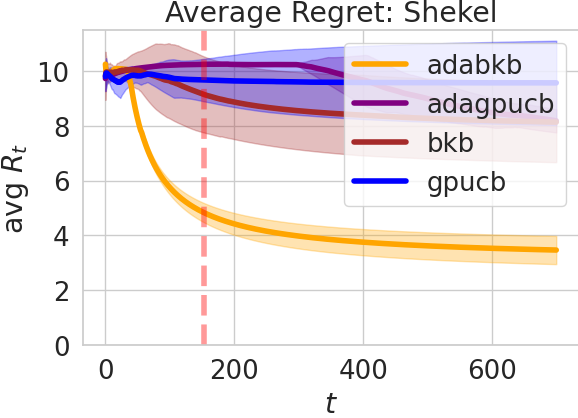}
	\includegraphics[width=0.25\linewidth]{./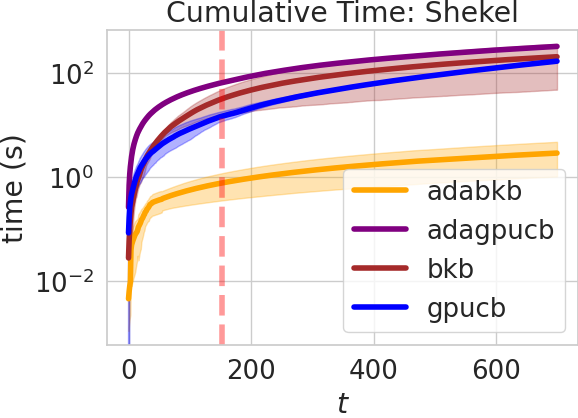}
	\includegraphics[width=0.25\linewidth]{./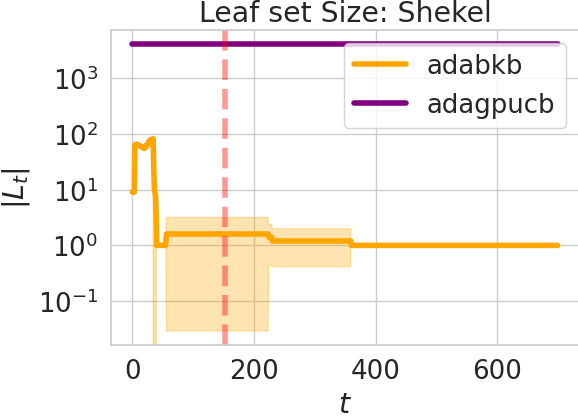}
	\includegraphics[width=0.25\linewidth]{./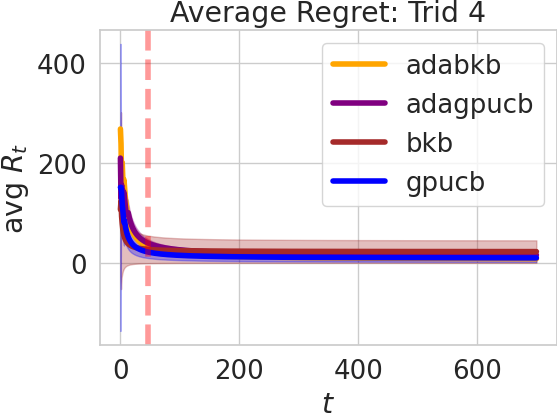}
	\includegraphics[width=0.25\linewidth]{./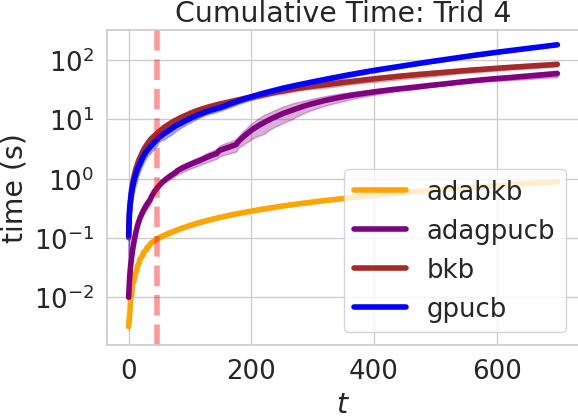}
	\includegraphics[width=0.25\linewidth]{./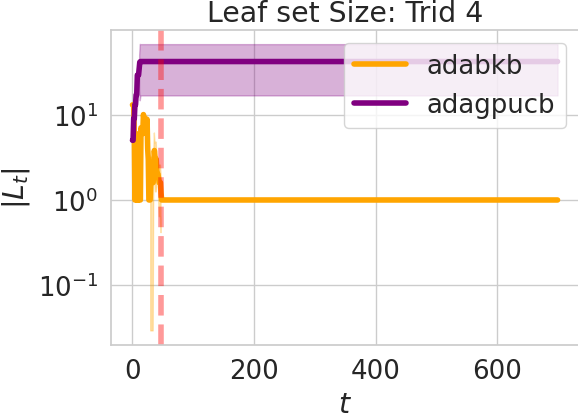}
	\includegraphics[width=0.25\linewidth]{./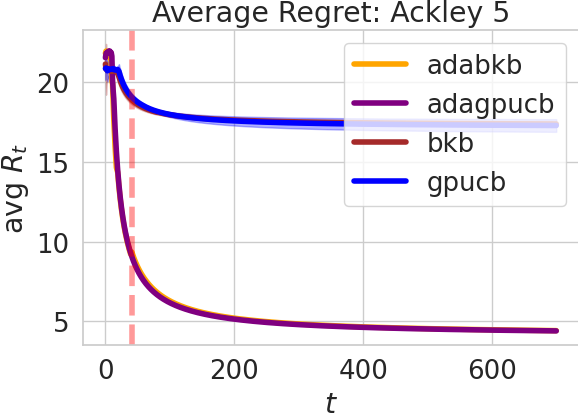}
	\includegraphics[width=0.25\linewidth]{./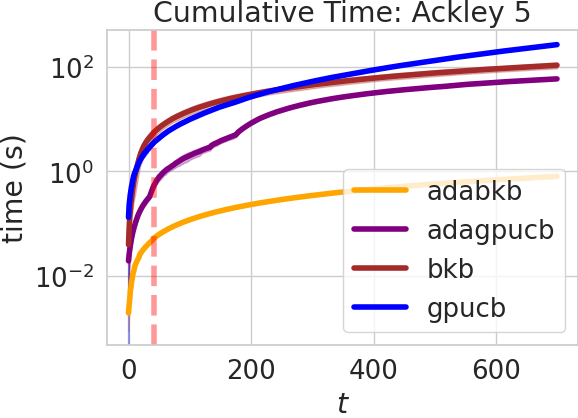}
	\includegraphics[width=0.25\linewidth]{./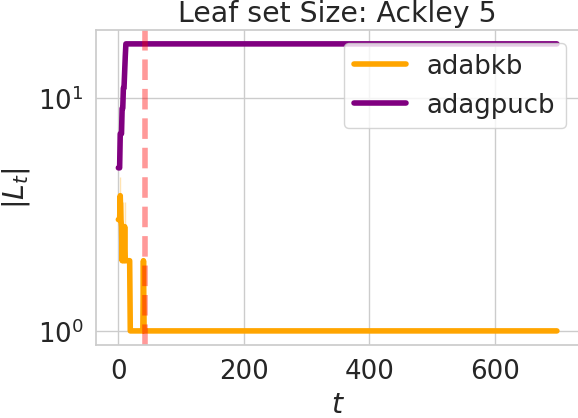}
	\includegraphics[width=0.25\linewidth]{./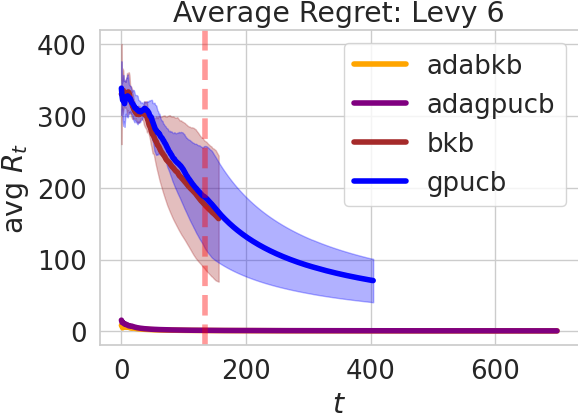}
	\includegraphics[width=0.25\linewidth]{./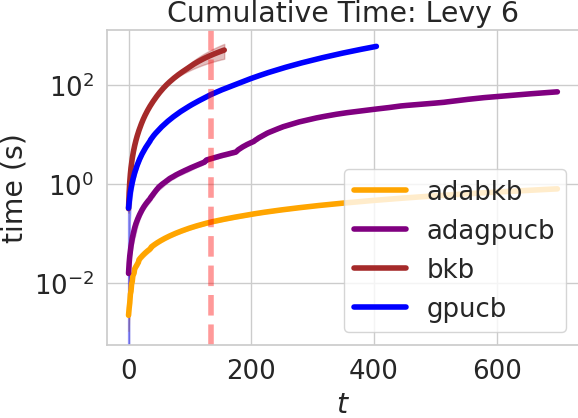}
	\includegraphics[width=0.25\linewidth]{./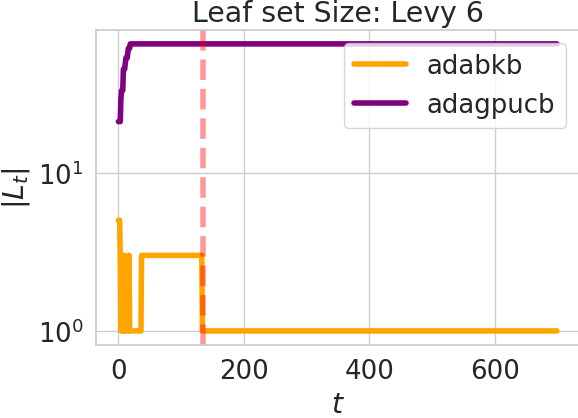}
	\includegraphics[width=0.25\linewidth]{./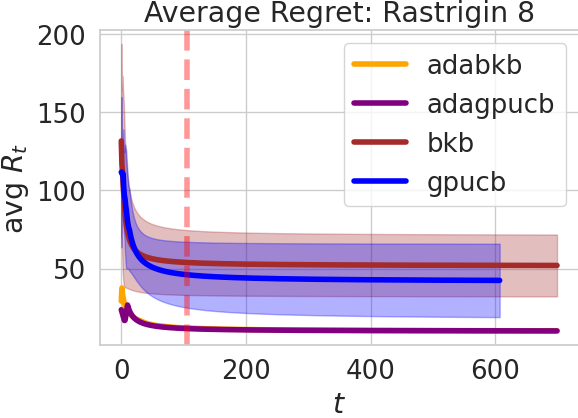}
	\includegraphics[width=0.25\linewidth]{./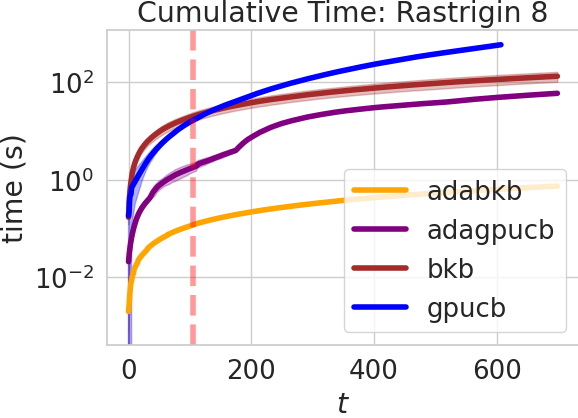}
	\includegraphics[width=0.25\linewidth]{./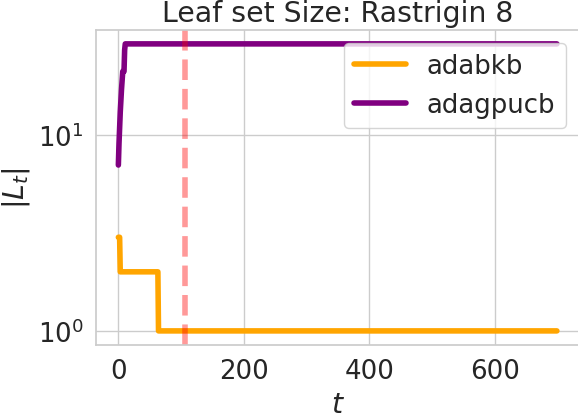}
	\includegraphics[width=0.25\linewidth]{./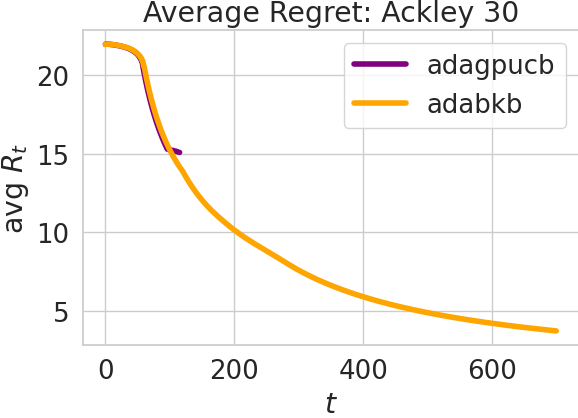}
	\includegraphics[width=0.25\linewidth]{./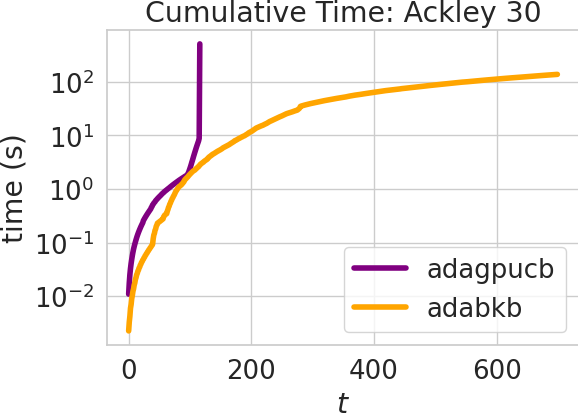}
	\includegraphics[width=0.25\linewidth]{./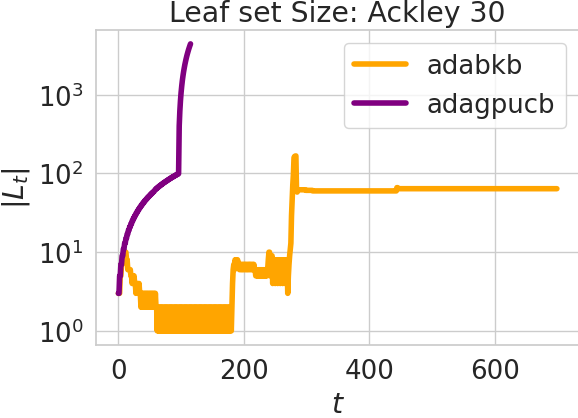}
	\caption{Average regret obtained by the algorithms in optimizing functions in Table \ref{Tab:set}}
	\label{fig:avg_reg_oexp_2}
\end{figure}
As in Section~\ref{sec:experiments}, we plot the average regret, cumulative time and leaf set size per iteration (Figure~\ref{fig:avg_reg_oexp_1} and~\ref{fig:avg_reg_oexp_2}). 
As we expected, (in general) in low dimensional cases GP-UCB is faster than AdaGP-UCB 
because the discretization is composed of few points so the computations are fast and convergency is 
reached in few iterations. In Ada-BKB this problem is faced with the pruning procedure which 
reduces the number of nodes i.e. the number of points in which we have to evaluate the index function. 
In case the number of pruned nodes is $0$ we could expect that in low dimensional cases BKB is faster than 
Ada-BKB (notice in Rosenbrock 2 case that Ada-BKB achieves cumulative time similar to BKB and that the number of the pruned node during iteration is 
lower than the other low dimensional cases). 
However, we can notice that in these cases Ada-BKB is less time expensive than GP-UCB and Ada-GP-UCB. In the worst-case observed, %
it is similar (in time) to BKB.\\
Increasing the dimension of the search space (for instance in Ackley 5), 
Ada-BKB and AdaGP-UCB are faster than GP-UCB and BKB, 
and also the optimum found is better (according to the average regret). In the last line of Figure~\ref{fig:avg_reg_oexp_2}, we couldn't realize the experiments for BKB and GP-UCB because the time cost was too high. Moreover, we can observe that in a $30$-dimensional case, AdaGP-UCB is interrupted due to the time threshold while Ada-BKB is able to complete the $700$ time steps. 
In general, we observe that AdaGP-UCB expands more than Ada-BKB because in AdaGP-UCB there is no 
pruning procedure (and probably because a different expression of $V_h$ is used) which reduce the number of 
nodes allowing to obtain a better performance in time. 
\subsection{Robustness to small pertubation of F}\label{app:small_f}
Since the choice of $F = 1$ is an heuristic, 
we did some synthetic experiments comparing performances of Ada-BKB with different values for $F$. 
\begin{figure}[H]
	\centering
	\vspace{.3in}
	\includegraphics[width=0.23\linewidth]{./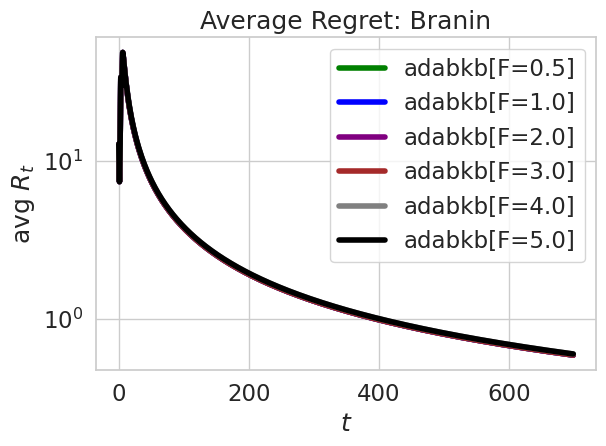}
	\includegraphics[width=0.23\linewidth]{./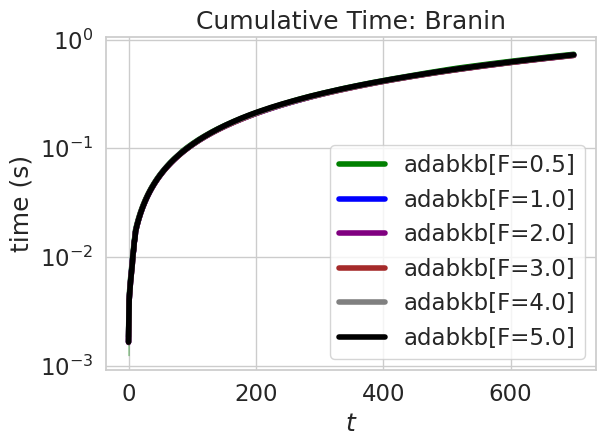}
	\includegraphics[width=0.23\linewidth]{./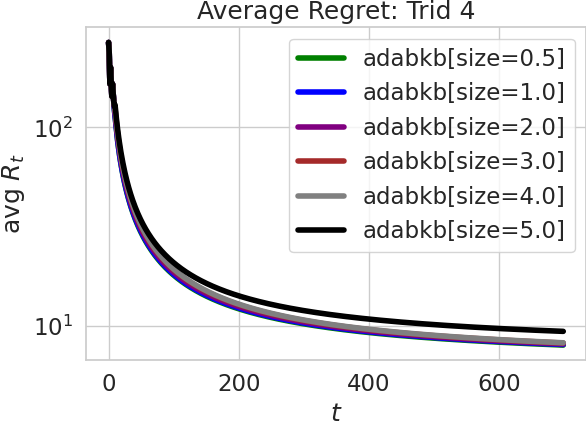}
	\includegraphics[width=0.23\linewidth]{./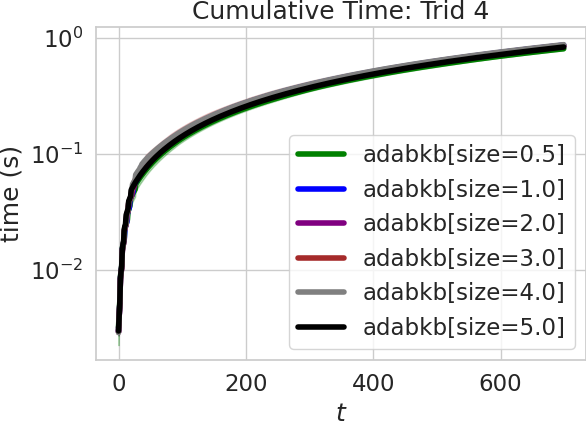}
	\includegraphics[width=0.23\linewidth]{./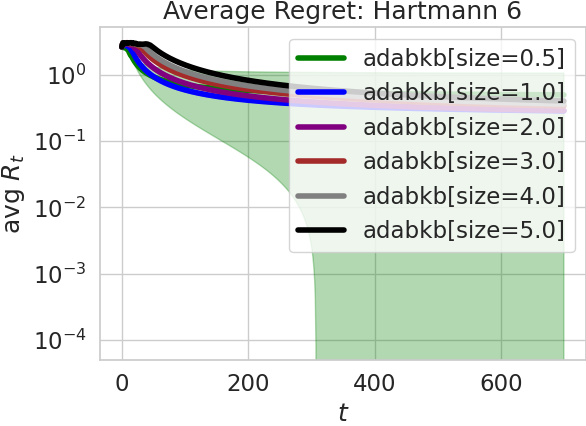}
	\includegraphics[width=0.23\linewidth]{./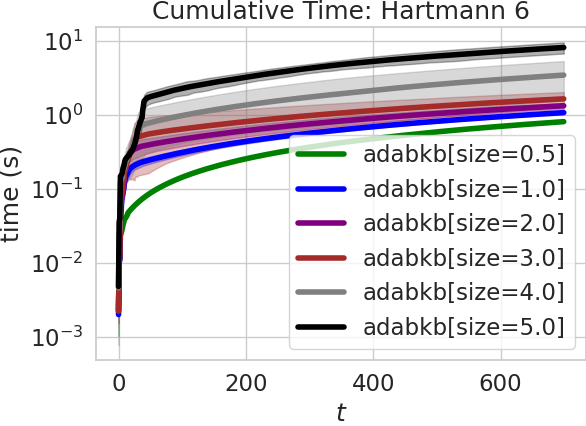}
	\includegraphics[width=0.23\linewidth]{./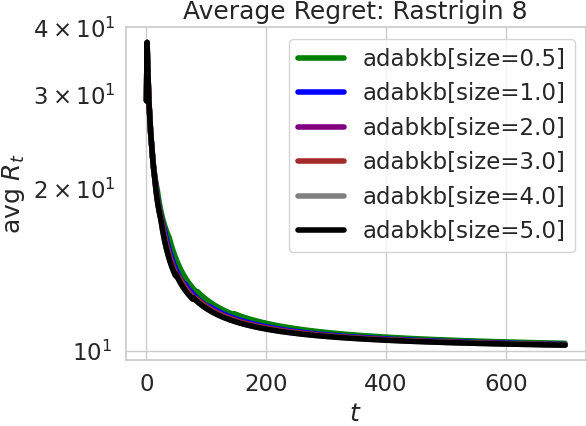}
	\includegraphics[width=0.23\linewidth]{./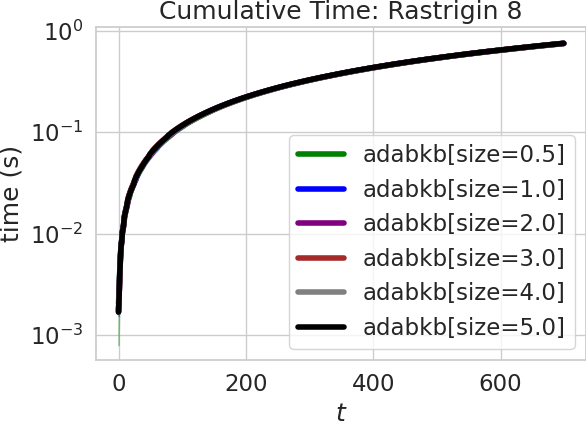}\\
	\vspace{.3in}
	\caption{Average regret and cumulative time of Ada-BKB changing $F$}
\end{figure}
We can observe that for small changes of $F$, results in regret and time are similar 
i.e.~the algorithm is robust to small changes of $F$. 
Obviously, taking $F$ too small will lead to small values for $V_h$ (eq.~\eqref{eqn:vh}) 
and, thus, the algorithm can evaluate centroid more times because of the expansion rule. 
On the other hand, taking $F$ too high can lead to over-expansion.
\begin{figure}[H]
	\centering
	\vspace{.3in}
	\includegraphics[width=0.23\linewidth]{./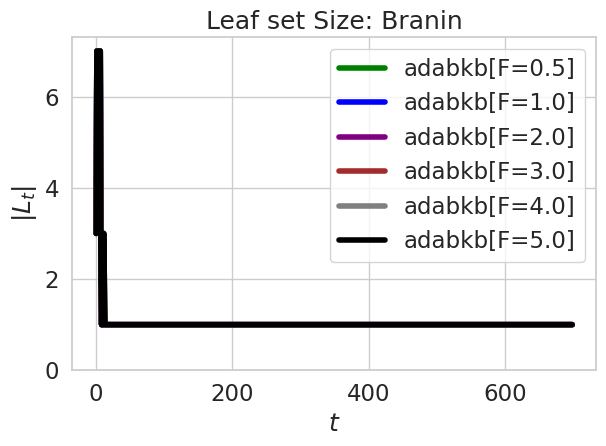}
	\includegraphics[width=0.23\linewidth]{./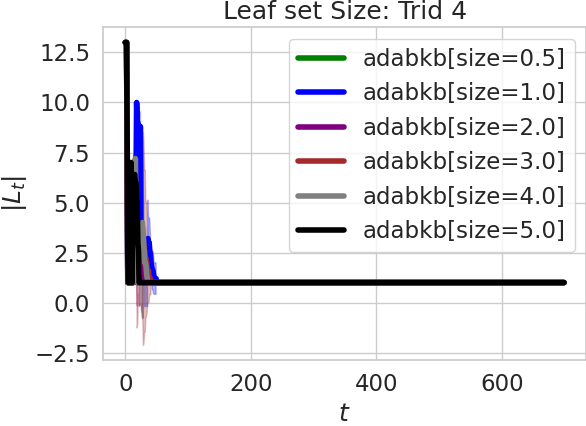}
	\includegraphics[width=0.23\linewidth]{./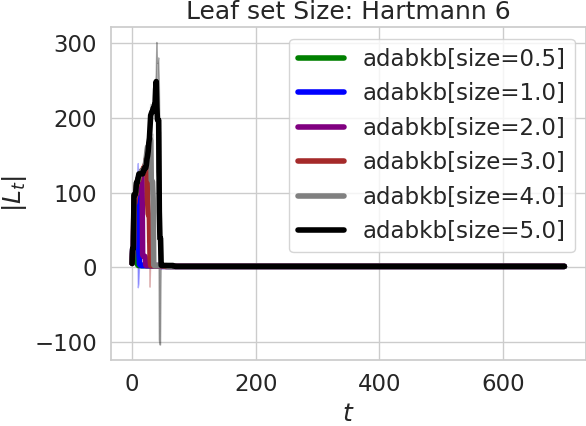}
	\includegraphics[width=0.23\linewidth]{./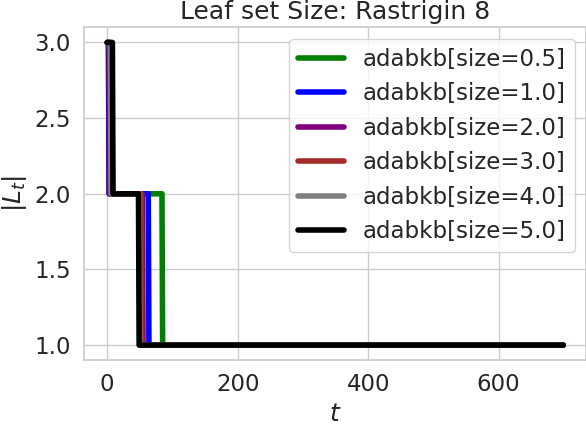}
	\vspace{.3in}
	\caption{Leaf set size per iteration of Ada-BKB changing $F$}  
\end{figure}
\subsection{Partition tree selection}\label{app:choose_N}
In practice, to run Ada-BKB, we have to choose the number of children per node $N$ (see Algorithm~\ref{alg:1}). The choice of a value for this parameter let us choose a partition tree used and explored as indicated in Section~\ref{algo}. Main results (see Section~\ref{theory}) suggest to choice this parameter as small as possible (i.e. $2$ or $3$) since it affects both computational cost and cumulative regret. 
\begin{figure}[H]
	\centering
	\includegraphics[width=0.32\linewidth]{./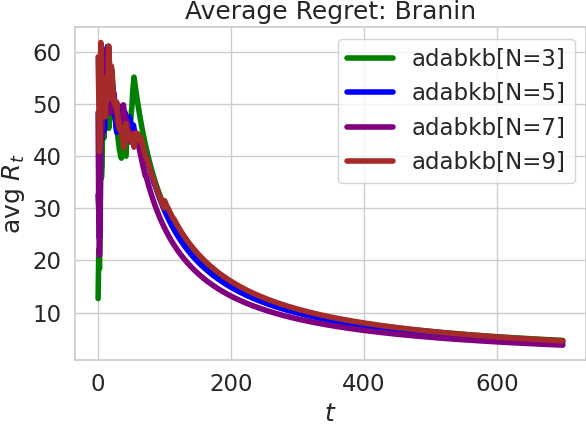}
	\includegraphics[width=0.32\linewidth]{./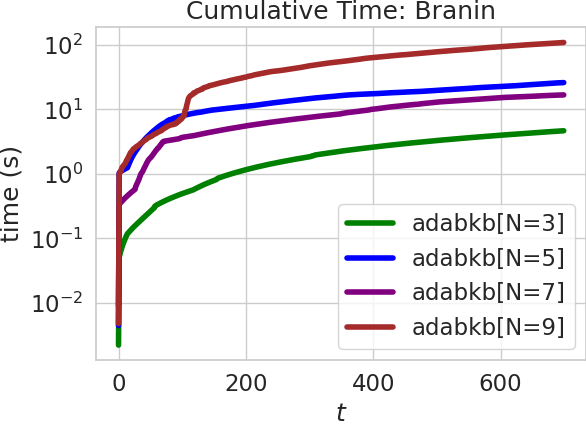}
	\includegraphics[width=0.32\linewidth]{./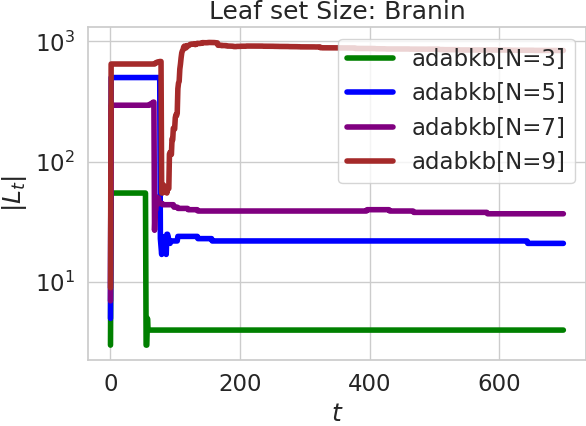}
	\includegraphics[width=0.32\linewidth]{./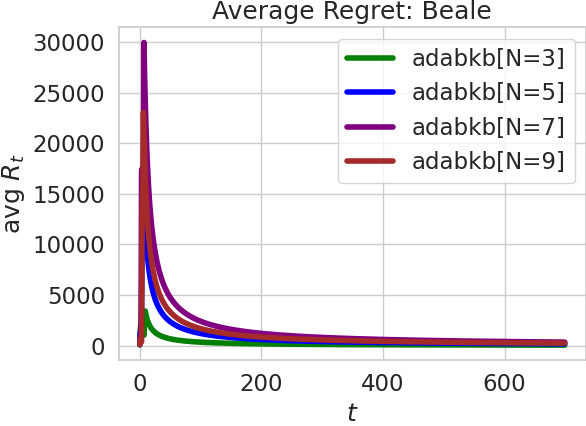}
	\includegraphics[width=0.32\linewidth]{./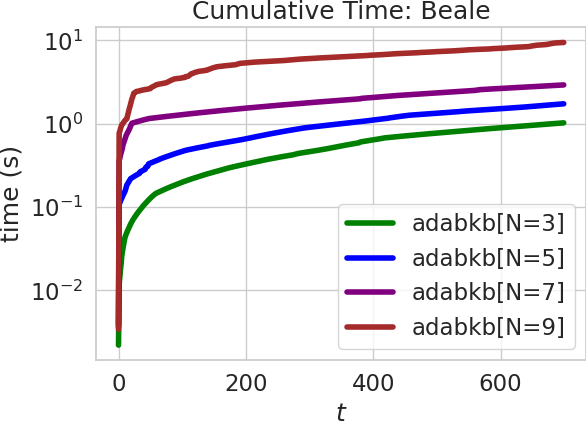}
	\includegraphics[width=0.32\linewidth]{./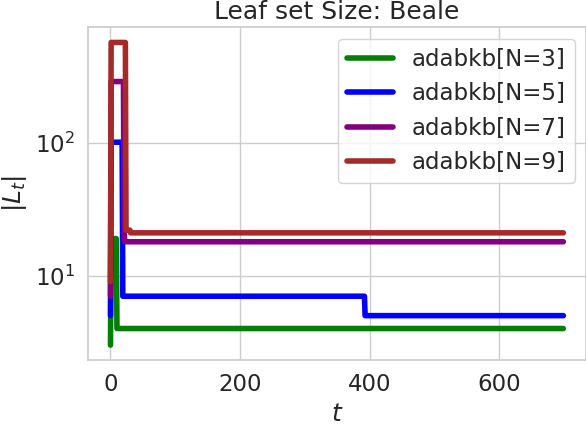}
	\includegraphics[width=0.32\linewidth]{./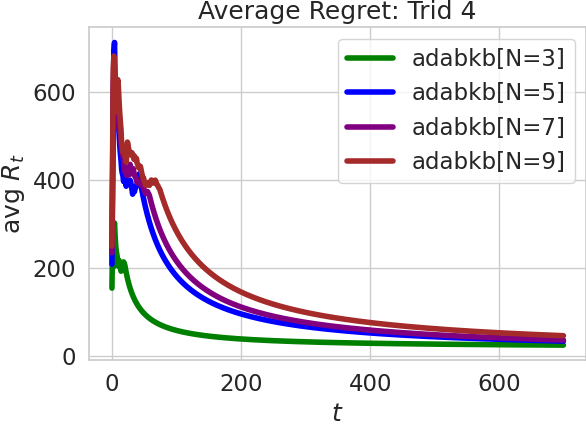}
	\includegraphics[width=0.32\linewidth]{./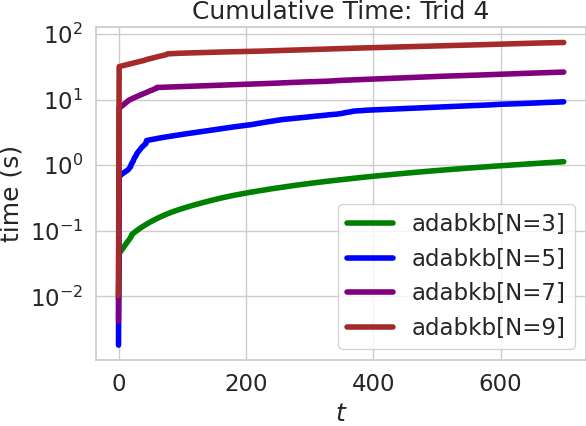}
	\includegraphics[width=0.32\linewidth]{./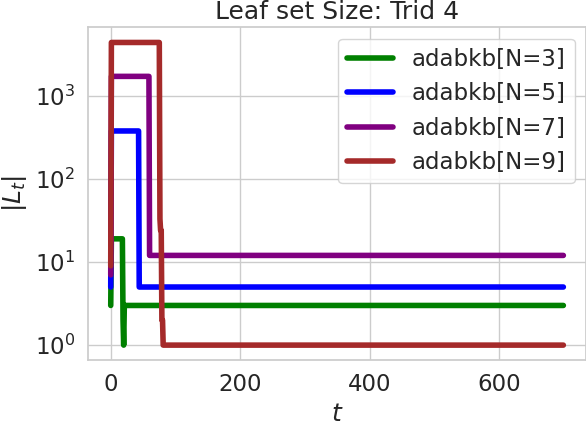}
	\caption{Average regret, cumulative time and leaf set size per iteration of Ada-BKB changing $N$}
	\label{fig:changing_N}  
\end{figure}
Considering a scenario in which we have a depth threshold $h_\text{max}$ and a budget $T$ high enough, we can observe that the number of children per node $N$ doesn't drastically change the best configuration found by the algorithm (see Figure~\ref{fig:changing_N}). 
Obviously, increasing $N$ will require a higher execution time since the cardinality of the leaf set will increase faster. However, in scenarios in which $h_\text{max}$ is low and the search space is a large hypercube, an high number of children per node can be usefull. 
Indeed, an high $N$ allows to produce small partitions faster than small $N$ according to the splitting procedure (see Section~\ref{algo}). 
This let Ada-BKB to provide good performance in regret (in practice) even when the maximum depth threshold $h_\text{max}$ is low.
\begin{figure}[H]
	\centering
	\vspace{.3in}
	\includegraphics[width=0.32\linewidth]{./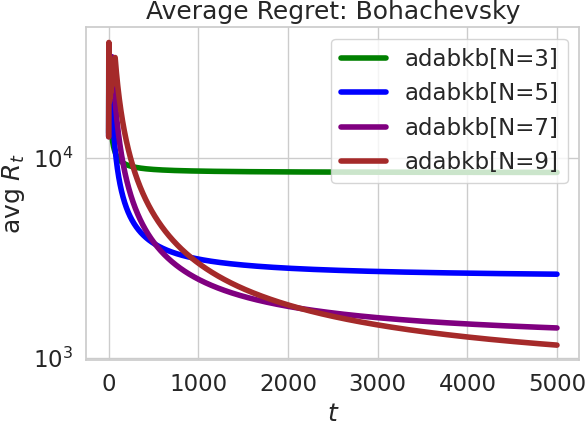}
	\includegraphics[width=0.32\linewidth]{./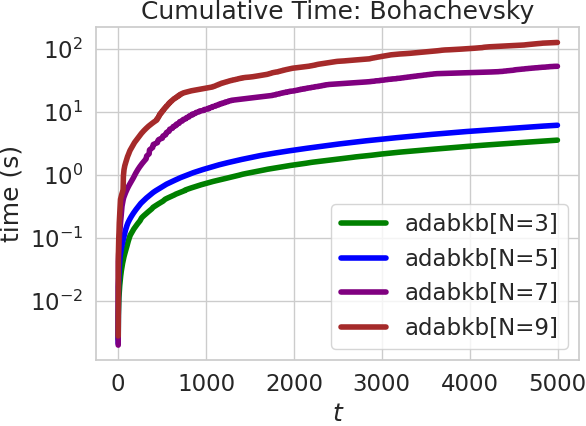}
	\includegraphics[width=0.32\linewidth]{./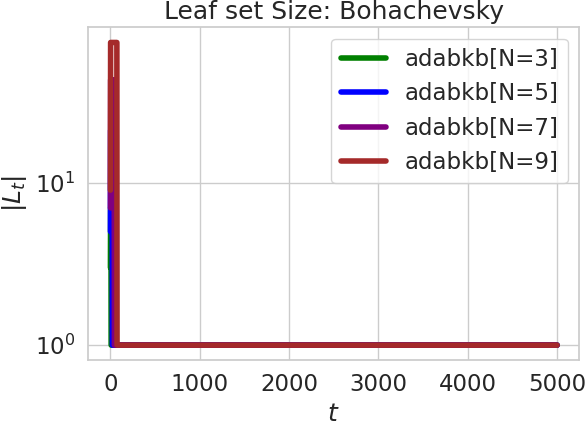}
	\caption{Average regret, cumulative time and leaf set size per iteration of Ada-BKB changing $N$ with $h_\text{max} = 2$}
	\label{fig:changing_N_low_hmax}  
\end{figure}
In Figure~\ref{fig:changing_N_low_hmax}, we optimize Bohachevsky function (see Appendix~\ref{app:experiements} for details on search space) with a maximum depth threshold $h_\text{max} = 2$. In this case, we can observe that increasing $N$, we obtain better results in average regret but it decreases slower as expected (see Theorem~\ref{thm:reg_bounds}). When we performed the experiments, we observed that a good way to select $N$ consists in starting with small values ($2$ or $3$) and increase it if the budget is large enough (which depends from the application), the search space is large and low-dimensional.

\subsection{RandomBKB and Ada-BKB}\label{app:random_bkb_exp}
To show the importance and the strength of adaptive discretizations, we compared Ada-BKB with BKB over a random discretization (called \textit{RandomBKB}). The red vertical dashed line indicates when the early stopping condition is satisfied.
\begin{figure}[H]
	\centering
	\vspace{.3in}
	\includegraphics[width=0.23\linewidth]{./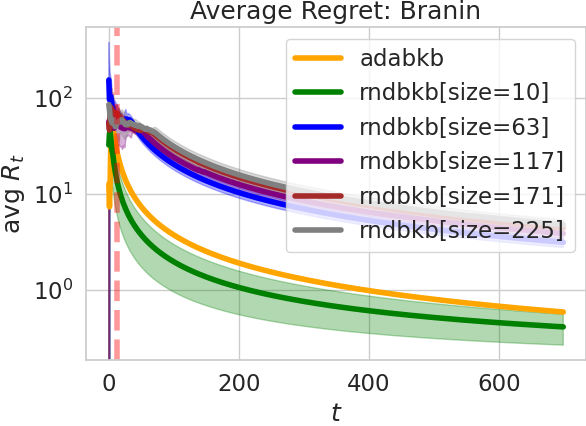}
	\includegraphics[width=0.23\linewidth]{./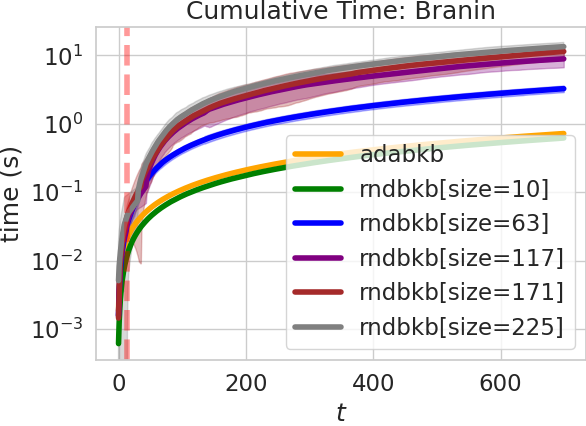}
	\includegraphics[width=0.23\linewidth]{./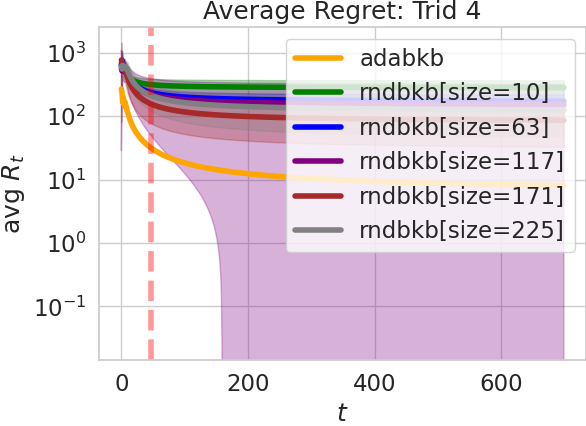}
	\includegraphics[width=0.23\linewidth]{./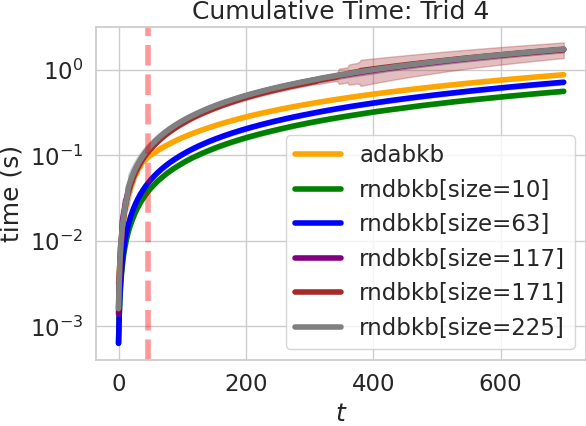}\\
	\includegraphics[width=0.23\linewidth]{./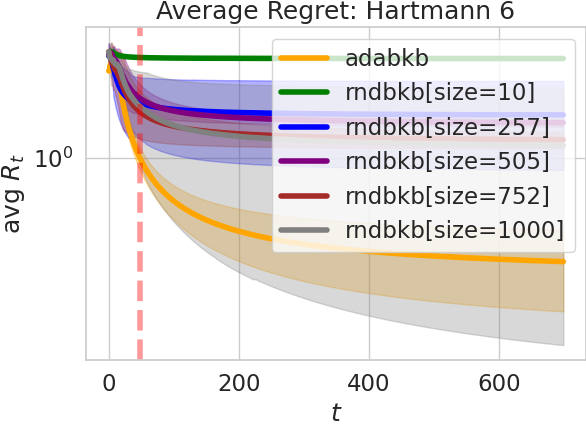}
	\includegraphics[width=0.23\linewidth]{./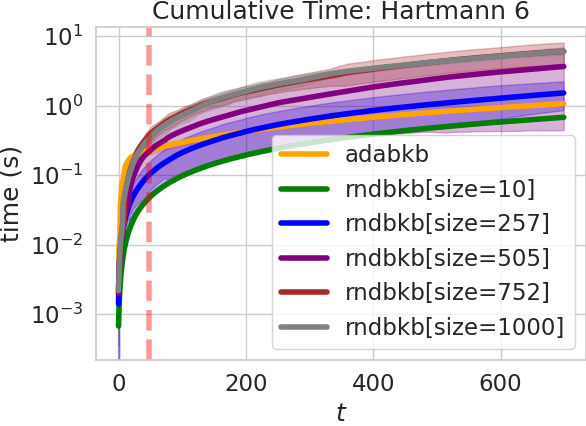}
	\includegraphics[width=0.23\linewidth]{./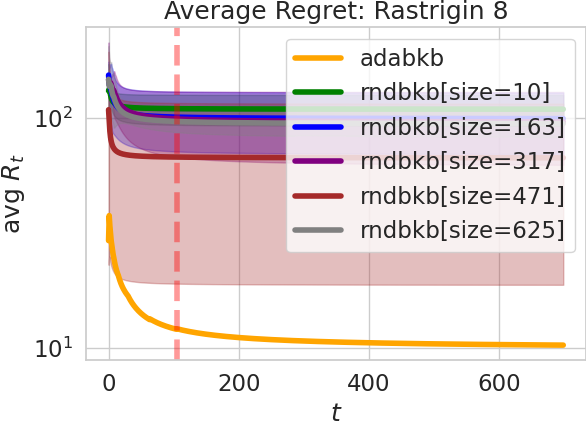}
	\includegraphics[width=0.23\linewidth]{./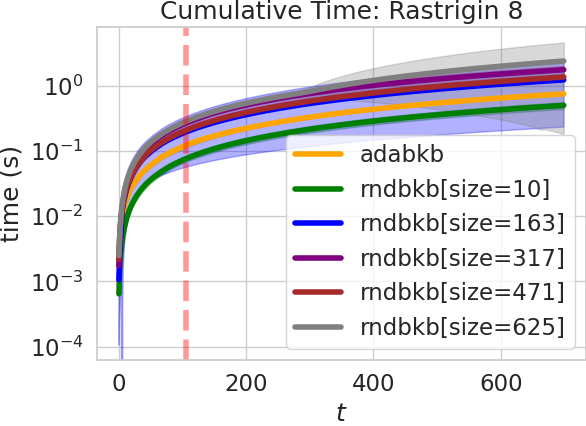}\\
	\vspace{.3in}
	\caption{Average regret and cumulative time of Ada-BKB and RandomBKB with different discretization size}
\end{figure}
As we expected, in low dimensional case (Branin case) it is possible to build a random discretization which contains a sub-optimal configuration. Increasing the dimensions of the search space (as in Rastrigin 8 case), we can observe that even if we increase the size of discretizations used in RandomBKB, 
we still do not obtain results in regret as good as in Ada-BKB. 
This happens because in high dimensional cases the search space is too large and we need to generate many random points to have a good probability of obtaining a search space with suboptimal candidates. However, large discretizations, as we observed in Appendix~\ref{app:other_exp}, will make BKB (and consequently also RandomBKB) very time-expensive due to the computations required to compute the posterior eq.~\eqref{eqn:mu_sig} (indeed, obviously, we can notice that increasing 
the size of the random discretizations, the cumulative time spent to execute RandomBKB increases). 
Moreover, we can notice that Ada-BKB still achieves good performances in time and maintains (in mid and high dimensional search spaces) the best results in regret w.r.t. Random-BKB executions with lower variance (this because RandomBKB does not have a strategy to explore the search space, but it just builds random grids). This shows us that adaptive discretizations are more convenient than random discretizations.
\subsection{Ada-BKB and GP-ThreDS}

Ada-BKB parameters are indicated in Table~\ref{tab:params_gpthreds_adabkb}. The implementation of GP-ThreDS used in these experiments can be downloaded from the official repository: \url{https://github.com/sudeepsalgia/GP_ThreDS}. 
The machine used to performe these experiments is less powerfull than the one described in Appendix~\ref{app:machine}. We decided to use it in order to show that our algorithm can run and provide high performance also in low-powered machines. Details about this machine are reported in Table~\ref{tab:machine}.
We consider the same setting of~\cite{salgia2020computationally} in which the Branin and Rosenbrock functions (defined in the same work) are optimized. As in~\cite{salgia2020computationally}, we will consider a search space $X = [0, 1]^2$ for both functions. The hyperparameters used for GP-ThreDS are indicated in \cite{salgia2020computationally}[Appendix D.1]. The function evaluation budget is set to $T = 700$.
\begin{figure}[H]
	\centering
	\includegraphics[width=0.8\linewidth]{./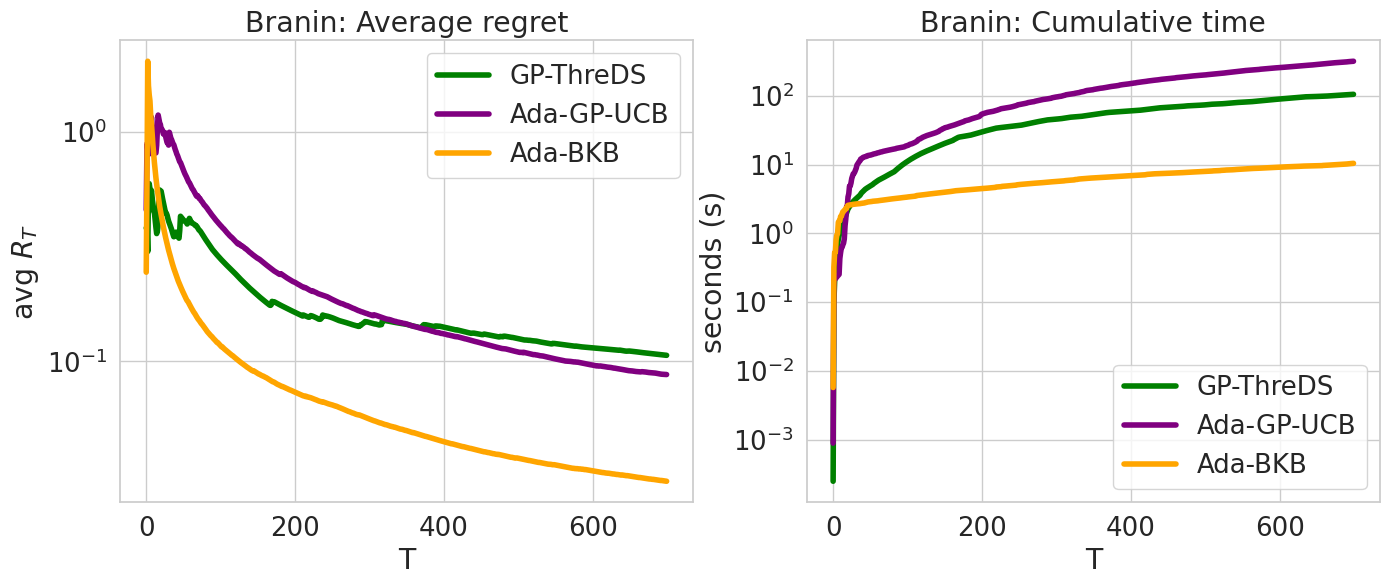}\\
	\includegraphics[width=0.8\linewidth]{./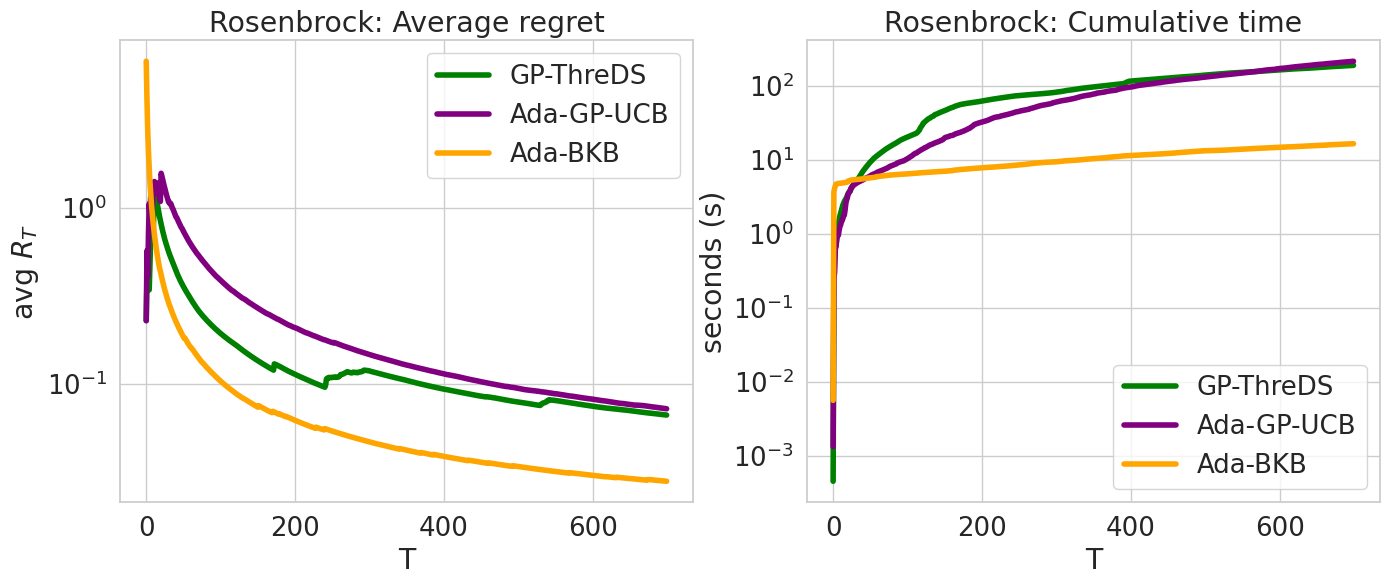}
	\caption{From left to right, average regret and cumulative time of Ada-BKB and GP-ThreDS in optimizing Branin and Rosenbrock functions.}\label{fig:adabkb_gpthreds}
\end{figure}
As we can observe in Figure~\ref{fig:adabkb_gpthreds}, Ada-BKB performs better than GP-ThreDS both in regret and cumulative time. Moreover, we can notice that GP-ThreDS performs better than Ada-GP-UCB in time but performs $\approx 10$ times worse than Ada-BKB (in computational time).
In Table~\ref{tab:time_gpthreds_adabkb}, we report the total time elapsed by three algorithms.
\begin{table}[H]
	\caption{Total time elapsed by algorithms to optimize Branin and Rosenbrock functions} \label{tab:time_gpthreds_adabkb}
	\centering
	\vspace{5px}
	\begin{tabular}{l c c}
		\textbf{ALGORITHM}  &\textbf{BRANIN} &\textbf{ROSENBROCK} \\
		\hline \\
		Ada-GP-UCB & 318.65s & 216.14s\\
		GP-ThreDS  & 105.30s & 190.17s\\
		Ada-BKB    & \textbf{10.43}s & \textbf{16.56}s\\
	\end{tabular}
\end{table}

\begin{table}[H]
	\caption{Parameters of Ada-BKB algorithm to optimize Branin and Rosenbrock functions}  \label{tab:params_gpthreds_adabkb}
	\vspace{5px}
	\centering
	\begin{tabular}{l c c c c c}
		\textbf{FUNCTION}  &\textbf{$\sigma$} &\textbf{$\lambda$} & \textbf{$F$} & \textbf{$N$} & \textbf{$h_\text{max}$}
		\\\hline\\
		Branin & $0.5$ & $0.001$ & $1.0$ & $3$ & $7$\\
		Rosenbrock & $0.5$ & $0.001$ & $1.0$ & $5$ & $5$\\
	\end{tabular}
\end{table}

\begin{table}[H]
	\caption{Machine used to perform these experiments}\label{tab:machine}
	\centering
	\vspace{5px}
	\begin{tabular}{l l}
		\textbf{FEATURE} &\\\hline\\
		OS & Debian 11\\
		CPU & Intel(R) Core(TM) i7-8550U CPU 1.80GHz\\
		RAM & 16 GB
	\end{tabular}
\end{table}

\section{EXPANDED DISCUSSION}\label{app:expanded_discussion}
In this appendix, we discuss the relationship of Algorithm~\ref{alg:1} and the other similar recent algorithms. 
We focus to compare our Ada-BKB with GP-ThreDS~\citep{salgia2020computationally}, 
AdaGP-UCB~\citep{shekhar2018gaussian}, LP-GP-UCB~\citep{shekhar2020multi} and BKB~\citep{calandriello2019gaussian}. 
Despite BKB, our algorithm can work on continuous search spaces without building an offline discretization which can be very expensive, see Appendix~\ref{app:other_exp}(notice that using random discretizations doesn't provide good results in high-dimensional search spaces, see Appendix~\ref{app:random_bkb_exp}). 
We followed the direction indicated in~\citep{shekhar2020multi} to sketch the model confirming and proving
that we get better performance in time. We also noticed that using a partition schema as in~\citep{shekhar2018gaussian}, let us obtain similar or potentially improved regret bounds with a lower computational cost:
\begin{equation*}
	\begin{aligned}
		&(\text{LP-GP-UCB Regret:}) \qquad  \mathcal{O}(\sqrt{T}d_\text{eff}(T))\\
		&(\text{Ada-BKB Regret:}) \qquad \mathcal{O} (\sqrt{T}d_\text{eff}(T)\log T) \qquad \text{or} \qquad \mathcal{O} \Bigg(\sqrt{Td_\text{eff}(T)\log T \frac{N^{h_\text{max}} - 1}{N - 1}} \Bigg)
	\end{aligned}
\end{equation*}
Moreover, introducing a pruning procedure and an early stopping condition, we observed in the experiments (see Appendix~\ref{app:other_exp}) that we can further 
reduce the time-cost in practice. 
\paragraph*{BKB and SVGP.} 
This work open other directions in particular in using different sketching models as SVGP~\citep{pmlr-v5-titsias09a, pmlr-v97-burt19a} which mainly differs from BKB 
for inducing point selection. While in SVGP, inducing points are selected by maximizing the \textit{evidence lower bound} (ELBO)~\citep{pmlr-v38-hensman15}, BKB uses a procedure called 
\textit{resparsification} which provides guarantees on the size of the set containing the inducing points~\cite[Theorem 1]{calandriello2019gaussian}. Moreover, as shown in~\citep{shekhar2018gaussian}, using a Gaussian Process let us avoid to include in the $V_h$ expression (eq.~\eqref{eqn:vh}) the norm of the reward function $f$ which is not 
known a priori. In our experiments, we observed that a valid heuristic consists in setting it as $1$ (see also Appendix~\ref{app:small_f}). 
\paragraph*{Tuning the hyper-parameters of the model}
As shown in~\citep{nystrom_svgp,calandriello2019gaussian}, BKB is equivalent to a 
DTC approximation of a Gaussian Process~\citep{sparse_gp_approx} and thus, in practice, 
we can tune the hyper-parameters of BKB by maximizing the marginal likelihood.

\end{document}